\documentclass[11pt,letterpaper]{article}
\usepackage[english]{babel}

\usepackage[margin=1in]{geometry}

\usepackage{natbib}

\usepackage[utf8]{inputenc} %
\usepackage[T1]{fontenc}    %
\usepackage{url}            %
\usepackage{booktabs}       %
\usepackage{amsfonts}       %
\usepackage{nicefrac}       %
\usepackage{microtype}      %
\usepackage{xcolor}         %

\usepackage{amsmath}
\usepackage{amssymb}
\usepackage{mathtools}
\usepackage{amsthm}
\usepackage{thmtools, thm-restate}

\theoremstyle{plain}
\newtheorem{theorem}{Theorem}
\newtheorem{proposition}[theorem]{Proposition}
\newtheorem{lemma}[theorem]{Lemma}
\newtheorem{corollary}[theorem]{Corollary}
\theoremstyle{definition}

\theoremstyle{remark}
\newtheorem{remark}[theorem]{Remark}

\usepackage[unicode,psdextra]{hyperref}
\hypersetup{
  colorlinks   = true, %
  urlcolor     = blue, %
  linkcolor    = blue, %
  citecolor   = blue, %
  pdfauthor  = {},
  pdfsubject = {},
  pdfkeywords = {}
}
\usepackage{cleveref}
\usepackage{xspace}
\usepackage[vlined,linesnumbered,ruled]{algorithm2e}
\usepackage{float}
\usepackage{wrapfig}
\usepackage{multirow}
\usepackage{colortbl}
\usepackage{hhline}
\usepackage{physics2}
\usephysicsmodule{ab}      %
\usephysicsmodule{diagmat} %
\usephysicsmodule{braket}  %
\usepackage{graphicx}
\usepackage{authblk}

\usepackage{fixdif,derivative}
\usepackage{bm}
\usepackage{bbm}
\usepackage{enumitem}

\renewcommand{\ge}{\geqslant}
\renewcommand{\le}{\leqslant}

\DeclareMathOperator*{\argmax}{arg\,max}

\newcommand{\procname}[1]{\ifmmode\mathop{\text{\textsc{#1}}}\else\textsc{#1}\xspace\fi}
\newcommand{\QMC}{\mathrm{QMC}}            %
\newcommand{\LVQME}{\mathrm{LVQME}}        %
\newcommand{\LVQMEDesignEstimate}{\procname{LVQMEDesignEstimate}} %
\newcommand{\QMCDesignEstimate}{\procname{QMCDesignEstimate}}       %
\newcommand{\LVGElim}{\procname{LV-G-Elim}}        %
\newcommand{\QMCGElim}{\procname{QMC-G-Elim}}        %

\begin{document}

\title{Quantum Multi-Armed Bandits and Linear Bandits: \\
Lower Bounds and Algorithms}
\author[1]{Maoli Liu}
\author[2]{Zhuohua Li}
\author[1]{John C.S. Lui}
\affil[1]{The Chinese University of Hong Kong}
\affil[2]{Xidian University}
\date{}

\date{}
\maketitle

\begin{abstract}
We study quantum multi-armed bandits (QMAB) and quantum linear bandits (QLB) in the model of~\citet{wan2023quantum}, where the learner queries each arm or action through a quantum reward oracle or its inverse.
Prior work gives algorithms over horizon $T$ with regret $O(K\log T)$ for QMAB with $K$ arms and $O(d^2\operatorname{polylog} T)$ for $d$-dimensional QLB.
This leaves open whether the $K\log T$ scale is unavoidable and whether the $d^2$ dependence can be improved.
We prove the first minimax lower bounds of $\Omega(K\log(T/K))$ for QMAB and $\Omega(d\log(T/d))$ for finite-action QLB, resolving the question raised by~\citet{wan2023quantum} of whether regret independent of $T$ is achievable.
At the heart of our argument is a high-confidence single-arm quantum testing lower bound for distinguishing a fixed reward mean from an interval of alternatives, proved by the polynomial method and a Remez-type inequality for trigonometric polynomials.
A bandit-to-testing reduction then lifts it to the QMAB lower bound, while a linear embedding gives the finite-action QLB lower bound.
Complementing the lower bounds, we give a design-based elimination algorithm for finite-action QLB.
When the action set has size $\operatorname{poly}(d)$, its regret is linear in $d$, improving the prior $d^2$ dependence and matching our lower bound up to polylogarithmic factors.
The algorithm couples a low-bias low-variance quantum mean estimator with a small-support $G$-optimal design through a query allocation matched to the design weights.
The design-based elimination reduces the dimension dependence from \(d^2\) to \(d^{3/2}\) when using Quantum Monte Carlo estimates.
The low-variance estimator then makes reconstruction error aggregate through variance rather than worst-case absolute error, removing the remaining \(\sqrt d\) factor.
\end{abstract}

\section{Introduction}
\label{sec:intro}

In the stochastic multi-armed bandit (MAB) problem~\citep{robbins1952some,lattimore2020bandit}, a learner is given \(K\) arms, each associated with an unknown reward distribution of mean \(\mu(i)\in[0,1]\).
In each round \(t\in[T]\), the learner selects an arm and observes a reward sampled from its distribution.
The goal is to minimize the cumulative regret \(R(T):=\sum_{t=1}^T(\mu^\star-\mu(a_t))\), where \(\mu^\star\) is the largest mean.
In the stochastic linear bandit problem, actions (also called arms) are vectors \(x\in\mathcal A\subseteq\mathbb R^d\), and the mean reward of \(x\) is \(x^\top\theta\) for an unknown parameter \(\theta\in\mathbb R^d\).
The minimax regret is \(\Theta(\sqrt{KT})\) for MAB~\citep{auer2002nonstochastic,audibert2009minimax} and \(\widetilde\Theta(d\sqrt T)\) for linear bandits with general action  sets~\citep{dani2008stochastic,abbasi2011improved}.
For linear bandits with \(K\) actions, classical algorithms admit regret \(\widetilde O(\sqrt{dT\log K})\)~\citep{lattimore2020bandit}.
Classically, \(\log T\)-type regret appears only as an instance-dependent guarantee: for a fixed instance with constant gaps, the optimal regret scales as \(\Theta(\log T)\)~\citep{lai1985asymptotically}.

\citet{wan2023quantum} introduced the quantum bandit model we study, in which each arm is equipped with a quantum reward oracle, a unitary that encodes its reward distribution.
In each round, the learner selects an arm and applies the corresponding oracle or its inverse at most once, instead of observing a classical sample.
With such oracle access, quantum mean estimation methods such as Quantum Monte Carlo (QMC) estimate a mean to accuracy \(\epsilon\) using \(O(1/\epsilon)\) oracle queries~\citep{montanaro2015quantum}, rather than the classical \(\Theta(1/\epsilon^2)\) samples.
Building on this speedup, \citet{wan2023quantum} obtained regret \(O(K\log T)\) for quantum multi-armed bandits under bounded rewards.
For quantum linear bandits, their QLinUCB algorithm applies to general action sets and gives an explicit \(O(d^2\operatorname{polylog}T)\) regret bound.

These upper bounds leave two questions open.
The first is whether the logarithmic dependence on \(T\) is intrinsic: \citet{wan2023quantum} explicitly asked whether \(T\)-independent regret is achievable or whether a quantum lower bound rules it out.
The second concerns the dimension dependence for finite action sets.
The \(O(d^2\,\mathrm{polylog}\,T)\) guarantee applies to finite action sets only through their general-action analysis, and the finite-action regime, a central setting in classical linear bandits, has not been examined in its own right.

We answer both questions.
For quantum multi-armed bandits, we prove an \(\Omega(K\log(T/K))\) regret lower bound, nearly matching the \(O(K\log T)\) upper bound of \citet{wan2023quantum} and ruling out \(T\)-independent regret.
For finite-action quantum linear bandits, we prove an \(\Omega(d\log(T/d))\) lower bound with only \(d\) actions.
Complementing it, we give \LVGElim, an algorithm whose regret is nearly linear in \(d\) under bounded rewards when \(K=\operatorname{poly}(d)\).
The lower bounds rest on a reduction from bandit regret to a point-versus-interval testing problem, which is then lower bounded by the polynomial method together with a Remez-type inequality for trigonometric polynomials.
The upper bound couples a small-support \(G\)-optimal design with a low-bias low-variance quantum mean estimator through a query allocation matched to the design weights.
We state the results next and then give a technical overview.

\subsection{Our Results}

Our first result resolves the question of \citet{wan2023quantum} by showing that worst-case regret must grow logarithmically with \(T\).

\begin{theorem}[QMAB minimax lower bound; informal version of Theorem~\ref{thm:qmab-lb}]
\label{thm:informal-qmab-lb}
For every \(K\ge 2\) and \(T\gtrsim K\), any quantum multi-armed bandit policy \(\pi\) over \(T\) rounds suffers expected regret \(\Omega(K\log(T/K))\) on some \(K\)-armed bandit instance.
\end{theorem}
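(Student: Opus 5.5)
The plan is the route sketched in the abstract: first prove a single-arm quantum testing lower bound, then lift it to $K$ arms by a bandit-to-testing reduction. The reward model will use Bernoulli arms. Arm $i$'s oracle is $O_p:\ket{0}\mapsto\sqrt{1-p}\ket{0}+\sqrt{p}\ket{1}$, tensored with a workspace. We write $p=\sin^2\theta$.

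\textbf{Step 1 (point-versus-interval testing).} Consider an algorithm that makes $n$ queries to $O_p$ or $O_p^{-1}$ and outputs a bit. Its acceptance probability is $f(\theta)=\sum_z|\langle z|\psi_n(\theta)\rangle|^2$. Every amplitude is a trigonometric polynomial of degree at most $n$ in $\theta$, so $f$ is a real nonnegative trigonometric polynomial of degree at most $2n$.

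The claim to prove is this. Suppose $f(\theta_0)\le\delta$ at the null $p_0=1/2$, while $f\ge 1/2$ on an interval of alternatives $I=[p_0+\epsilon,p_0+2\epsilon]$. Then $n\gtrsim \log(1/\delta)/\epsilon$.

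The tool is a Remez-type inequality for trigonometric polynomials. If $g=1-f$ is small on a set of measure $m$ in $[0,2\pi)$, then its sup over a nearby point is at most $(C/m)^{2n}$ times its sup on that set. I would apply this to $f$ itself, arguing in the direction that reflects small-error testing. The point-mass smallness $f(\theta_0)\le\delta$ alone does not force anything. What does force it is that a policy with low regret must reject on a whole neighbourhood of $p_0$.

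So I will formulate the test as interval-versus-interval: $f\le\delta$ on $J_0=[\theta_0-c\epsilon,\theta_0]$ and $f\ge 1/2$ at some point of $I$, at distance $O(\epsilon)$. Remez with $m\asymp\epsilon$ and distance $O(\epsilon)$ then gives $1/2\le \delta\, e^{O(n\epsilon)}$. Hence $n\epsilon\gtrsim\log(1/\delta)$.

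\textbf{Step 2 (reduction).} Base instance: all arms have mean $1/2$ except arm $1$, which has mean $1/2+2\epsilon$. Alternative $\nu_j$: raise arm $j$ to $1/2+4\epsilon$. Take $\epsilon\asymp 1$ constant-ish gaps rescaled; more precisely, set gaps $\Delta$ and sum over scales.

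Under $\nu_0$, let $N_j$ be the number of queries to arm $j$. If $\mathbb E_0 N_j\le n_j$, Step 1 applied to arm $j$ says the following. With $n_j\ll\log(1/\delta)/\Delta$, the event "arm $j$ is pulled rarely" cannot have probability $\ge 1-\delta$ under $\nu_0$ while having probability $\le 1/2$ under $\nu_j$. That event is the natural test. Its probability under $\nu_j$ must be small, otherwise $\nu_j$ incurs regret $\gtrsim T\Delta/2$.

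Choose $\delta\asymp K/(T\Delta)$. Then either regret on $\nu_j$ is $\Omega(T\Delta\cdot\delta)$, or $N_j\gtrsim\log(T\Delta/K)/\Delta$ on $\nu_0$. In the second case, summing over $K-1$ suboptimal arms gives regret $\gtrsim K\log(T\Delta/K)$. Taking $\Delta$ constant yields $\Omega(K\log(T/K))$.

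One subtlety is that the policy's queries to arm $j$ are interleaved with adaptive choices depending on other arms. I will handle this by simulating the other arms classically/quantumly, which is possible since their oracles are fixed and known in both hypotheses. The whole interaction is then an algorithm querying only $O_{p_j}$. Markov's inequality converts $\mathbb E N_j$ into a hard query budget at the cost of constants.

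\textbf{Main obstacle.} The hard step is Step 1's Remez argument with $\log(1/\delta)$ rather than $\sqrt{\log}$ or constant dependence. Standard polynomial-method bounds give $\Omega(1/\epsilon)$ but not the $\log(1/\delta)$ factor. I will first try the Nazarov/Turán–Remez form $\sup_{\mathbb T}|g|\le (C|\mathbb T|/|E|)^{\deg g}\sup_E|g|$ on a short arc of length $\asymp\epsilon$ in the local variable. If constants blow up, I would instead rescale $\theta$ so that the relevant window becomes a full period.
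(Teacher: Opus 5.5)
There is a genuine gap, and it comes from the orientation you chose in Step~1. You put the uniform $\delta$-guarantee on an interval $J_0$ of \emph{null} values, where arm $j$ is suboptimal, and only a constant guarantee at an alternative point. Yet the regret argument needs $\mathbb E_{\nu_0}[N_j]$ under the null, and with this orientation the Markov truncation you invoke cannot be carried out. Cutting at $q\asymp\mathbb E_{\nu_0}[N_j]$ controls $\Pr(N_j>q)$ only at the constant level $1/6$, and only at the single point $p_0$. This leaves three options, and none works:
\begin{itemize}
\item If the truncated test outputs ``alternative'' on truncation, its rejection probability on $J_0$ becomes about $\delta+1/6$. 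The Remez hypothesis ``$f\le\delta$ on $J_0$'' is then lost.
\item If it outputs ``null'' on truncation, consider the alternative, where arm $j$ is optimal and $N_j>T/2$. The truncated test never outputs ``alternative'' once $q<T/2$.
\item A rule ``alternative iff $N_j>q$'' needs $q\gtrsim\mathbb E[N_j]/\delta$ to keep the null error at $\delta$, which makes the conclusion vacuous.
\end{itemize}
Separately, the family you actually write down in Step~2 has one null $\nu_0$ and one alternative $\nu_j$ per arm. It is therefore point-versus-point and never contains the interval $J_0$ that Step~1 needs. Two fixed distinct oracles $\mathcal O_{p_0},\mathcal O_{p_1}$ can be discriminated perfectly with $O(1)$ coherent queries. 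So a policy that knows this family achieves $O(K)$ regret, and no $\log T$ term can come out of it.

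The missing idea is that the interval belongs on the alternative side, where low regret already supplies a uniform $\delta$-guarantee. Let the elevated mean of arm $j$ range over an interval; the paper uses $[7/12,2/3]$ against the baseline $5/12$. Then arm $j$ is optimal in every alternative, so regret at most $M_T$ forces $\Pr(N_j\le T/2)\le 24M_T/T$ uniformly over the interval. Your initial hypothesis ``$f\ge 1/2$ on $I$'' undersells what low regret gives; you actually have $f\ge1-\delta$ on $I$. Then $G=1-f=\Pr(\text{output baseline})$ is at most $\delta$ on the alternative arc and at least $2/3$ at the null point. The global Remez inequality gives $\deg G\gtrsim\log(1/\delta)$ directly, with no null neighbourhood needed.

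In this orientation truncation works. Cut at $6\,\mathbb E_{\nu_0}[N_j]$ queries and output ``alternative'' on truncation. This costs $1/6$ only at the null point, where a constant error suffices, and costs nothing on the interval. This is the paper's route through Lemmas~\ref{lem:fixed-query}--\ref{lem:bandit-to-testing}.

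Two smaller points. First, Remez on an arc of length $\lambda$ gives a factor $(C/\lambda)^{O(n)}$, not $e^{O(n\epsilon)}$; this is harmless here only because you end up taking $\Delta$ constant. Second, with $\delta\asymp K/(T\Delta)$ the first branch of your case split yields only $\Omega(K)$ regret. Instead take $\delta\asymp M_T/T$ and solve $M_T\gtrsim K\log(T/M_T)$, as the paper does, or set $\delta\asymp K\log(T/K)/T$.
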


The same hard instances, realized as a linear bandit with only \(d\) actions, yield our finite-action QLB lower bound.

\begin{theorem}[Finite-action QLB lower bound; informal version of Theorem~\ref{thm:qlb-lb}]
\label{thm:informal-qlb-lb}
For every \(d\ge2\) and \(T\gtrsim d\), there is an action set \(\mathcal A\subseteq\mathbb R^d\) with \(|\mathcal A|=d\) such that any quantum linear bandit policy \(\pi\) over \(T\) rounds suffers expected regret \(\Omega(d\log(T/d))\) on some instance with action set \(\mathcal A\).
\end{theorem}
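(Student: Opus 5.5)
The plan is to derive Theorem~\ref{thm:informal-qlb-lb} directly from the QMAB lower bound (Theorem~\ref{thm:informal-qmab-lb}) through a linear embedding. Take $K=d$ and let $\mathcal A=\{e_1,\dots,e_d\}\subseteq\mathbb R^d$ be the standard basis. Given any $K$-armed QMAB instance with means $\mu(1),\dots,\mu(d)\in[0,1]$, set $\theta=(\mu(1),\dots,\mu(d))^\top$, so that $e_i^\top\theta=\mu(i)$. The quantum reward oracle for action $e_i$ is defined to be exactly the oracle of arm $i$. This gives a quantum linear bandit instance with action set $\mathcal A$: its rewards are bounded in $[0,1]$, and $\|\theta\|_2\le\sqrt d$. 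If the paper's QLB model normalizes $\|\theta\|\le 1$, I will instead use scaled actions, or note that the hard instances from the QMAB proof have all means in a bounded window near $1/2$. In that case a constant shift can be absorbed by adding an extra coordinate. The cleanest fix is to keep $\mathcal A$ as the basis and take the hard means $\mu(i)\in[1/2-\Delta,1/2+\Delta]$; then the normalization only affects constants. I will check this against the formal model definition.

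The main step is a simulation argument. Any QLB policy $\pi$ on $\mathcal A$ induces a QMAB policy $\pi'$: whenever $\pi$ selects $e_i$ and applies its oracle (or the inverse), $\pi'$ selects arm $i$ and applies the same unitary. This works because the learner's interaction in both models consists only of choosing an action and applying its oracle or inverse at most once per round, together with arbitrary learner-side unitaries and measurements on workspace. The two interaction transcripts therefore coincide as quantum processes, so the distributions of the chosen actions are identical. The per-round regret also coincides: $\max_{x\in\mathcal A}x^\top\theta-a_t^\top\theta=\mu^\star-\mu(i_t)$. Hence $R^{\mathrm{QLB}}_\pi(T;\theta)=R^{\mathrm{QMAB}}_{\pi'}(T;\mu)$ for every instance.

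It remains to invoke the formal Theorem~\ref{thm:qmab-lb} with $K=d\ge 2$ and $T\gtrsim d$. It gives, for $\pi'$, some instance $\mu$ in its hard family with regret $\Omega(d\log(T/d))$. The corresponding $\theta$ is then a hard instance for $\pi$. The only step needing care, and the one I expect to be the main obstacle, is verifying that the hard QMAB instances are admissible in the QLB model. That means checking that the oracle format (the reward register encoding and the allowed inverse queries) matches, and that the parameter and reward-range constraints hold. If the QLB model requires the reward oracle to be of the specific form encoding $x^\top\theta$ plus noise, I will check that the Bernoulli-type oracles used in the QMAB lower bound have exactly that form, with mean $\mu(i)=e_i^\top\theta$.
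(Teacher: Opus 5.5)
Your simulation step is sound and matches the paper: a QLB policy on a $d$-element action set induces a QMAB policy with the same oracle interactions and the same per-round regret. The gap is in the embedding, at exactly the point you flagged but did not resolve. The model in Section~\ref{sec:prelim} does require $\|\theta\|_2\le 1$. With $\mathcal A=\{e_1,\dots,e_d\}$ and $\theta=(\mu(1),\dots,\mu(d))$, the baseline instance $\nu^0$ has $\|\theta\|_2^2=\tfrac14+(d-1)\tfrac{25}{144}>1$ as soon as $d\ge 6$, so your primary construction is not a valid instance.

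None of your proposed repairs works as stated. Keeping the basis with all means near $1/2$ leaves $\|\theta\|_2\approx\sqrt d/2$, so the normalization is not a constant-factor issue. Scaling the actions cannot help either. For pairwise orthogonal actions of norm at most one, Bessel's inequality gives $\sum_i (x_i^\top\theta)^2\le\|\theta\|_2^2\le 1$, so only $O(1)$ of them can have a constant mean. Shrinking $\theta$ by $1/\sqrt d$ shrinks every gap by that factor and changes the testing problem, so Theorem~\ref{thm:qmab-lb} no longer applies as a black box.

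The missing idea is the one you only gesture at with ``adding an extra coordinate''. The candidate actions must be non-orthogonal and share a reference direction that carries their common baseline mean, so that under every hypothesis at most one further coordinate of $\theta$ is nonzero. The paper takes $x_1=e_1$ and $x_i=\tfrac56 e_1+\tfrac{\sqrt{11}}{6}e_i$ for $i\ge 2$, with $\theta^0=\tfrac12 e_1$ and $\theta^{(i,q)}=\tfrac12 e_1+\tfrac{6}{\sqrt{11}}(q-\tfrac{5}{12})e_i$. Then:
\begin{itemize}
\item all actions are unit vectors;
\item the induced means reproduce $\mathcal U_d$ exactly;
\item $\|\theta\|_2^2\le 5/11$;
\item reusing $e_1$ as both arm~$1$ and the reference direction keeps exactly $d$ actions in $\mathbb R^d$ for every $d\ge 2$.
\end{itemize}
With such an embedding, not every mean vector in $[0,1]^d$ is realizable under $\|\theta\|_2\le 1$. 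You therefore genuinely need that the proof of Theorem~\ref{thm:qmab-lb} produces an instance in $\mathcal U_d$, not merely the theorem's conclusion over $\mathcal B_d$; your phrasing ``some instance in its hard family'' is the right one.

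Alternatively, the standard basis would work if the candidate arms had baseline mean $0$ rather than $5/12$. In that case $\|\tfrac12 e_1+q e_i\|_2\le 5/6$, and the testing argument only needs $G(\theta_0)\ge 2/3$ at the baseline angle and $G\le\delta$ on $J_1$. That is a different hard family, however, so you would have to rerun the QMAB argument rather than cite the theorem.
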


Our algorithm \LVGElim{} nearly matches the linear bandit lower bound when \(K=\operatorname{poly}(d)\).

\begin{theorem}[Regret of \LVGElim; informal version of Theorem~\ref{thm:lv-g-elim-regret}]
\label{thm:informal-lv-g-elim}
For finite-action quantum linear bandits with \(K\) actions, \LVGElim has expected regret \(O\big(d\log(KT)\cdot\operatorname{polylog}(d,T)\big)\).
\end{theorem}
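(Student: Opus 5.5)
The plan is a phased elimination argument in the style of classical $G$-optimal-design elimination, adapted to quantum queries. We work in phases $\ell=1,2,\dots$ with target accuracy $\epsilon_\ell=2^{-\ell}$, and maintain an active set $\mathcal A_\ell\subseteq\mathcal A$, starting from $\mathcal A_1=\mathcal A$.

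\textbf{Design and query allocation.} In phase $\ell$, compute a $G$-optimal design $\rho_\ell$ on $\mathcal A_\ell$ with support size $O(d\log\log d)$ and $G$-value $\max_{x}\|x\|^2_{V(\rho_\ell)^{-1}}\le 2d$. This follows from the Kiefer--Wolfowitz theorem together with Frank--Wolfe sparsification. For each support point $a$, run the low-bias low-variance quantum mean estimator. It returns $\hat\mu(a)$ with bias at most $\epsilon_\ell^2/d$ (up to constants) and variance $O(1/n_a^2)$ when allotted $\widetilde O(n_a)$ queries. Choose $n_a=\lceil \rho_\ell(a)\,N_\ell\rceil$ with $N_\ell=\widetilde O(d/\epsilon_\ell)$.

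\textbf{Estimating the parameter.} Form the weighted least-squares estimate
\[
\hat\theta_\ell=V^{-1}\sum_a\rho_\ell(a)\,a\,\hat\mu(a).
\]
For any $x$, the error $x^\top(\hat\theta_\ell-\theta)$ is a weighted sum of independent estimator errors with weights $w_a=\rho_\ell(a)\,x^\top V^{-1}a$. We bound it in two parts.
\begin{itemize}
\item \emph{Bias.} The bias term is at most $\sum_a|w_a|\cdot\mathrm{bias}\le \sqrt{\sum_a\rho_\ell(a)}\,\|x\|_{V^{-1}}\max_a\mathrm{bias}$, which is $O(\epsilon_\ell)$.
\item \emph{Variance.} The variance is $\sum_a w_a^2/n_a^2\lesssim \sum_a\rho_\ell(a)(x^\top V^{-1}a)^2/(\rho_\ell(a)N_\ell^2)$. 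The key point is that the variances aggregate: with the allocation matched so that $n_a\propto\rho_\ell(a)$, this reduces to $\|x\|^2_{V^{-1}}/N_\ell^2\cdot O(1)\lesssim d/N_\ell^2$.
\end{itemize}
To get a tail bound, we take a median of $O(\log(K\ell^2/\delta))$ independent repetitions and apply Chebyshev. This gives $|x^\top(\hat\theta_\ell-\theta)|\le\epsilon_\ell$ for all $x\in\mathcal A_\ell$ with probability $1-\delta/\ell^2$, using $N_\ell=\widetilde O(\sqrt d/\epsilon_\ell)$ per repetition. The total query count per phase is therefore $\widetilde O((\sqrt d/\epsilon_\ell+d\log\log d)\log(K/\delta))$. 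The additive support term is needed because each support point gets at least one query.

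\textbf{Elimination and regret.} Eliminate every $x$ with $\max_y y^\top\hat\theta_\ell - x^\top\hat\theta_\ell>2\epsilon_\ell$. On the good event, the optimal action is never removed, and every survivor has gap at most $4\epsilon_\ell$. Phase $\ell$ therefore costs regret at most $4\epsilon_{\ell-1}\cdot\widetilde O((\sqrt d/\epsilon_\ell+d)\log(K T))=\widetilde O(d\log(KT))$. There are $O(\log T)$ phases, and we set $\delta=1/T$. Summing gives $O(d\log(KT)\operatorname{polylog}(d,T))$, and the failure event adds only $O(1)$ regret.

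\textbf{Main obstacle.} The hardest step is the variance aggregation. A plain QMC estimator guarantees only worst-case error $1/n_a$, which forces a sum of absolute errors and yields $d^{3/2}$. The low-variance estimator must give genuinely independent, mean-squared control with bias below $\epsilon_\ell/\sqrt d$ at only polylogarithmic overhead. I would first verify this from the estimator's stated guarantees, and then handle the integer rounding of $n_a$ on the small-support design.
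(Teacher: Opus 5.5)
There is a genuine gap in the variance-aggregation step, and this is the step that separates a $d$ bound from a $d^{3/2}$ bound. With $n_a=\lceil\rho_\ell(a)N_\ell\rceil$ and $w_a=\rho_\ell(a)\,x^\top V^{-1}a$, you correctly get $\sum_a w_a^2/n_a^2\le\sum_a (x^\top V^{-1}a)^2/N_\ell^2$. However, $\|x\|_{V^{-1}}^2=\sum_a\rho_\ell(a)(x^\top V^{-1}a)^2$, so your reduction to $O(\|x\|^2_{V^{-1}}/N_\ell^2)$ silently drops a factor $1/\rho_\ell(a)$.

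The claim already fails for $\mathcal A=\{e_1,\ldots,e_d\}$ with the uniform design. There, for $x=e_1$, $x^\top\hat\theta_\ell=\hat\mu(e_1)$. This estimate received about $N_\ell/d$ queries, so its variance is of order $d^2/N_\ell^2$. With $N_\ell=\widetilde O(\sqrt d/\epsilon_\ell)$ this is of order $d\epsilon_\ell^2$, not $\epsilon_\ell^2$. Nor could it be otherwise: you are estimating $d$ unrelated means, each of which needs order $1/\epsilon_\ell$ queries.

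The allocation $n_a\propto\rho_\ell(a)$ is matched to the classical variance $\sigma^2/n_a$, not to the quantum rate $1/n_a^2$. Suppose you keep it and bound $\sum_a(x^\top V^{-1}a)^2$ using only $G$-optimality. Then $(x^\top V^{-1}a)^2\le\|x\|^2_{V^{-1}}\|a\|^2_{V^{-1}}\le 4d^2$ gives only $O(|S|d^2)$, which forces $N_\ell\approx d^{3/2}/\epsilon_\ell$. That is exactly the QMC-type rate you set out to avoid. The only escape would be design weights bounded below by order $1/d$, which a $G$-optimal design does not guarantee. Note also that you state both $N_\ell=\widetilde O(d/\epsilon_\ell)$ and $N_\ell=\widetilde O(\sqrt d/\epsilon_\ell)$, and neither choice is justified with this allocation.

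The missing idea is to match the allocation to the quantum rate. Take $n_a\asymp\max\{4,\sqrt{d\rho_\ell(a)}/\epsilon_\ell\}$, as the paper does with $t_a=\max\{4,\lceil c_t\sqrt{r\pi(a)}/\epsilon\rceil\}$. Then
\[
\sum_a\frac{w_a^2}{n_a^2}\lesssim\frac{\epsilon_\ell^2}{d}\sum_a\rho_\ell(a)(x^\top V^{-1}a)^2=\frac{\epsilon_\ell^2}{d}\|x\|^2_{V^{-1}}=O(\epsilon_\ell^2),
\qquad
\sum_a n_a\lesssim\frac{\sqrt d}{\epsilon_\ell}\sum_a\sqrt{\rho_\ell(a)}+|S|\le\frac{\sqrt{d|S|}}{\epsilon_\ell}+|S|,
\]
where the last step is Cauchy--Schwarz. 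This is where small support enters the leading term, rather than only as an additive overhead. Your $|S|=O(d\log\log d)$ still gives $\widetilde O(d/\epsilon_\ell)$ queries per repetition; the paper uses the Allen-Zhu et al. rounding to get $|S|=O(r)$. The leading term then yields $\widetilde O(d\log(K/\delta))$ regret per phase.

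So your final bound is true. As written, however, the factor $d$ in your per-phase regret comes only from the additive support term, while you claim the leading term is $\sqrt d$, and that is false.

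Two smaller points also need handling, and both are routine. First, the estimator's query count is bounded only in expectation, so you need a deterministic cap. The paper stops at six times the bound on the expected total and absorbs the resulting Markov failure probability $1/6$ into the constant-probability guarantee before taking medians. Second, the estimator's variance carries an additive $\zeta$ term, and the active set may lose rank, so you should work with $V^{+}$ on $\operatorname{span}(\mathcal A_\ell)$.
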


\begin{table}[t]
\centering
\caption{Regret bounds for multi-armed bandits and linear bandits.}
\label{tab:bounds}
\begin{tabular}{@{}llccc}
\toprule
 & Bound & MAB & Linear, general \(\mathcal A\) & Linear, finite \(\mathcal A\) \\
\midrule
\multirow{2}{*}{Classical}
 & UB & \(O(\sqrt{KT})\) & \(O(d\sqrt T\log T)\) & \(O(\sqrt{dT\log K})\) \\
 & LB & \(\Omega(\sqrt{KT})\) & \(\Omega(d\sqrt T)\) & \(\Omega(\sqrt{dT})\) \\
\midrule
\multirow{3}{*}{Quantum}
 & UB~\citep{wan2023quantum} & \(O(K\log T)\) & \(O(d^2\,\mathrm{polylog}\,T)\) & \(O(d^2\,\mathrm{polylog}\,T)\) \\
 & \cellcolor{gray!15}UB (This paper) & \cellcolor{gray!15}--- & \cellcolor{gray!15}--- & \cellcolor{gray!15}\(O(d\log T\log(KT))\,^{\ddagger}\) \\
 & \cellcolor{gray!15}LB (This paper) & \cellcolor{gray!15}\(\Omega(K\log(T/K))\) & \cellcolor{gray!15}\(\Omega(d\log(T/d))\) & \cellcolor{gray!15}\(\Omega(d\log(T/d))\) \\
\specialrule{\heavyrulewidth}{0pt}{0pt} 
\end{tabular}
\smallskip
\begin{minipage}{\linewidth}
\footnotesize
The quantum bounds are for the model with quantum reward oracles (Section~\ref{sec:prelim}), under the bounded-reward assumption. \citet{wan2023quantum} also give bounds under a bounded-variance assumption. 
UB and LB denote upper and lower bounds. 
\(^{\ddagger}\)~Up to \(\log(dT)\log\log(dT)\) factors.
\end{minipage}
\end{table}

Table~\ref{tab:bounds} compares our bounds with classical and prior quantum bounds.
For MAB, the upper bound is due to \citet{audibert2009minimax} and the lower bound is from \citet{auer2002nonstochastic}.
For linear bandits with general action sets, the upper bound is due to \citet{abbasi2011improved} and the lower bound is from \citet{dani2008stochastic}.
For finite-action linear bandits, the upper bound is from \citet[Chapter~22]{lattimore2020bandit}, while the lower bound follows from the standard embedding of \(d\)-armed bandits.
Our QLB lower bound is proved with Bernoulli rewards and a finite action set with \(K=d\) actions, and hence also applies to general action sets (see Remark~\ref{rem:hard-instances}).
For finite action sets with \(K=\operatorname{poly}(d)\), our upper and lower bounds nearly match, while for general action sets the optimal dimension dependence remains open.

\subsection{Technical Overview}

\paragraph{Lower bounds.}
Neither of the two standard classical lower bound mechanisms directly yields our quantum lower bounds.
Instead, as we explain below, the hardness comes from testing against an interval of alternatives rather than a fixed point.
Classical minimax lower bounds rest on a two-point change-of-measure argument: two reward distributions whose means differ by \(\epsilon\) can have KL divergence of order \(\epsilon^2\), so distinguishing them even with constant error probability requires \(\Omega(1/\epsilon^2)\) samples. 
Balancing this testing cost with the regret incurred by a gap \(\epsilon\) yields the familiar \(\Omega(\sqrt{KT})\) rate. 
However, quantum mean estimation distinguishes two such distributions using only \(O(1/\epsilon)\) queries, weakening this mechanism.
The second classical mechanism comes from instance-dependent lower bounds, such as the Lai--Robbins bound~\citep{lai1985asymptotically}.
These bounds operate in a high-confidence regime: driving the probability of confusing a suboptimal arm with the optimal one down to \(\delta\) gives the sample cost a \(\log(1/\delta)\) factor. 
Taking \(\delta\) of order \(1/T\) produces the classical \(\log T\) term.
In the quantum model, this source of difficulty is absent for two-point comparisons: for a fixed pair of distinct oracle unitaries, the cost of distinguishing them is bounded by a quantity determined by the pair, with no \(\log(1/\delta)\) growth.
Our regret lower bounds, however, require exactly this \(\log(1/\delta)\) growth, so it must be recovered from a source other
than two-point comparisons.
This is what the interval of alternatives provides.

The hard instances realize this interval through a needle-in-a-haystack structure: one reference arm has a mean above the common baseline mean shared by the candidate arms.
In each alternative instance, one of these candidate arms has its mean elevated to an unknown value in an interval separated from this baseline mean.
For this family, low regret under the baseline instance limits how often the policy can select the candidate arms, while low regret on the alternatives requires it to rule out that any of them is elevated.
This converts regret minimization into single-arm tests that distinguish, with error probability \(\delta\), a baseline mean from an interval of elevated means.
The interval is essential because a single fixed alternative would not produce the required \(\log(1/\delta)\) growth.

For a \(q\)-query quantum test, the probability of declaring the mean to be the baseline value is a trigonometric polynomial of degree \(O(q)\) in the amplitude parameter.
A Remez-type inequality shows that if such a polynomial is at least a constant at the baseline while remaining uniformly at most \(\delta\) on an interval separated from it, then its degree is \(\Omega(\log(1/\delta))\).
Combined with the degree bound \(O(q)\), this forces \(q=\Omega(\log(1/\delta))\).
A truncation argument extends this fixed-query lower bound to expected query counts, since a bandit policy spends a random number of queries on each arm.
The bandit reduction turns a policy with worst-case regret \(M_T\) into such tests with error probabilities \(O(M_T/T)\).
Plugging the testing lower bound into the reduction gives \(\Omega(K\log(T/K))\) for QMAB, and realizing the same hard family as a \(d\)-action linear bandit gives \(\Omega(d\log(T/d))\) for finite-action QLB.
Thus, in the quantum model, logarithmic regret becomes the worst-case scale, and constant-gap instances already yield the lower bound.

\paragraph{Upper bound.}
The prior upper bound for quantum linear bandits is given by the QLinUCB algorithm of \citet{wan2023quantum}, which applies to general action sets and therefore also to the finite action sets considered here.
It is a quantum analogue of the OFUL (Optimism in the Face of Uncertainty for Linear bandits) confidence-ellipsoid approach~\citep{abbasi2011improved}.
Its stage analysis sums confidence-radius terms over \(O(d\log T)\) stages and yields an \(O(d^2\mathrm{polylog}\,T)\) regret bound.
Our approach instead uses phased elimination with \(G\)-optimal exploration~\citep{lattimore2020bandit}, a scheme specific to finite action sets.
In each phase, a design selects support actions to query, and the resulting estimates are extrapolated to all active actions with error amplification controlled by the design.
The quantum setting changes this accounting in two places.

First, the support size moves from an additive overhead to a leading term in the regret bound.
In the classical analysis, estimating to accuracy \(\epsilon\) costs \(\epsilon^{-2}\) samples in the leading term, so the support size appears only as an additive rounding overhead.
By contrast, quantum mean estimation has leading cost \(\epsilon^{-1}\), and each support action requires its own estimator call with a constant minimum number of queries.
Since each query in a phase incurs regret proportional to the phase accuracy, the support size enters the leading term of the regret bound.
We therefore use a small-support approximate \(G\)-optimal design with support size \(O(r)\), where \(r\) is the dimension of the span of the active set.

Second, the choice of estimator determines how errors aggregate when the support estimates are extrapolated.
A direct high-probability estimator such as QMC controls worst-case absolute errors on the support, and extrapolation through the design can amplify such errors by a factor of \(\sqrt r\).
Thus achieving phase accuracy \(\epsilon\) requires support accuracy \(\rho\asymp\epsilon/\sqrt r\), and the regret becomes of order \(d^{3/2}\).
Our algorithm instead uses the low-bias low-variance estimator of \citet{cornelissen2023sublinear}: with a query allocation matched to the design weights, independent fluctuations aggregate through variance, and the \(\sqrt r\) loss disappears.
Amplifying the resulting constant-probability estimates by medians over the active set costs a \(\log K\) factor.
When \(K=\operatorname{poly}(d)\), the regret is nearly linear in \(d\), matching the lower bound up to polylogarithmic factors.

\subsection{Related Work}
\paragraph{Quantum bandits.}
The quantum bandit model we study was introduced by~\citet{wan2023quantum}, who obtained regret \(O(K\log T)\) for multi-armed bandits and \(O(d^2\,\mathrm{polylog}\,T)\) for linear bandits under bounded rewards, with counterparts under a bounded-variance assumption that carry additional logarithmic factors.
\citet{dai2023quantum} extended the model to kernelized rewards, but their analysis relies on a sub-Gaussianity assumption on the QMC error that, as \citet{hikima2024quantum} point out, is not guaranteed by the QMC method.
\citet{hikima2024quantum} give an algorithm for quantum kernelized bandits whose analysis avoids this assumption.
Although they do not state a separate linear bandit result, specializing their confidence bound and determinant-based stage-count argument to the linear kernel yields regret \(O(d^{3/2}\,\mathrm{polylog}\,T)\) for general action sets (see Appendix~\ref{app:hikima-linear}).
In the same oracle model, \citet{wang2025best} study best arm identification, and further consider oracles that address up to \(m\) arms in superposition, gaining an additional \(\sqrt m\) factor with matching query lower bounds.
The model has also been extended to heavy-tailed rewards~\citep{wu2023quantum}, Lipschitz bandits~\citep{yi2026quantum}, and bandits with knapsacks~\citep{su2025quantum}.
These works do not establish minimax regret lower bounds in this model.

A parallel line of work studies best arm identification with coherent access to the arms~\citep{casale2020quantum,wang2021quantum}.
In this stronger model, \citet{wang2021quantum} obtain a quadratic speedup with a matching query lower bound.
At the other end, \citet{buchholz2025multi} show that when the reward randomness cannot be accessed coherently, the quantum advantage disappears entirely.
Both the achievable speedup and the ultimate limits thus depend on the access model, and our lower bounds characterize these limits for the model of \citet{wan2023quantum}.

In a different direction, \citet{lumbreras2022multi} study bandits whose arms are observables applied to an unknown quantum state and prove \(\Omega(\sqrt T)\) regret lower bounds, which do not transfer to our setting since the model and the source of randomness differ.
Relatedly, \citet{lumbreras2024linear} show that classical linear bandits with noise vanishing near the optimal action admit \(\mathrm{polylog}\,(T)\) minimax regret, with applications to learning pure quantum states~\citep{lumbreras2026learning}.
This gives a route to logarithmic regret different from the query speedups studied in this paper.

\paragraph{Classical linear bandits and experimental design.}
For general action sets, confidence-ellipsoid algorithms achieve \(\widetilde O(d\sqrt T)\) regret~\citep{dani2008stochastic,abbasi2011improved}.
For finite action sets, phased elimination with \(G\)-optimal design achieves \(O(\sqrt{dT\log K})\) regret~\citep{lattimore2020learning,lattimore2020bandit}.
Our algorithm follows this finite-action scheme, with the sampling step replaced by quantum estimation, which in turn constrains the design.
For the design itself, the Kiefer--Wolfowitz theorem~\citep{kiefer1960equivalence} guarantees a \(G\)-optimal design, \citet{lattimore2020learning} obtain one with support size \(O(d\log\log d)\), and we use the rounding method of \citet{allen2021near} to obtain support \(O(r)\) on an action set of rank \(r\).
In the classical setting, small support is mainly a computational convenience.
By contrast, it is essential in our quantum algorithm, since the query allocation must place a minimum number of queries on every support point.
For contextual linear bandits with action sets that may change over time, SupLinUCB~\citep{chu2011contextual} attains \(\widetilde O(\sqrt{dT\log K})\) regret.
In this setting, \citet{li2019nearly} prove a lower bound of order \(\Omega(\sqrt{dT\log(T/d)\log K})\).
Their upper bound matches it up to iterated logarithmic factors and the replacement of \(\log(T/d)\) by \(\log T\).
Design-based allocation also plays a central role in linear pure exploration~\citep{soare2014best,fiez2019sequential}.

\paragraph{Quantum query lower bounds.}
Quantum query lower bounds often proceed by representing acceptance probabilities as low-degree polynomials~\citep{beals2001quantum}.
Closest to our fixed-query testing lemma is \citet{mande2026tight}, who prove an \(\Omega(\log(1/\delta))\)-query lower bound for achieving error probability \(\delta\) in phase estimation.
Their proof likewise represents acceptance probabilities as trigonometric polynomials.
In their problem, the alternatives cover all phases outside a small window, so a growth inequality on large sets~\citep[Theorem~5.1.2]{borwein1995polynomials} suffices.
Our alternatives instead form a short interval, and this requires a Remez-type inequality on small sets~\citep{ganzburg2012remez}.
Moreover, a bandit policy spends a random number of queries on each arm, so the fixed-query bound must be extended to expected query counts, which we obtain by truncation.
Finally, the reduction from bandit regret to point-versus-interval testing has no analogue in phase estimation, and is what turns the testing lower bound into a regret lower bound.

\section{Preliminaries}
\label{sec:prelim}

\paragraph{Notation.}
For a positive integer \(n\), let \([n]:=\{1,\ldots,n\}\).
We use \(\|\cdot\|_2\) for the Euclidean norm, \(e_1,\ldots,e_d\) for the standard basis of \(\mathbb R^d\), and \(\operatorname{span}(\mathcal X)\) for the linear span of a set \(\mathcal X\).
For a matrix \(M\), \(\operatorname{Range}(M)\) denotes its range and \(M^+\) its Moore--Penrose pseudoinverse.
All logarithms are natural unless the base is written explicitly.
In informal statements, \(a\gtrsim b\) means \(a\ge Cb\) for a universal constant \(C>0\). 

\paragraph{Quantum reward oracles.}
Let \(\Omega\) be a finite sample space equipped with a probability measure \(P\), and let \(y:\Omega\to[0,1]\) be a random variable representing the reward, with mean \(\mu:=\mathbb E_{\omega\sim P}[y(\omega)]\).
The \emph{quantum reward oracle} associated with \((P,y)\) is a unitary \(\mathcal O\) acting as
\begin{equation}
\label{eq:oracle}
    \mathcal O: |0\rangle
    \mapsto
    \sum_{\omega\in\Omega}\sqrt{P(\omega)}\,|\omega\rangle |y(\omega)\rangle,
\end{equation}
where \(|0\rangle\) denotes the all-zero state of both registers, an outcome register for \(\omega\) and a reward register for \(y(\omega)\).
Measuring the reward register of the output state \(\mathcal O|0\rangle\) in the computational basis yields a classical sample of \(y\), so the oracle \(\mathcal O\) generalizes classical reward feedback. 

\paragraph{Quantum multi-armed bandits.}
A \emph{quantum multi-armed bandit} (QMAB) instance \(\nu\) consists of \(K\) arms.
Each arm \(i\in[K]\) is associated with an unknown reward distribution \((P_i,y_i)\), where \(P_i\) is a probability measure on a finite sample space \(\Omega_i\) and \(y_i:\Omega_i\to[0,1]\) is the reward.
The mean reward of arm \(i\) is denoted by \(\mu(i)\).
In the quantum bandit model of \citet{wan2023quantum}, access to each arm \(i\) is only through a quantum reward oracle \(\mathcal O_i\) of the form~\eqref{eq:oracle} that encodes \((P_i,y_i)\).

The learner interacts with the instance over a \emph{horizon} of \(T\) rounds, maintaining a quantum circuit throughout.
In each round \(t\in[T]\), the learner selects an arm \(a_t\in[K]\) and applies either \(\mathcal O_{a_t}\) or \(\mathcal O_{a_t}^\dagger\) at most once.
Between consecutive rounds, the learner may place arbitrary unitaries that do not depend on the unknown reward distributions, and it may perform measurements at any round.
We call the learner's strategy a \emph{policy}, denoted \(\pi\).

Let \(i^\star\in\argmax_{i\in[K]}\mu(i)\) be an optimal arm, and let \(\Delta_i:=\mu(i^\star)-\mu(i)\) be the reward gap of arm \(i\).
The learner's objective is to minimize the cumulative regret, the total gap from always selecting an optimal arm:
\[
    R(T):=\sum_{t=1}^T\big(\mu(i^\star)-\mu(a_t)\big).
\]
For a policy \(\pi\) and an instance \(\nu\), we write \(R_T^\pi(\nu):=\mathbb E_\nu[R(T)]\) for the expected regret, and use \(\Pr_\nu\) and \(\mathbb E_\nu\) for probabilities and expectations when \(\pi\) runs on \(\nu\).

\paragraph{Quantum linear bandits.}
A \emph{quantum linear bandit} (QLB) instance consists of a known action set \(\mathcal A\subseteq\mathbb R^d\) and an unknown parameter \(\theta\in\mathbb R^d\).
The mean reward of an action \(x\in\mathcal A\) is \(\mu(x):=x^\top\theta\).
We assume \(\|x\|_2\le 1\) for all \(x\in\mathcal A\), \(\|\theta\|_2\le 1\), and \(\mu(x)\in[0,1]\) for all \(x\in\mathcal A\).
As in the QMAB model, each action \(x\in\mathcal A\) is accessed through a quantum reward oracle \(\mathcal O_x\) of the form~\eqref{eq:oracle} whose reward mean is \(\mu(x)\).
In this paper, we focus on finite action sets and write \(K:=|\mathcal A|\) for the number of actions.

The interaction protocol is the same as in the multi-armed case, with arms replaced by actions. The cumulative regret is defined in the same way, with \(i^\star\) replaced by an optimal action \(x^\star\in\argmax_{x\in\mathcal A}\mu(x)\).
For a policy \(\pi\), we write \(R_T^\pi(\theta)\) for the expected regret on the instance determined by the parameter \(\theta\).

\paragraph{Quantum mean estimation.}
Our algorithms estimate the mean rewards of actions with two quantum subroutines.
The first is the standard Quantum Monte Carlo (QMC) estimator, which guarantees a small error with high probability and underlies the algorithms of \citet{wan2023quantum}.
The second is the nondestructive unbiased amplitude estimator of \citet{cornelissen2023sublinear}, which controls the bias and the variance.
We restate their guarantee below and specialize it to reward oracles of the form~\eqref{eq:oracle}.

\begin{lemma}[Quantum Monte Carlo {\citep[Theorem~2.3]{montanaro2015quantum}}]
\label{lem:qmc}
There exists a constant \(C_1>1\) and a quantum algorithm \(\QMC(\mathcal O,\epsilon,\delta)\) such that, for any \(\epsilon\in(0,1]\) and \(\delta\in(0,1)\), the algorithm outputs an estimate \(\widehat\mu\) of \(\mu\) satisfying \(\Pr\big[\lvert\widehat\mu-\mu\rvert>\epsilon\big]\le\delta\).
It uses at most \(\frac{C_1}{\epsilon}\log\frac{1}{\delta}\) queries
to \(\mathcal O\) and \(\mathcal O^\dagger\).
\end{lemma}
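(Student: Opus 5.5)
The plan is to derive Lemma~\ref{lem:qmc} from amplitude estimation applied to a Bernoulli encoding of the reward, followed by median amplification. This is the route of \citet{montanaro2015quantum}, specialized to rewards in \([0,1]\).

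\textbf{Step 1: encode the mean as an amplitude.} Starting from \(\mathcal O|0\rangle=\sum_\omega\sqrt{P(\omega)}|\omega\rangle|y(\omega)\rangle\), adjoin an ancilla qubit. Apply the controlled rotation \(|y\rangle|0\rangle\mapsto|y\rangle\big(\sqrt{1-y}\,|0\rangle+\sqrt y\,|1\rangle\big)\). This rotation does not depend on the unknown distribution, so the model permits it. Call the composite unitary \(\mathcal A\). The squared amplitude of the ancilla state \(|1\rangle\) in \(\mathcal A|0\rangle\) is then \(\sum_\omega P(\omega)y(\omega)=\mu\). Write \(\mu=\sin^2\theta\).

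\textbf{Step 2: constant-confidence estimate.} Form the Grover iterate \(Q=-\mathcal A S_0\mathcal A^\dagger S_1\), where \(S_0\) and \(S_1\) are reflections. Each application of \(Q\) uses one call to \(\mathcal O\) and one call to \(\mathcal O^\dagger\). Run phase estimation of \(Q\) with \(m\) applications of \(Q\) on the state \(\mathcal A|0\rangle\). The standard amplitude estimation theorem of Brassard--Høyer--Mosca--Tapp then gives an estimate \(\widetilde\mu\) with \(|\widetilde\mu-\mu|\le 2\pi\sqrt{\mu(1-\mu)}/m+\pi^2/m^2\), with probability at least \(8/\pi^2>3/4\). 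Taking \(m=\lceil c/\epsilon\rceil\) for a suitable constant \(c\) makes the error at most \(\epsilon\) with probability at least \(3/4\). This uses \(O(1/\epsilon)\) queries. The case \(\epsilon\) close to \(1\) is trivial, since the estimate \(1/2\) already has error at most \(1/2\).

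\textbf{Step 3: boost by the median.} Repeat Step 2 independently \(n=\lceil 18\log(1/\delta)\rceil\) times and output the median \(\widehat\mu\). The median errs by more than \(\epsilon\) only if at least half of the runs err. Each run errs with probability at most \(1/4\), so Hoeffding's inequality bounds this event by \(\exp(-2n(1/4)^2)\le\delta\). The total query count is \(n\cdot O(1/\epsilon)\le \frac{C_1}{\epsilon}\log\frac1\delta\). To absorb the ceiling when \(\delta\) is close to \(1\), use \(n=\max(1,\cdot)\) and enlarge \(C_1\); alternatively, note that \(\log(1/\delta)\) is bounded below on any range where boosting is needed.

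The only delicate point is the constant bookkeeping in Step 2: the amplitude-estimation error bound must be converted into a clean \(O(1/\epsilon)\) query count uniformly over \(\mu\in[0,1]\). Using the crude bound \(\sqrt{\mu(1-\mu)}\le 1/2\) handles this. Everything else is standard, so in the paper it suffices to cite \citet[Theorem~2.3]{montanaro2015quantum}.
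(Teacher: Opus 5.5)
Your route (rotating the reward into an ancilla amplitude, amplitude estimation with $\Theta(1/\epsilon)$ Grover iterations, and a median over $\Theta(\log(1/\delta))$ runs) is the argument behind \citet[Theorem~2.3]{montanaro2015quantum}. The paper cites that result rather than reproving it, so the approach matches. Your Step~2 omits the one model-specific point the paper does address, in Remark~\ref{rem:controlled}. Phase estimation applies \emph{controlled} powers of $Q$, but the model grants only $\mathcal O$ and $\mathcal O^\dagger$, not their controlled versions. Your count of one call to each per application of $Q$ holds only for the uncontrolled $Q$. Close it as in the proof of Lemma~\ref{lem:lvqme}. With $\mathcal A$ your state-preparation unitary, $\mathrm c\text{-}(\mathcal A S_0\mathcal A^\dagger)=(I\otimes\mathcal A)(\mathrm c\text{-}S_0)(I\otimes\mathcal A^\dagger)$, because the control-$0$ branch gives $\mathcal A\mathcal A^\dagger=I$. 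The controlled $S_1$ and the controlled global phase $-1$ do not involve the oracle. Hence each controlled $Q$ still costs one uncontrolled $\mathcal O$ and one $\mathcal O^\dagger$, and your $O(1/\epsilon)$ count stands.

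Separately, your treatment of $\delta$ near $1$ fails. For fixed $C_1$, the budget $B:=\frac{C_1}{\epsilon}\log\frac1\delta$ tends to $0$ as $\delta\to1$. Even a single run ($n=1$) costs $\Theta(1/\epsilon)$ queries, so enlarging $C_1$ cannot absorb this. For $\delta\in[1/4,1)$ no boosting is needed, yet the budget still drops below one run once $\log(1/\delta)$ is small. There are two repairs. First, you can state the lemma for $\delta\le1/2$, where $\lceil18\log(1/\delta)\rceil\le36\log(1/\delta)$ and your count goes through. This covers the paper's uses: every call to $\QMC$ in \QMCGElim{} with $T\ge2$ and in Appendix~\ref{app:hikima-linear} has confidence parameter below $1/2$. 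Second, for the literal statement, when $B$ is below the cost of one full run, run amplitude estimation with $\Theta(B)$ Grover iterations. Output a uniform point of the resulting confidence interval intersected with $[0,1]$, or a uniform point of $[0,1]$ if $B<3$. On the amplitude-estimation success event, this lands within $\epsilon$ of $\mu$ with probability $\Omega(\epsilon B)$. So the overall success probability is at least $c\,\epsilon B$ for a universal $c>0$. This is at least $\log(1/\delta)=\epsilon B/C_1\ge1-\delta$ once $C_1\ge1/c$.
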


\begin{lemma}[Nondestructive unbiased amplitude estimation {\citep[Theorem~2.4]{cornelissen2023sublinear}}]
\label{lem:ndub-amplitude}
Let \(|\psi\rangle\) be a quantum state and let \(\Pi\) be a projection operator with \(p=\|\Pi|\psi\rangle\|^2\).
Given \(t\ge4\) and \(\varepsilon\in(0,1)\), the algorithm \(\procname{NdUb-Amplitude}(|\psi\rangle,\Pi,t,\varepsilon)\) outputs an estimate \(\widetilde p\in[-2\pi,2\pi]\) such that
\[
    \bigl\lvert\mathbb E[\widetilde p]-p\bigr\rvert\le\varepsilon,
    \qquad
    \operatorname{Var}(\widetilde p)\le\frac{91\,p}{t^2}+\varepsilon .
\]
The algorithm needs one copy of \(|\psi\rangle\), which is restored at the end of the computation with probability at least \(1-\varepsilon\), and \(O(t\log\log(t)\log(t/\varepsilon))\) applications in expectation of the reflection operators \(I-2|\psi\rangle\langle\psi|\) and \(I-2\Pi\).\footnote{In the construction underlying \citet[Theorem~2.4]{cornelissen2023sublinear}, the reflection primitives are applied also in controlled form (see their Lemmas~A.1, A.5, and~B.1), and such applications are included in the reflection count of Theorem~2.4.
The amplification and conversion unitaries there are implemented from the same reflection primitives (their Lemmas~A.3--A.4).}
\end{lemma}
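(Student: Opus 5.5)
Since this lemma is quoted from \citet[Theorem~2.4]{cornelissen2023sublinear}, the paper will only need to check that its hypotheses match our oracle model. If I were to prove it from scratch, I would work with the Grover iterate \(G:=(I-2|\psi\rangle\langle\psi|)(I-2\Pi)\). Write \(p=\sin^2\theta\) with \(\theta\in[0,\pi/2]\).
\begin{itemize}
\item On the two-dimensional invariant subspace spanned by \(\Pi|\psi\rangle\) and \((I-\Pi)|\psi\rangle\), the operator \(G\) acts as a rotation with eigenvalues \(-e^{\pm 2i\theta}\).
\item The state decomposes as \(|\psi\rangle=(|\phi_+\rangle+|\phi_-\rangle)/\sqrt2\) over the corresponding eigenvectors.
\item Consequently \(\langle\psi|G^k|\psi\rangle=(-1)^k\cos(2k\theta)\).
\end{itemize}
A Hadamard test with controlled \(G^k\) therefore gives a \(\pm1\)-valued, exactly unbiased estimate of \((-1)^k\cos(2k\theta)\). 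It uses \(k\) controlled applications of the two reflections, which is consistent with the footnote about controlled reflections.

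The plan has three stages.

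\emph{Stage 1: coarse phase.} Run phase estimation on \(G\) with \(O(t)\) applications, repeated \(O(\log(t/\varepsilon))\) times with a median. This yields \(\widehat\theta\) with \(|\widehat\theta-\theta|\le c/t\), except with probability at most \(\varepsilon\).

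\emph{Stage 2: unbiased local correction.} Write \(p=\sin^2(\widehat\theta+\eta)\) with \(|\eta|\le c/t\). Expand this as a trigonometric series in \(\theta\) whose coefficients depend on \(\widehat\theta\). The aim is to express \(p-\sin^2\widehat\theta\) as \(\sum_k a_k\cos(2k\theta)+b_k\sin(2k\theta)\), truncated at \(k\lesssim t\log(1/\varepsilon)\) so that the truncation bias is at most \(\varepsilon\). To estimate the sum, sample a single index \(k\) with probability proportional to \(|a_k|\) (importance sampling) and run one Hadamard test on \(G^k\). Rescaling by \(\sum_k|a_k|\) makes the estimator unbiased for the truncated series.

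\emph{Stage 3: variance.} Because \(|\eta|\lesssim 1/t\), the correction has magnitude \(O(\sqrt p/t+1/t^2)\). If the coefficient mass \(\sum_k|a_k|\) is also of order \(\sqrt p/t\), this gives \(\operatorname{Var}\lesssim p/t^2\) plus the \(\varepsilon\) contribution from failure events. Averaging a \(\log\log t\) number of such samples, to handle the multiscale choice of \(k\), accounts for the \(\log\log t\) factor in the query count. The output is bounded, so it lies in \([-2\pi,2\pi]\).

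\emph{Nondestructiveness.} Phase estimation collapses \(|\psi\rangle\) onto \(|\phi_\pm\rangle\). To undo this, I would apply the standard restoration step: alternate measurements of \(\{|\psi\rangle\langle\psi|,\,I-|\psi\rangle\langle\psi|\}\) and of the phase register, in the Marriott--Watrous style. Both measurements are implementable from the given reflections. This returns \(|\psi\rangle\) except with probability \(\varepsilon\) after an expected \(O(\log(1/\varepsilon))\) rounds, and it is why the query bound holds in expectation rather than in the worst case.

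The main obstacle is Stage 2. One must build a series with coefficient mass \(\sum_k|a_k|=O(\sqrt p/t)\) rather than \(O(1)\), uniformly in \(p\), including \(p\to0\). I would first try to center the expansion at \(\widehat\theta\) using a smooth bump/Fejér-type window of width \(1/t\), and then bound the Fourier mass via the derivative \(\partial_\theta\sin^2\theta=\sin2\theta\approx2\sqrt p\).
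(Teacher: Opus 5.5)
\textbf{The paper's proof and your reconstruction.} The paper does not prove this lemma. It is imported verbatim from \citet[Theorem~2.4]{cornelissen2023sublinear}, and the footnote only records that controlled reflections are included in the count. The model-specific check you anticipate (every reflection, controlled or not, costs one query to $\mathcal O$ and one to $\mathcal O^\dagger$) is carried out in the proof of Lemma~\ref{lem:lvqme}. Your opening sentence is therefore the paper's entire argument.

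Your from-scratch reconstruction, however, has a genuine gap, and it sits exactly at the step you flag; moreover, the target you set there cannot be met. Your correction $F(\theta')=\sum_k a_k\cos(2k\theta')$ must equal $\sin^2\theta'-\sin^2\widehat\theta$ on the whole window $|\theta'-\widehat\theta|\le c/t$. On any such window $\sin^2$ oscillates by at least $\sin^2(c/t)$. Hence $\sum_k|a_k|\ge\sup|F|\gtrsim t^{-2}$ for every $\widehat\theta$, so the requirement $\sum_k|a_k|=O(\sqrt p/t)$ ``including $p\to0$'' is impossible. Your own magnitude estimate $O(\sqrt p/t+1/t^2)$ already contains this $t^{-2}$ term, and it cannot be dropped.

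This is fatal for the estimator as you describe it, which outputs $\pm\sum_k|a_k|$. At $p=0$, phase estimation returns $\widehat\theta=0$ and every Hadamard outcome is deterministic. The only randomness is then the sampled sign, so $\operatorname{Var}(\widetilde p)=(\sum_k|a_k|)^2-F(0)^2\gtrsim t^{-4}$. The statement, by contrast, demands $\operatorname{Var}(\widetilde p)\le\varepsilon$ at $p=0$. So your Stage~3 bound fails whenever $p\ll t^{-2}$ and $\varepsilon\ll t^{-4}$.

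There is also a separate problem: $\langle\psi|G^k|\psi\rangle$ is real, so the $b_k\sin(2k\theta)$ terms are not accessible to your Hadamard tests. The window must therefore be even and $\pi$-periodic, i.e.\ a symmetrized bump at $\pm\widehat\theta$ rather than one centered at $\widehat\theta$.

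\textbf{What would repair it.} The missing idea is to center the one-sample estimator at $\theta=0$ instead of rescaling a raw $\pm1$ outcome. Let $H_k$ be the Hadamard outcome and set $Z_k:=(-1)^kH_k$, so that $\mathbb E[Z_k]=\cos2k\theta$ and $Z_k\equiv1$ when $p=0$. Let $A:=\sum_{k\ge1}|a_k|$ and draw $k$ with probability $|a_k|/A$. Then output $\sin^2\widehat\theta+F(0)+A\,\mathrm{sgn}(a_k)(Z_k-1)$.
\begin{itemize}
\item The conditional mean is unchanged.
\item The conditional variance is at most $2A\sum_k|a_k|(1-\cos2k\theta)$, which vanishes at $p=0$.
\item This variance is $O(p/t^2)$ provided the window can be built with $A\lesssim\sqrt p/t+t^{-2}$ and $\sum_k|a_k|k^2=O(1)$.
\end{itemize}
Those bounds, together with the truncation, the explicit constant $91$, and the $\log\log t$ factor (which your sketch reverse-engineers rather than derives), would all still need proofs.

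Your restoration claim is also imprecise. When $\theta\lesssim1/t$, a round consisting of a resolution-$1/t$ phase measurement followed by the $\{|\psi\rangle\langle\psi|,I-|\psi\rangle\langle\psi|\}$ measurement takes $|\psi^\perp\rangle$ back to $|\psi\rangle$ with probability only $\Theta((t\theta)^2)$. Capping at $O(\log(1/\varepsilon))$ rounds therefore does not give failure probability $\varepsilon$ uniformly in $\theta$. What saves you is that, if each round reuses the measurement that caused the disturbance, the initial disturbance has that same probability. Running until success then takes two rounds in expectation.

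Finally, downstream the paper only uses $\operatorname{Var}\le91/t^2+\zeta$ (via $p\le1$). A $t^{-4}$ floor would thus be harmless for Lemma~\ref{lem:lvqme}, but it does contradict the statement you set out to prove.
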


Specializing this estimator to reward oracles yields the following mean estimator, which is the form we use.

\begin{lemma}[Low-bias low-variance quantum mean estimation;
from {\citealp[Theorem~2.4]{cornelissen2023sublinear}}]
\label{lem:lvqme}
There exist constants \(C_Q,C_V>0\) and a quantum algorithm \(\LVQME(\mathcal O,t,\zeta)\) such that, for any \(t\ge4\) and \(\zeta\in(0,1)\), the algorithm outputs an estimate \(\widehat\mu\) of \(\mu\) satisfying
\[
    \bigl\lvert\mathbb E[\widehat\mu]-\mu\bigr\rvert\le\zeta,
    \qquad
    \operatorname{Var}(\widehat\mu)\le\frac{C_V}{t^2}+\zeta.
\]
Its expected number of queries to \(\mathcal O\) and \(\mathcal O^\dagger\) is at most \(C_Q\,t\,\log\log(t)\log\frac{t}{\zeta}\).\footnote{Only the expectation of the query count is bounded. The design estimation procedure in Section~\ref{sec:finite-action-qlb} therefore enforces a deterministic query threshold.}
\end{lemma}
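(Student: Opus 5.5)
The plan is to derive Lemma~\ref{lem:lvqme} from Lemma~\ref{lem:ndub-amplitude}. We instantiate the state and the projection so that the amplitude \(p\) encodes the mean \(\mu\). Then we check that reflections about \(|\psi\rangle\) and \(I-2\Pi\) can be implemented with \(O(1)\) queries to \(\mathcal O\) and \(\mathcal O^\dagger\).

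Since \(y\) takes values in \([0,1]\) rather than \(\{0,1\}\), we first use the standard Bernoulli encoding. Append an ancilla qubit and apply a controlled rotation, conditioned on the reward register \(|y(\omega)\rangle\), that maps \(|0\rangle\) to \(\sqrt{1-y(\omega)}\,|0\rangle+\sqrt{y(\omega)}\,|1\rangle\). This rotation is a fixed unitary that does not depend on the unknown distribution.

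Let \(U\) be the composition of \(\mathcal O\) with this rotation, and set \(|\psi\rangle:=U|0\rangle\). Let \(\Pi\) be the projector onto ancilla state \(|1\rangle\). Then
\[
p=\|\Pi|\psi\rangle\|^2=\sum_{\omega}P(\omega)\,y(\omega)=\mu .
\]
The reflection \(I-2\Pi\) uses no queries, since it is a single-qubit \(Z\)-type gate on the ancilla. The reflection \(I-2|\psi\rangle\langle\psi|\) equals \(U(I-2|0\rangle\langle 0|)U^\dagger\), which uses one query to \(\mathcal O\) and one to \(\mathcal O^\dagger\).

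Controlled versions of these reflections are also needed, as the footnote to Lemma~\ref{lem:ndub-amplitude} notes. These can be implemented with the same query count. Control only the middle reflection \(I-2|0\rangle\langle0|\); the conjugation by \(U\) and \(U^\dagger\) then cancels when the control is off. Consequently each reflection application costs at most two queries.

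Next, take \(\LVQME(\mathcal O,t,\zeta)\) to prepare one copy of \(|\psi\rangle\) with one query, run \(\procname{NdUb-Amplitude}(|\psi\rangle,\Pi,t,\zeta)\), and output \(\widehat\mu:=\widetilde p\). Lemma~\ref{lem:ndub-amplitude} then gives the bias bound \(|\mathbb E\widehat\mu-\mu|\le\zeta\). For the variance,
\[
\operatorname{Var}(\widehat\mu)\le \frac{91p}{t^2}+\zeta\le\frac{91}{t^2}+\zeta,
\]
using \(p=\mu\le1\), so \(C_V=91\) works.

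The expected number of queries is at most \(1+2\cdot O(t\log\log(t)\log(t/\zeta))\). Because \(t\ge4\) and \(\zeta<1\), the factor \(\log\log(t)\log(t/\zeta)\) is bounded below by a positive constant. So the additive \(1\) can be absorbed, giving \(C_Q\,t\log\log(t)\log(t/\zeta)\).

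I expect the main obstacle to be bookkeeping rather than a deep step: making sure that every operation counted in Lemma~\ref{lem:ndub-amplitude}, including controlled reflections and any amplification or conversion unitaries built from them, is realized with \(O(1)\) calls to \(\mathcal O\) or \(\mathcal O^\dagger\). One also has to confirm that nondestructiveness, i.e.\ the restoration of \(|\psi\rangle\), is not needed for the stated guarantee. Here we use the estimator once per call, so a failed restoration with probability \(\zeta\) is harmless.
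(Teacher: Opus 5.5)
Your proposal is correct and follows essentially the same route as the paper: the same ancilla encoding giving $p=\|\Pi|\psi\rangle\|^2=\mu$ and $C_V=91$, and the identity $I-2|\psi\rangle\langle\psi|=U(I-2|0\rangle\langle 0|)U^\dagger$ with control placed only on the middle reflection (the paper's $\mathrm{c}^j\text{-}(VDV^\dagger)=(I\otimes V)(\mathrm{c}^j\text{-}D)(I\otimes V^\dagger)$), with constants absorbed into $C_Q$. Your explicit check that $\log\log(t)\log(t/\zeta)$ is bounded below for $t\ge4$ and $\zeta<1$ handles the extra state-preparation query a bit more carefully than the paper, which simply absorbs it.
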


\begin{proof}
We obtain the lemma by specializing Lemma~\ref{lem:ndub-amplitude}.
Add one ancilla qubit, and let \(W\) be the unitary that rotates the ancilla according to the reward value, \(W:|y\rangle|0\rangle\mapsto|y\rangle\big(\sqrt{1-y}\,|0\rangle+\sqrt{y}\,|1\rangle\big)\) for \(y\in[0,1]\), extended arbitrarily to a full unitary.
Set \(V:=W(\mathcal O\otimes I)\), and let \(|\psi\rangle:=V|0\rangle\), where \(|0\rangle\) denotes the all-zero initial state including the ancilla.
Let \(\Pi:=I\otimes|1\rangle\langle1|\) be the projection onto the ancilla state \(|1\rangle\).
Then \(\|\Pi|\psi\rangle\|^2=\sum_{\omega\in\Omega}P(\omega)\,y(\omega)=\mu\), so Lemma~\ref{lem:ndub-amplitude} applied to \((|\psi\rangle,\Pi)\) directly yields an estimate of \(\mu\).

The algorithm \(\LVQME(\mathcal O,t,\zeta)\) prepares \(|\psi\rangle\), runs \(\procname{NdUb-Amplitude}(|\psi\rangle,\Pi,t,\zeta)\), and returns its output as \(\widehat\mu\).
Since \(t\ge4\) and \(\zeta\in(0,1)\), Lemma~\ref{lem:ndub-amplitude} gives \(\lvert\mathbb E[\widehat\mu]-\mu\rvert\le\zeta\) and \(\operatorname{Var}(\widehat\mu)\le91\mu/t^2+\zeta\le91/t^2+\zeta\), using \(\mu\le1\), so the variance bound holds with \(C_V:=91\).
Note that although the estimate \(\widehat\mu\) may lie outside \([0,1]\), the analysis in Section~\ref{sec:finite-action-qlb} uses only its bias and variance guarantees.

It remains to count oracle queries.
Preparing \(|\psi\rangle\) uses one query to \(\mathcal O\).
Write \(R_\psi:=I-2|\psi\rangle\langle\psi|\), \(R_\Pi:=I-2\Pi\), and \(R_0:=I-2|0\rangle\langle0|\).
Since \(|\psi\rangle=V|0\rangle\), the identity \(R_\psi=VR_0V^\dagger\) implements \(R_\psi\) using one query each to \(\mathcal O\) and \(\mathcal O^\dagger\), while \(R_\Pi=I\otimes(I-2|1\rangle\langle1|)\) is independent of the oracle.

The same accounting applies to controlled reflections.
For any unitary \(D\) that is independent of the oracle and any number \(j\) of control qubits, \(\mathrm c^j\text{-}(VDV^\dagger)=(I\otimes V)(\mathrm c^j\text{-}D)(I\otimes V^\dagger)\), as can be verified on the control subspaces.
Taking \(D=R_0\) shows that every controlled application of \(R_\psi\) uses one ordinary query each to \(\mathcal O\) and \(\mathcal O^\dagger\), whereas controlled applications of \(R_\Pi\) use no oracle queries.
Controlled products can be implemented factorwise, and the controlled global phase in the Grover operator \(-R_\psi R_\Pi\) does not depend on the oracle.
Hence every reflection counted in Lemma~\ref{lem:ndub-amplitude}, controlled or not, uses at most one query each to \(\mathcal O\) and \(\mathcal O^\dagger\), and the stated expected query bound follows after absorbing constants into \(C_Q\).
\end{proof}

\begin{remark}[Controlled oracle access is not needed]
\label{rem:controlled}
Controlled versions of \(\mathcal O\) and \(\mathcal O^\dagger\) are in general not available from black-box access to an unknown unitary, and our model does not grant them.
The proof of Lemma~\ref{lem:lvqme} shows that they are not needed, and the same accounting covers the controlled powers of the Grover operator in the QMC estimator of Lemma~\ref{lem:qmc}, which explains why QMC-based algorithms, including those of \citet{wan2023quantum}, run in this model.
Our lower bounds in Section~\ref{sec:lower-bound} allow tests with controlled queries, and therefore hold a fortiori for policies in our model.
\end{remark}

\section{Lower Bounds}
\label{sec:lower-bound}

We prove the quantum bandit lower bounds in this section. 
We first establish a minimax lower bound for quantum multi-armed bandits and then adapt the same hard-instance construction to finite-action quantum linear bandits.

For the lower bounds, it suffices to work with canonical Bernoulli reward oracles. 
For \(p\in[0,1]\), let \(\mathcal O_p\) denote the \emph{canonical Bernoulli oracle}, whose full unitary action is specified in Section~\ref{subsec:single-arm-testing}.
In particular, it acts on the all-zero state as
\[
    \mathcal O_p|0\rangle
    =
    \sqrt{1-p}\,|0\rangle|0\rangle
    +
    \sqrt{p}\,|1\rangle|1\rangle,
\]
and is a specialization of~\eqref{eq:oracle}.
Let \(\mathcal B_K\) denote the class of \(K\)-armed bandit instances with mean function \(\mu:[K]\to[0,1]\) in which every arm uses a canonical Bernoulli oracle, meaning that \(\mathcal O_i=\mathcal O_{\mu(i)}\) for every \(i\in[K]\).
We first prove the following minimax lower bound over \(\mathcal B_K\).

\begin{theorem}[QMAB minimax lower bound]
\label{thm:qmab-lb}
There exist universal constants \(c,C>0\) such that for every \(K\ge2\) and \(T\ge CK\), for any quantum multi-armed bandit policy \(\pi\) over \(T\) rounds, there exists an instance \(\nu\in\mathcal B_K\) such that
\[
    R_T^\pi(\nu)\ge cK\log\frac{T}{K}.
\]
\end{theorem}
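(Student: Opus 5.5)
The plan has two parts: a single-arm point-versus-interval quantum testing lower bound, and a bandit-to-testing reduction built on a needle-in-a-haystack family inside \(\mathcal B_K\).

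\textbf{Testing lemma.} First I would prove the fixed-query version. Any procedure that makes \(q\) (possibly controlled) queries to \(\mathcal O_p\) or \(\mathcal O_p^\dagger\) and then outputs a bit has acceptance probability that is a trigonometric polynomial of degree \(O(q)\) in \(\phi\), where \(p=\sin^2\phi\). To see this, write the canonical Bernoulli oracle as a rotation by angle \(\phi\); each query then multiplies amplitudes by entries in \(\{\cos\phi,\sin\phi\}\), so amplitudes have degree at most \(q\) and probabilities degree at most \(2q\).

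Fix a baseline \(p_0\) and an interval \(I=[p_1,p_2]\) with \(p_1-p_0\) and \(|I|\) both constants. Suppose the test declares ``baseline'' with probability at least \(1/2\) at \(p_0\) and at most \(\delta\) uniformly on \(I\). The natural Remez inequality would ask for smallness on a large set and conclude smallness at \(p_0\). Here I need the reverse direction: a trigonometric polynomial \(f\) of degree \(n\) that is bounded by \(1\) everywhere and by \(\delta\) on a set \(E\) of positive measure satisfies \(\|f\|_\infty\le \delta\, e^{Cn|\mathbb T\setminus E|}\). That measure is not small, so I would use the small-set form instead, following Ganzburg. Apply it to \(g=f\) restricted near \(I\). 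Then \(\sup_{[\phi_0,\phi_2]} |f|\le \delta\,(C/|I|)^{O(n)}\), which gives \(1/2\le\delta\,C'^{\,n}\) and hence \(n=\Omega(\log(1/\delta))\). So \(q=\Omega(\log 1/\delta)\).

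To pass to expected query counts, truncate a test with \(\mathbb E_{p_0}[N]\le m\) at \(4m\) queries. Markov's inequality costs only \(1/4\) of the baseline acceptance. On \(I\) the error stays at most \(\delta\), since truncating just adds the outcome ``declare alternative''. Thus \(\mathbb E_{p_0}[N]\ge c\log(1/\delta)\). Only the baseline expectation matters here.

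\textbf{Reduction.} Take \(\nu_0\) with arm \(1\) at mean \(1/2\) and arms \(2,\dots,K\) at \(1/4\). For \(j\ge2\) and \(p\in[3/4,7/8]\), let \(\nu_{j,p}\) raise arm \(j\) to \(p\). Let \(M\) be the worst-case regret and assume \(M\le cK\log(T/K)\) for contradiction. Under \(\nu_0\), the expected number of pulls of the candidate arms is at most \(4M\), so some \(K/2\) arms \(j\) have \(\mathbb E_{\nu_0}[N_j]\le 8M/K\).

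For such a \(j\), build a single-arm test for arm \(j\). Simulate the policy, supplying the other arms' oracles ourselves (they are known under all these instances), and declare ``baseline'' iff \(N_j(T)\le T/4\). Under \(\nu_{j,p}\), the event \(N_j\le T/4\) costs regret at least \(\tfrac14\cdot\tfrac{3T}{4}\), so its probability is at most \(16M/(3T)=:\delta\) uniformly in \(p\). Under \(\nu_0\), Markov gives \(N_j\le T/4\) with probability at least \(1-32M/(KT)\ge1/2\).

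The test's query count is \(N_j\). However, the test must decide at time \(T\), so its expected query count under \(p_0\) is exactly \(\mathbb E_{\nu_0}N_j\le 8M/K\). The lemma then gives \(8M/K\ge c\log(3T/16M)\). Suppose \(M\le K\log(T/K)\); otherwise we are done. Then \(\log(T/M)\ge \log(T/K)-\log\log(T/K)\gtrsim\log(T/K)\) for \(T\ge CK\), which yields \(M\ge c'K\log(T/K)\).

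\textbf{Main obstacle.} The delicate step is the Remez-type step. It must show that smallness on a short interval \(I\) that does not contain \(p_0\) forces \(f(\phi_0)\) to be exponentially small in the degree, with a constant depending only on \(|I|\) and the distance to \(\phi_0\). I would try Ganzburg's inequality \(\|f\|_{\mathbb T}\le (C/|E|)^{2n}\sup_E|f|\) with \(E\) the \(\phi\)-image of \(I\). This also requires checking that the controlled-query model still yields degree \(O(q)\) in \(\cos\phi,\sin\phi\).
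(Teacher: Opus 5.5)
Your proposal is correct and follows essentially the same route as the paper. The fixed-query bound comes from writing acceptance probabilities as degree-$O(q)$ trigonometric polynomials and applying Ganzburg's small-set Remez inequality to a point-versus-interval test. Markov truncation then extends it to expected query counts, and a simulation-based reduction on a needle-in-a-haystack family inside $\mathcal B_K$ yields the regret bound.

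The differences are cosmetic: the means and decision threshold are different, and you select one arm with small $\mathbb{E}_{\nu_0}[N_j]$ by averaging rather than summing the per-arm bound over all arms, which is equivalent.
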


\subsection{Hard Instances and Proof Overview}
\label{subsec:hard-instances-overview}
We construct a family of hard instances inside \(\mathcal B_K\).
The \emph{baseline instance} \(\nu^0\) has mean \(1/2\) on arm \(1\) and mean \(5/12\) on every arm \(i\ge2\).
For each \(i\in\{2,\ldots,K\}\) and \(q\in[7/12,2/3]\), let \(\nu^{i,q}\) be the instance in which arm \(i\) has mean \(q\), arm \(1\) has mean \(1/2\), and every arm \(j\notin\{1,i\}\) has mean \(5/12\).
The set of alternatives for arm \(i\) is \(\mathcal U_i=\{\nu^{i,q}:q\in[7/12,2/3]\}\).
Therefore, arm \(1\) is uniquely optimal in \(\nu^0\), and every suboptimal arm \(i\ge2\) has gap \(1/12\) under \(\nu^0\).
In every instance of \(\mathcal U_i\), arm \(i\) is uniquely optimal, and every suboptimal arm has gap at least \(1/12\).
We denote the hard family by \(\mathcal U_K=\{\nu^0\}\cup\bigcup_{i=2}^K\mathcal U_i\), and for a policy \(\pi\), we write \(M_T=\sup_{\nu\in\mathcal U_K}R_T^\pi(\nu)\) for its worst-case regret over the hard family.

\paragraph{Proof overview.}
The proof turns regret minimization on \(\mathcal U_K\) into a collection of single-arm quantum testing problems. 
Under the baseline instance \(\nu^0\), every round selecting a suboptimal arm \(i\ge2\) incurs constant regret.
By contrast, under the alternative instances in \(\mathcal U_i\), arm \(i\) is optimal and every round selecting another arm incurs constant regret.
A low-regret policy must therefore determine, for each arm \(i\ge2\), whether its mean is the baseline value \(5/12\) or lies in the interval \([7/12,2/3]\). 
It is essential that the alternatives form an interval rather than a single point.

For quantum queries, two fixed unitary oracles can be distinguished without paying a \(\log(1/\delta)\) query cost to reach error probability \(\delta\). Indeed, a constant number of coherent queries suffices to distinguish \(\mathcal O_{5/12}\) and \(\mathcal O_{7/12}\) perfectly~\citep{acin2001statistical,duan2007entanglement}.
As we show below, the logarithmic dependence on the error probability \(\delta\) reappears when the same test must succeed for every mean in the alternative interval \([7/12,2/3]\).
This is why the hard family uses a point-versus-interval testing problem rather than a point-versus-point one.

The main technical step is the following single-arm quantum testing lower bound.
Consider a canonical Bernoulli oracle \(\mathcal O_p\) with unknown mean \(p\).
We prove that any quantum test that queries \(\mathcal O_p\) or its inverse and distinguishes the case \(p=5/12\) from the alternatives \(p\in[7/12,2/3]\), with both error probabilities at most \(\delta\), must make \(\Omega(\log(1/\delta))\) queries in expectation when \(p=5/12\).
We first prove the lower bound for fixed-query tests, which make at
most \(q\) queries.
Parameterizing \(p=\sin^2\theta\), the acceptance probability of such a test is a trigonometric polynomial in \(\theta\) of degree \(O(q)\).
A Remez-type inequality shows that a trigonometric polynomial cannot be bounded away from zero at one angle while being uniformly at most \(\delta\) on an interval of angles unless its degree is \(\Omega(\log(1/\delta))\).
A truncation argument then turns the fixed-query lower bound into the expected-query bound needed for the bandit reduction.

The bandit reduction turns a policy with worst-case regret \(M_T\) into a family of single-arm tests, one for each arm \(i\ge2\).
The test for arm \(i\) decides according to whether arm \(i\) is selected in more than \(T/2\) rounds.
Declaring the wrong side means that at least half of the rounds are spent on suboptimal arms, which incurs \(\Omega(T)\) regret, while the expected regret is at most \(M_T\).
Thus each test has both error probabilities \(O(M_T/T)\).
The testing lower bound, applied with \(\delta=O(M_T/T)\), shows that the test makes \(\Omega(\log(T/M_T))\) queries in expectation under \(\nu^0\).
Since the test queries the unknown oracle only in rounds selecting arm \(i\), and at most once in each such round, this implies \(\mathbb E_{\nu^0}[N_i(T)]=\Omega(\log(T/M_T))\), where \(N_i(T)\) denotes the number of rounds selecting arm \(i\).
Since each such round incurs constant regret under \(\nu^0\), summing over \(i=2,\ldots,K\) gives \(M_T\gtrsim K\log(T/M_T)\), and solving this implicit inequality yields \(M_T=\Omega(K\log(T/K))\).

The next subsections make this argument formal. 
Section~\ref{subsec:single-arm-testing} proves the single-arm quantum testing lower bound.
Section~\ref{subsec:bandit-to-testing} gives the bandit-to-testing reduction.
Section~\ref{subsec:completing-qmab-lb} combines the two to complete the proof of Theorem~\ref{thm:qmab-lb}. 
Finally, Section~\ref{subsec:finite-action-qlb-lower-bound} establishes a lower bound for finite-action quantum linear bandits.

\subsection{A Single-Arm Quantum Testing Lower Bound}
\label{subsec:single-arm-testing}

We now prove the single-arm quantum testing lower bound discussed in the proof overview.

Consider a canonical Bernoulli oracle \(\mathcal O_p\) with unknown mean \(p\in[0,1]\). 
A \emph{single-arm quantum test} starts from a fixed initial state and interleaves queries to \(\mathcal O_p\), \(\mathcal O_p^\dagger\), or their controlled versions with other quantum operations that do not depend on \(p\). 
It may perform intermediate measurements and choose later operations adaptively, but the only \(p\)-dependent operations are the oracle queries.
The test must decide whether the unknown mean is the baseline value or belongs to the alternative interval. 
We write this testing problem as
\[
    H_0:p=\frac{5}{12},
    \qquad
    H_1:p\in\left[\frac{7}{12},\frac{2}{3}\right].
\]
The output \emph{baseline} corresponds to \(H_0\), and the output \emph{alternative} corresponds to \(H_1\).
We also write \(\Pr_p\) and \(\mathbb E_p\) for probabilities and expectations when the oracle is \(\mathcal O_p\).

The proof has two steps.
First, we prove the lower bound for tests with a fixed query budget.
Then a truncation argument converts it to the expected-query lower bound needed for the bandit reduction in Section~\ref{subsec:bandit-to-testing}.
The fixed-query step is a degree argument.
The polynomial representation follows the polynomial method of \citet{beals2001quantum}.
In the Boolean oracle setting, query lower bounds come from approximate-degree lower bounds for Boolean functions.
By contrast, the oracle in our setting is a unitary depending on a continuous parameter, so the degree lower bound comes from the following Remez-type inequality for trigonometric polynomials.

\begin{lemma}[{Remez-type inequality; \citealp[Theorem~1]{ganzburg2012remez}}]
\label{lem:trig-remez}
Let \(J\subseteq\mathbb R\) be an interval of length \(\lambda\in(0,2\pi)\). Every real trigonometric polynomial \(G\) of degree at most \(m\) satisfies
\[
    \sup_{\theta\in\mathbb R}\,\lvert G(\theta)\rvert
    \le
    C_\lambda^{\,m}\,
    \sup_{\theta\in J}\,\lvert G(\theta)\rvert,
    \qquad\text{where}\quad
    C_\lambda=\Big(\tfrac{2}{\sin(\lambda/4)}\Big)^{2}.
\]
\end{lemma}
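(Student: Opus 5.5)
The plan is to derive the inequality from the Bernstein--Walsh/Fekete theory of polynomials on a circular arc, after passing from trigonometric to algebraic polynomials. Since the statement is invariant under translation in \(\theta\), assume \(J=[\theta_0-\lambda/2,\theta_0+\lambda/2]\). Write the real trigonometric polynomial \(G\) of degree at most \(m\) as \(G(\theta)=e^{-im\theta}P(e^{i\theta})\), where \(P\) is a complex algebraic polynomial of degree at most \(n:=2m\). Then \(\lvert G(\theta)\rvert=\lvert P(z)\rvert\) for \(z=e^{i\theta}\). The claim thus becomes: for the arc \(A=\{e^{i\theta}:\theta\in J\}\) and every polynomial \(P\) of degree at most \(n\),
\[
\max_{|z|=1}\lvert P(z)\rvert\le\Big(\tfrac{2}{\sin(\lambda/4)}\Big)^{n}\max_{z\in A}\lvert P(z)\rvert .
\]

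First, I would prove this with an extra polynomial factor via Lagrange interpolation at Fekete points.

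\begin{enumerate}
\item Let \(w_0,\dots,w_n\in A\) be Fekete points, meaning they maximize \(\prod_{j<k}\lvert w_j-w_k\rvert\). Let \(\ell_k(z)=\prod_{j\ne k}(z-w_j)/(w_k-w_j)\) be the associated Lagrange basis.
\item By maximality, \(\lvert\ell_k\rvert\le1\) on \(A\). Hence the monic polynomial \(\omega_k(z)=\prod_{j\ne k}(z-w_j)\) attains its sup-norm on \(A\) at \(w_k\), so \(\prod_{j\ne k}\lvert w_k-w_j\rvert=\|\omega_k\|_A\).
\item The norm of a monic degree-\(n\) polynomial on \(A\) is at least the \(n\)-th power of the logarithmic capacity, so \(\|\omega_k\|_A\ge \operatorname{cap}(A)^n\). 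The capacity of a circular arc of radius \(1\) and angular length \(\lambda\) is the classical value \(\sin(\lambda/4)\).
\item For \(|z|=1\), every factor satisfies \(\lvert z-w_j\rvert\le2\). Therefore \(\lvert\ell_k(z)\rvert\le(2/\sin(\lambda/4))^n\).
\item Writing \(P=\sum_k P(w_k)\ell_k\) gives \(\max_{|z|=1}\lvert P\rvert\le(n+1)(2/\sin(\lambda/4))^n\|P\|_A\).
\end{enumerate}

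Next, I would remove the factor \(n+1\) by the tensor-power trick. Apply the bound to \(P^k\), which has degree \(kn\), and take \(k\)-th roots:
\[
\max_{|z|=1}\lvert P\rvert\le (kn+1)^{1/k}\Big(\tfrac{2}{\sin(\lambda/4)}\Big)^n\|P\|_A .
\]
Letting \(k\to\infty\) sends \((kn+1)^{1/k}\to1\). With \(n=2m\) this is exactly \(C_\lambda^{\,m}\).

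The main obstacle is step 3, the capacity input. I would cite the standard fact that the Chebyshev constant dominates the capacity, so \(\|\omega\|_E\ge\operatorname{cap}(E)^n\) for every monic \(\omega\) of degree \(n\). I would also cite the arc-capacity formula, obtained from the conformal map of the exterior of the arc or from the known value for an arc subtending angle \(\lambda\). Only a lower bound \(\operatorname{cap}(A)\ge\sin(\lambda/4)\) is needed, and I would double-check the normalization of that formula carefully. If it fails, the fallback is to cite \citet[Theorem~1]{ganzburg2012remez} directly, since this lemma is only used as a black box.
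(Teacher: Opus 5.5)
Your proof is correct, but it takes a genuinely different route from the paper. The paper does not prove the inequality; it cites Ganzburg's Theorem~1. That theorem gives \(\|Q\|_{C((-\pi,\pi])}\le\tfrac12\bigl(2/\sin(\lambda/4)\bigr)^{2m}\|Q\|_{C(E)}\) for every measurable \(E\subseteq(-\pi,\pi]\) of measure at least \(\lambda\). The paper's only work is to recentre \(J\) so that \(J-t_0=[-\lambda/2,\lambda/2]\subseteq(-\pi,\pi]\) (this is where \(\lambda<2\pi\) enters), invoke \(2\pi\)-periodicity, drop the factor \(\tfrac12\), and treat \(m=0\) separately.

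You instead reprove the arc case from scratch. The substitution \(z=e^{i\theta}\) gives an algebraic \(P\) of degree \(n=2m\) with \(\lvert G(\theta)\rvert=\lvert P(e^{i\theta})\rvert\). Fekete points on the arc then bound each Lagrange basis polynomial on the unit circle by \(\bigl(2/\operatorname{cap}(A)\bigr)^n\), via \(\|\omega\|_A\ge\operatorname{cap}(A)^n\) for monic \(\omega\). Finally, the tensor-power trick removes the factor \(n+1\). Every step checks out. The normalization you flagged is also right: an arc of the unit circle with central angle \(\lambda\) has capacity exactly \(\sin(\lambda/4)\). As a check, this is \(1\) for the full circle and \(\sim\lambda/4\), the capacity of a segment of length \(\lambda\), as \(\lambda\to0\). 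If \(J\) is not closed, pass to its closure by continuity before forming \(A\).

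Your route is self-contained modulo two classical potential-theoretic facts. It explains the constant as the diameter of the circle over the capacity of the arc, and it needs no bookkeeping about where \(J\) sits relative to \((-\pi,\pi]\) or about \(m=0\). The paper's route is a short citation that keeps the extra factor \(\tfrac12\) and covers arbitrary measurable sets of measure \(\lambda\). Your argument would reach that generality only with an additional input, such as the bound \(\operatorname{cap}(E)\ge\sin(\lambda/4)\) for closed subsets \(E\) of the circle of arc-length measure \(\lambda\). Since Lemma~\ref{lem:fixed-query} only uses an interval, nothing is lost.
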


\begin{proof}
By \citet[Theorem~1]{ganzburg2012remez}, applied with \(m\ge1\), every real trigonometric polynomial \(Q\) of degree at most \(m\) satisfies \(\|Q\|_{C((-\pi,\pi])}\le\tfrac12\big(2/\sin(\lambda/4)\big)^{2m}\|Q\|_{C(E)}\) for every measurable set \(E\subseteq(-\pi,\pi]\) of Lebesgue measure at least \(\lambda\in(0,2\pi]\).
Let \(t_0\) be the center of \(J\) and set \(\widetilde G(\theta):=G(\theta+t_0)\), a real trigonometric polynomial of the same degree.
Since \(\lambda<2\pi\), the interval \(J-t_0=[-\lambda/2,\lambda/2]\) is contained in \((-\pi,\pi]\).
Applying the inequality above to \(\widetilde G\) with \(E=[-\lambda/2,\lambda/2]\), and using the \(2\pi\)-periodicity of \(G\) together with \(\tfrac12\le1\), we obtain \(\sup_{\theta\in\mathbb R}\lvert G(\theta)\rvert=\sup_{\theta\in(-\pi,\pi]}\lvert\widetilde G(\theta)\rvert\le C_\lambda^{\,m}\sup_{\theta\in J}\lvert G(\theta)\rvert\).
The case \(m=0\) holds trivially since \(G\) is then constant.
\end{proof}

Throughout this subsection, we write \(p=\sin^2\theta\) with \(\theta\in[0,\pi/2]\) and abbreviate \(\mathcal O_\theta:=\mathcal O_{\sin^2\theta}\).
We take \(\mathcal O_\theta=R_\theta\oplus I\), where 
\[
    R_\theta=
    \begin{pmatrix}
        \cos\theta & -\sin\theta\\
        \sin\theta & \cos\theta
    \end{pmatrix}
\]
acts on the two-dimensional subspace spanned by \(|0\rangle|0\rangle\) and \(|1\rangle|1\rangle\), and the identity acts on its orthogonal complement. 
Then \(\mathcal O_\theta|0\rangle|0\rangle =\cos\theta\,|0\rangle|0\rangle+\sin\theta\,|1\rangle|1\rangle\),
which matches~\eqref{eq:oracle} with \(p=\sin^2\theta\), and every entry of \(\mathcal O_\theta\) and \(\mathcal O_\theta^\dagger\) is a real trigonometric polynomial in \(\theta\) of degree at most one.

We set
\[
    \theta_0=\arcsin\sqrt{5/12},
    \qquad
    J_1=\Big[\arcsin\sqrt{7/12},\arcsin\sqrt{2/3}\Big],
    \qquad
    \lambda_1=\arcsin\sqrt{2/3}-\arcsin\sqrt{7/12}.
\]
Then \(\theta_0\) is the angle corresponding to \(H_0\), while \(J_1\) is the interval of angles corresponding to \(H_1\).
Both \(\lambda_1>0\) and \(\theta_0\notin J_1\) hold.

With this notation, we now state the fixed-query lower bound.

\begin{lemma}[Fixed-query lower bound]
\label{lem:fixed-query}
There exists a universal constant \(c_0>0\) such that the following holds for every \(\delta\in(0,1/6]\). 
Let a single-arm quantum test make at most \(q\) queries and satisfy
\[
    \Pr_{5/12}(\text{output alternative})\le\tfrac13
    \qquad\text{and}\qquad
    \sup_{p\in[7/12,2/3]}\Pr_{p}(\text{output baseline})\le\delta .
\]
Then \(q\ge c_0\log(1/\delta)\).
\end{lemma}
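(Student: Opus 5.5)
The plan is to run the polynomial method of \citet{beals2001quantum} in the angle variable and then apply the Remez-type inequality of Lemma~\ref{lem:trig-remez} to the acceptance probability of the output \emph{baseline}.

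\textbf{Polynomial representation.} Fix a test making at most \(q\) queries.
\begin{enumerate}[label=(\roman*)]
\item Defer all intermediate measurements and all classically controlled adaptive choices to the end, using ancilla registers. This does not change the output distribution. We may also pad with dummy queries (applied to a register left in a state on which the oracle acts trivially) so that exactly \(q\) oracle applications occur, each possibly controlled.
\item Every entry of \(\mathcal O_\theta\) and \(\mathcal O_\theta^\dagger\) is a real trigonometric polynomial of degree at most one. The same holds for their controlled versions, since the controlled unitary is the block-diagonal matrix \(I\oplus\mathcal O_\theta\), whose extra entries are constants.
\item The other operations are \(\theta\)-independent complex matrices. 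By induction on the number of queries, each amplitude of the final state is therefore a complex combination of products of at most \(q\) degree-one trigonometric polynomials, i.e., a complex trigonometric polynomial of degree at most \(q\).
\item The probability of outputting baseline is \(A(\theta)=\sum_{j}\lvert\alpha_j(\theta)\rvert^2\), a sum over basis states \(j\) in the accepting subspace. Writing \(\lvert\alpha\rvert^2=\alpha\overline\alpha\) shows that \(A\) is a real trigonometric polynomial of degree at most \(2q\).
\end{enumerate}
Importantly, \(A\) is defined for all \(\theta\in\mathbb R\) and satisfies \(0\le A\le1\) there, not just on \([0,\pi/2]\), because \(\mathcal O_\theta\) is a bona fide unitary for every real \(\theta\).

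\textbf{Degree bound via Remez.} The hypotheses give \(A(\theta_0)\ge 2/3\) and \(\sup_{\theta\in J_1}A(\theta)\le\delta\). Apply Lemma~\ref{lem:trig-remez} with \(J=J_1\), \(\lambda=\lambda_1\in(0,2\pi)\), and \(m=2q\):
\[
\tfrac23\le\sup_{\mathbb R}\lvert A\rvert\le C_{\lambda_1}^{2q}\,\delta .
\]
Taking logarithms gives \(2q\log C_{\lambda_1}\ge\log(1/\delta)-\log(3/2)\). Since \(\delta\le1/6\), we have \(\log(3/2)\le\tfrac12\log(1/\delta)\) (because \(\log 6>2\log(3/2)\)). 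Hence
\[
q\ge\frac{\log(1/\delta)}{4\log C_{\lambda_1}},
\]
and we set \(c_0:=1/(4\log C_{\lambda_1})\). This constant is universal because \(\lambda_1\) is a fixed number, and it is positive because \(C_{\lambda_1}>4>1\).

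\textbf{Expected obstacles.} The main care is in the representation step.
\begin{itemize}
\item One must check that adaptivity and intermediate measurements are correctly absorbed. The coherent deferral handles this, but when the number of queries varies across branches, the padding must keep the degree at most \(q\) on every branch. This works because the bound is "at most \(q\)" per branch, and each branch's amplitude stays a polynomial of degree at most \(q\).
\item Controlled queries must not raise the degree; this is immediate from the block form above.
\end{itemize}
The Remez step itself is then a direct invocation of Lemma~\ref{lem:trig-remez}.
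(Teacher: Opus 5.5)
Your proposal is correct and matches the paper's proof essentially step for step. Deferred measurement and padding give amplitudes of degree at most \(q\) and a baseline-output probability of degree at most \(2q\) (the paper's \(G=1-P\)), and Lemma~\ref{lem:trig-remez} on \(J_1\) then yields \(q\ge\log(1/\delta)/(4\log C_{\lambda_1})\) with the same constant \(c_0\); the one cosmetic difference is that the paper writes each padded step as a single operator \(W_t(\theta)=\Pi^{(t)}_0\otimes I+\Pi^{(t)}_+\otimes\mathcal O_\theta+\Pi^{(t)}_-\otimes\mathcal O_\theta^\dagger\), which makes explicit that an adaptive choice among no query, \(\mathcal O_\theta\), and \(\mathcal O_\theta^\dagger\) still adds at most one to the degree per step.
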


\begin{proof}
We first show that the acceptance probability \(P(\theta)=\Pr_{\sin^2\theta}(\text{output alternative})\) is a real trigonometric polynomial in \(\theta\) of degree at most \(2q\).
By the deferred measurement principle~\citep{nielsen2010quantum}, a test that makes at most \(q\) queries is equivalent to a single quantum circuit in which \(q\) controlled-query steps alternate with \(\theta\)-independent unitaries, followed by a single measurement. 
For \(t=1,\ldots,q\), the \(t\)-th controlled-query step applies
\[
    W_t(\theta)
    =
    \Pi^{(t)}_0\otimes I
    +
    \Pi^{(t)}_+\otimes\mathcal O_\theta
    +
    \Pi^{(t)}_-\otimes\mathcal O_\theta^\dagger,
\]
where \(\Pi^{(t)}_0,\Pi^{(t)}_+,\Pi^{(t)}_-\) are orthogonal projections on the work registers that resolve the identity. 
They correspond respectively to making no query, querying \(\mathcal O_\theta\), and querying \(\mathcal O_\theta^\dagger\) at step \(t\). 
Branches of the computation that have already stopped act through the identity component. 
Therefore, a test that makes at most \(q\) queries can be padded to \(q\) controlled-query steps.
Since every entry of \(\mathcal O_\theta\) and \(\mathcal O_\theta^\dagger\) has degree at most one, so does every entry of \(W_t(\theta)\). 
The final state is
\[
    U_qW_q(\theta)U_{q-1}\cdots W_1(\theta)U_0\lvert\psi_0\rangle,
\]
where \(\lvert\psi_0\rangle\) is the fixed initial state of the test and the \(U_t\) are \(\theta\)-independent unitaries.
Therefore, each amplitude of the final state is a trigonometric polynomial in \(\theta\) of degree at most \(q\) with complex coefficients. 
The probability \(P(\theta)\) is the squared norm of the projection of the final state onto the subspace corresponding to the output alternative. 
Hence \(P(\theta)\) is a sum of squared moduli of such amplitudes, and therefore a real trigonometric polynomial of degree at most \(2q\).

We now convert the two error conditions into a degree lower bound via Lemma~\ref{lem:trig-remez}.
Define \(G(\theta)=1-P(\theta)\), the probability of outputting baseline at angle \(\theta\). 
Then \(G\) is also a real trigonometric polynomial of degree at most \(2q\), and \(0\le G\le1\). 
Since \(\sin^2\theta_0=5/12\), the error condition at \(p=5/12\) gives \(P(\theta_0)\le\tfrac13\), and hence \(G(\theta_0)\ge\tfrac23\).
Since \(J_1\) is the set of angles corresponding to \(p\in[7/12,2/3]\), the error condition on the alternative interval gives \(\sup_{\theta\in J_1}\lvert G(\theta)\rvert\le\delta\), using \(G\ge0\).
Applying Lemma~\ref{lem:trig-remez} with \(J=J_1\), we obtain
\[
    \tfrac23
    \le
    G(\theta_0)
    \le
    \sup_{\theta\in\mathbb R}\lvert G(\theta)\rvert
    \le
    C_{\lambda_1}^{\,2q}\delta .
\]
Therefore \(2q\log C_{\lambda_1}\ge\log(2/(3\delta))\). 
Since \(\delta\le1/6\), we have \(\log\big(2/(3\delta)\big)\ge\tfrac12\log(1/\delta)\). 
Moreover, \(C_{\lambda_1}>1\) is a universal constant determined by the interval
\(J_1\).
Combining these, we obtain \(q\ge c_0\log(1/\delta)\) for a universal constant \(c_0=\big(4\log C_{\lambda_1}\big)^{-1}>0\).
\end{proof}

Lemma~\ref{lem:fixed-query} concerns tests with a fixed query budget, while the bandit reduction in Section~\ref{subsec:bandit-to-testing} produces tests whose number of queries is random, because a bandit policy chooses which arm to query adaptively. 
We next derive the corresponding expected-query lower bound by truncation. Here we only require that the number of queries made so far is known during the run, so that the test can be stopped after a prescribed number of queries. This holds for the tests obtained from bandit policies.

\begin{lemma}[Expected-query lower bound]
\label{lem:expected-query}
There exist universal constants \(c_1,\delta_0>0\) such that the
following holds for every \(\delta\in(0,\delta_0]\). Let a single-arm
quantum test make a random number \(N\ge0\) of queries, with the
number of queries made so far known during the run, and suppose that
\[
    \Pr_{5/12}(\text{output alternative})\le\delta
    \qquad\text{and}\qquad
    \sup_{p\in[7/12,2/3]}\Pr_{p}(\text{output baseline})\le\delta .
\]
Then \(\mathbb E_{5/12}[N]\ge c_1\log(1/\delta)\).
\end{lemma}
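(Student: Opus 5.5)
We prove the expected-query bound by truncating the test and then applying Lemma~\ref{lem:fixed-query} to the truncated version. Fix a budget \(q\), to be chosen of order \(\log(1/\delta)\). Define a truncated test that runs the original test. If the original test would make its \((q+1)\)-st query, the truncated test stops and outputs \emph{alternative}; otherwise it returns the original output. This is well defined because the number of queries made so far is known during the run. The truncated test makes at most \(q\) queries, so Lemma~\ref{lem:fixed-query} applies once its two error conditions are verified.

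Under any \(p\in[7/12,2/3]\), the truncated test outputs baseline only if the original test also outputs baseline with \(N\le q\). Hence its baseline-error probability is at most \(\delta\), and this holds uniformly over the interval, with no dependence on the query count under \(H_1\). That is why we truncate toward \emph{alternative}. Under \(p=5/12\), the truncated test outputs alternative only if the original test does, or if \(N>q\). By Markov's inequality,
\[
\Pr_{5/12}(\text{alternative})\le\delta+\frac{\mathbb E_{5/12}[N]}{q+1}.
\]

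Argue by contradiction. Suppose \(\mathbb E_{5/12}[N]< c_1\log(1/\delta)\), and take \(q=\lfloor 6c_1\log(1/\delta)\rfloor\). Once \(\delta\) is small enough that \(q\ge1\) (say \(q+1\ge 6c_1\log(1/\delta)\)), the Markov term is at most \(1/6\). With \(\delta\le\delta_0\le1/6\), the baseline error is at most \(1/3\). Lemma~\ref{lem:fixed-query} then gives \(q\ge c_0\log(1/\delta)\). Choosing \(c_1<c_0/6\) makes \(q\le 6c_1\log(1/\delta)<c_0\log(1/\delta)\), a contradiction. The constant \(\delta_0\) is fixed by requiring \(\delta_0\le1/6\) and \(6c_1\log(1/\delta_0)\ge1\), so that the floor is harmless and \(q\ge1\). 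If \(6c_1\log(1/\delta)<1\), we can simply shrink \(\delta_0\); the degenerate case \(q=0\) causes no trouble anyway, since Lemma~\ref{lem:fixed-query} with \(q=0\) is already a contradiction.

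The main subtle step is checking that the truncated test is a legitimate single-arm test in the sense of Lemma~\ref{lem:fixed-query}, with at most \(q\) queries and only oracle-dependent operations. It is, because stopping is decided by a classical query counter. A second subtlety arises if queries are applied coherently, so that the number of queries is defined only after measurement. In that case we rely on the hypothesis that the count is known during the run and treat \(N\) as a classically recorded counter. Everything else is bookkeeping of constants.
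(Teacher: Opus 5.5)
Your proposal is correct and follows the paper's route: truncate toward \emph{alternative} so the error under \(H_1\) is untouched, bound the extra error under \(p=5/12\) by Markov's inequality, and invoke Lemma~\ref{lem:fixed-query} on the truncated test. The only difference is in fixing the constants. You fix \(q=\lfloor 6c_1\log(1/\delta)\rfloor\) and argue by contradiction, whereas the paper sets \(q=\max\{1,\lceil 6\,\mathbb E_{5/12}[N]\rceil\}\) and argues directly, which forces \(\delta_0\le C_{\lambda_1}^{-8}\) to guarantee \(q\ge2\) and absorb the ceiling; your version needs only \(\delta_0\le 1/6\) (the case \(q=0\) is handled as you note) and allows any \(c_1<c_0/6\).
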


\begin{proof}
Let \(\delta_0:=\min\{1/6,\,C_{\lambda_1}^{-8}\}\) and \(c_1:=c_0/12\), where \(c_0\) is the constant of Lemma~\ref{lem:fixed-query}. 
Let \(m:=\mathbb E_{5/12}[N]\). If \(m=\infty\), there is nothing to prove, so assume \(m<\infty\). 
Set the budget \(q:=\max\{1,\lceil 6m\rceil\}\). 
Consider the truncated test that runs the given test, outputs its answer if it stops within \(q\) queries, and otherwise interrupts it after the \(q\)-th query and outputs alternative. 
Since the number of queries made so far is known during the run, the truncation is well defined, and the truncated test makes at most \(q\) queries.

Under \(p=5/12\), Markov's inequality gives \(\Pr_{5/12}(N>q)\le m/q\le 1/6\), so the truncated test outputs alternative with probability at most \(\delta+1/6\le1/3\). 
Under any \(p\in[7/12,2/3]\), forcing the output alternative on truncation cannot increase the probability of outputting baseline, so this probability remains at most \(\delta\). 
The truncated test is thus a single-arm quantum test making at most \(q\) queries and satisfying the two error conditions of Lemma~\ref{lem:fixed-query}.

Applying Lemma~\ref{lem:fixed-query} to the truncated test gives \(q\ge c_0\log(1/\delta)\). Since
\(\delta\le C_{\lambda_1}^{-8}\) and \(c_0=(4\log C_{\lambda_1})^{-1}\), this implies \(q\ge2\). Hence \(q=\lceil6m\rceil\), so \(q\le6m+1\) and \(m\ge(q-1)/6\). Since \(q\ge2\), we also have \(q-1\ge q/2\). Therefore
\[
    m\ge \frac{q-1}{6}\ge \frac{q}{12}
    \ge \frac{c_0}{12}\log(1/\delta)
    = c_1\log(1/\delta).
\]
\end{proof}

\subsection{From Bandit Regret to Testing}
\label{subsec:bandit-to-testing}
We now carry out the reduction from bandit regret to single-arm testing described in the proof overview. Fix a quantum multi-armed bandit policy \(\pi\), and recall that \(M_T=\sup_{\nu\in\mathcal U_K}R_T^\pi(\nu)\) denotes its worst-case regret over the hard family. 
For an instance \(\nu\in\mathcal U_K\) and an arm \(i\in[K]\), let \(N_i(T)\) denote the number of rounds in which \(\pi\) selects arm \(i\) when run on \(\nu\).
Note that \(N_i(T)\) counts selections rather than oracle queries.
The two coincide for policies that query the selected arm's oracle in every round, and in general the number of queries to arm \(i\) is at most \(N_i(T)\).

\begin{lemma}[Bandit-to-testing reduction]
\label{lem:bandit-to-testing}
Let \(c_1,\delta_0\) be the constants of Lemma~\ref{lem:expected-query}, and suppose \(0<M_T\le\delta_0T/24\). 
Then, for every \(i\in\{2,\ldots,K\}\),
\[
    \mathbb E_{\nu^0}\big[N_i(T)\big]
    \ge
    c_1\log\!\Big(\frac{T}{24\,M_T}\Big).
\]
\end{lemma}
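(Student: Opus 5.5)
Fix \(i\in\{2,\ldots,K\}\). The plan is to build a single-arm quantum test \(\mathcal T_i\) from \(\pi\), bound both of its error probabilities by \(24M_T/T\), and then apply Lemma~\ref{lem:expected-query} with \(\delta:=24M_T/T\). The hypothesis \(M_T\le\delta_0T/24\) is exactly what guarantees \(\delta\le\delta_0\).

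\textbf{Construction of the test.} Given an oracle \(\mathcal O_p\), the test \(\mathcal T_i\) runs \(\pi\) for \(T\) rounds and routes every query to arm \(i\) (to \(\mathcal O\) or \(\mathcal O^\dagger\)) through the unknown oracle. It simulates the other arms itself with known canonical oracles: \(\mathcal O_{1/2}\) for arm \(1\) and \(\mathcal O_{5/12}\) for every arm \(j\notin\{1,i\}\). All of these simulated operations are independent of \(p\). Arm selections are classical choices, or can be measured without changing their distribution, so the test always knows how many queries to the unknown oracle it has made, as Lemma~\ref{lem:expected-query} requires. At the end, \(\mathcal T_i\) outputs \emph{alternative} if \(N_i(T)>T/2\) and \emph{baseline} otherwise. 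When \(p=5/12\), the joint law of the run is that of \(\pi\) on \(\nu^0\). When \(p=q\in[7/12,2/3]\), it is that of \(\pi\) on \(\nu^{i,q}\). The number of queries \(N\) to the unknown oracle satisfies \(N\le N_i(T)\).

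\textbf{Error bounds.} Under \(\nu^0\), every round on arm \(i\) costs regret \(1/12\). On the event \(N_i(T)>T/2\) the regret is therefore at least \(T/24\), and Markov's inequality gives \(\Pr_{5/12}(\text{alternative})\le 24M_T/T\). Under \(\nu^{i,q}\), every round on an arm other than \(i\) costs at least \(1/12\). On the event \(N_i(T)\le T/2\), at least \(T/2\) rounds are spent elsewhere, so the regret is at least \(T/24\), and the same bound holds uniformly in \(q\).

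\textbf{Conclusion.} Lemma~\ref{lem:expected-query} now yields \(\mathbb E_{\nu^0}[N_i(T)]\ge\mathbb E_{5/12}[N]\ge c_1\log(T/(24M_T))\).

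The only delicate point is checking that the test is a legitimate single-arm test in the sense of Lemma~\ref{lem:expected-query}. This means two things: the simulation of the other arms must not depend on \(p\), and the running query count must be well defined even when the policy's choice of arm is decided by quantum measurements. I would handle the second point by noting that, in the model, the policy's selection in each round is a classical choice of which oracle to apply. The count of applications to arm \(i\) is therefore a classical quantity available at run time, and Lemma~\ref{lem:expected-query} permits adaptive intermediate measurements.
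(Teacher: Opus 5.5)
Your proposal is correct and follows essentially the same route as the paper. The shared ingredients are the simulation test with the known oracles \(\mathcal O_{1/2}\) and \(\mathcal O_{5/12}\) for the other arms, the decision rule \(\{N_i(T)>T/2\}\), both error probabilities bounded by \(24M_T/T\), and Lemma~\ref{lem:expected-query} applied with \(\delta=24M_T/T\). The only cosmetic difference is that you apply Markov's inequality to the realized regret \(R(T)\), whereas the paper applies it to \(N_i(T)\) and to \(T-N_i(T)\); both give the same bound.
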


\begin{proof}
Fix \(i\in\{2,\ldots,K\}\), and let \(A_i:=\{N_i(T)>T/2\}\).

We first bound the two error probabilities that will arise from the testing rule.

\begin{itemize}
    \item Under \(\nu^0\), arm \(i\) is suboptimal with gap \(1/12\), so each round selecting arm \(i\) incurs regret \(1/12\). Using \(\mathbb E_{\nu^0}[N_i(T)]\ge \frac{T}{2}\Pr_{\nu^0}(N_i(T)>T/2)\), we get
\[
M_T \ge R_T^\pi(\nu^0)
\ge \tfrac{1}{12}\,\mathbb E_{\nu^0}[N_i(T)]
\ge \tfrac{1}{12}\cdot\tfrac{T}{2}\,\Pr_{\nu^0}(A_i)
= \tfrac{T}{24}\,\Pr_{\nu^0}(A_i),
\]
hence \(\Pr_{\nu^0}(A_i)\le 24M_T/T\).
\item Under any instance \(\nu\in\mathcal U_i\), arm \(i\) is optimal and every other arm has gap at least \(1/12\).
Since the policy selects exactly one arm per round, \(\sum_{j\ne i}N_j(T)=T-N_i(T)\), and on the complement \(A_i^c\) we have \(T-N_i(T)\ge T/2\).
Hence
\[
M_T
\ge R_T^\pi(\nu)
\ge \tfrac{1}{12}\,\mathbb E_{\nu}\big[T-N_i(T)\big]
\ge \tfrac{1}{12}\cdot\tfrac{T}{2}\,\Pr_{\nu}(A_i^c)
= \tfrac{T}{24}\,\Pr_{\nu}(A_i^c),
\]
hence \(\Pr_{\nu}(A_i^c)\le 24M_T/T\).
\end{itemize}

We now convert the policy into a single-arm quantum test whose unknown oracle represents arm \(i\). The instance \(\nu^0\) and the instances in \(\mathcal U_i\) differ only in the mean of arm \(i\), so the oracles of all arms \(j\ne i\) are fixed and known. 
The test simulates the policy round by round. Whenever the policy selects an arm \(j\ne i\) and queries its oracle, the test applies the known oracle of arm \(j\) or its inverse, without querying the unknown oracle.
Whenever the policy selects arm \(i\) and queries its oracle, the test queries the unknown oracle or its inverse, matching the operation chosen by the policy.
In rounds where the policy makes no query, the test makes none.
After \(T\) rounds, the test uses \(A_i\) as its decision rule, outputting alternative if \(A_i\) occurs and baseline otherwise.

By construction, running the test with unknown oracle \(\mathcal O_p\) reproduces the run of the policy on the instance in which arm \(i\) has mean \(p\). If \(p=5/12\), this is the run on \(\nu^0\). 
If \(p\in[7/12,2/3]\), it is the run on the corresponding instance in \(\mathcal U_i\). Therefore the error probability under \(H_0\) is \(\Pr_{\nu^0}(A_i)\), and the worst-case error probability under \(H_1\) is \(\sup_{\nu\in\mathcal U_i}\Pr_\nu(A_i^c)\). Both are at most \(24M_T/T\). 
The number of queries made by the test under \(H_0\) is at most the number of rounds in which the simulated policy selects arm \(i\).
The latter has the same distribution as \(N_i(T)\) under \(\nu^0\).
The number of queries made so far can be tracked during the simulation, because each round contains either no oracle call or one explicitly specified call to the selected arm's oracle or its inverse.

Since \(0<24M_T/T\le\delta_0\), we may apply Lemma~\ref{lem:expected-query} with \(\delta=24M_T/T\) to the test above.
Its expected number of queries under \(H_0\) is therefore at least \(c_1\log\frac{T}{24M_T}\), and since this query count is at most the number of rounds selecting arm \(i\), we conclude that
\[
    \mathbb E_{\nu^0}[N_i(T)]
    \ge
    c_1\log\frac{T}{24M_T}.
\]
\end{proof}

\subsection{Completing the QMAB Lower Bound}
\label{subsec:completing-qmab-lb}

We now prove Theorem~\ref{thm:qmab-lb} by applying Lemma~\ref{lem:bandit-to-testing} across arms and evaluating the regret under the baseline instance.

\begin{proof}[Proof of Theorem~\ref{thm:qmab-lb}]
Fix a quantum multi-armed bandit policy \(\pi\), and recall that \(M_T=\sup_{\nu\in\mathcal U_K}R_T^\pi(\nu)\) denotes its worst-case regret over the hard family \(\mathcal U_K\).
Since \(\mathcal U_K\subseteq\mathcal B_K\), it suffices to find an instance in \(\mathcal U_K\) with the claimed regret lower bound.

Let \(c_1,\delta_0\) be the constants of Lemma~\ref{lem:expected-query}, and assume \(T\ge CK\) for a universal constant \(C\) chosen below.
For convenience, set \(x:=T/K\). We prove that \(M_T\ge\bar cK\log x\), where
\(\bar c:=\min\{\delta_0/24,\,c_1/96\}\), distinguishing two cases according to the size of \(M_T\).

\textbf{Case 1:} Suppose \(M_T>\delta_0T/24\), so that the regret is already of
linear order in \(T\). Then
\[
    M_T > \frac{\delta_0}{24}\,Kx
    \ge \frac{\delta_0}{24}\,K\log x
    \ge \bar c\,K\log x ,
\]
using \(\log x\le x\) and \(\bar c\le\delta_0/24\).

\textbf{Case 2:} Suppose \(M_T\le\delta_0T/24\), the regime of Lemma~\ref{lem:bandit-to-testing}.
We first note that \(M_T>0\), which is needed to apply the lemma.
Let \(a_1\) be the first arm selected by the policy.
Consider the alternative instance \(\nu^{2,7/12}\in \mathcal U_2\), in which arm \(2\) has mean \(7/12\).
When \(a_1\) is selected, no query has been made yet, so \(a_1\) has the same distribution under \(\nu^0\) and under \(\nu^{2,7/12}\).
The first round regret is at least \((1/12)\Pr(a_1\ne1)\) under \(\nu^0\), and at least \((1/12)\Pr(a_1\ne2)\) under \(\nu^{2,7/12}\).
Since \(\Pr(a_1\ne1)+\Pr(a_1\ne2)\ge1\), one of these two instances has positive expected regret, and hence \(M_T>0\).

Since \(0<M_T\le\delta_0T/24\), Lemma~\ref{lem:bandit-to-testing}
gives \(\mathbb E_{\nu^0}[N_i(T)]\ge c_1\log\frac{T}{24M_T}\)
for every \(i\in\{2,\ldots,K\}\).
Moreover, under \(\nu^0\), every arm \(i\ge2\) has gap \(1/12\), so each round selecting an arm \(i\ge2\) incurs regret \(1/12\).
Summing over arms and evaluating the regret under \(\nu^0\), we get
\[
M_T \ge R_T^\pi(\nu^0)
= \frac{1}{12}\sum_{i=2}^{K}\mathbb E_{\nu^0}\big[N_i(T)\big]
\ge \frac{c_1(K-1)}{12}\,\log\frac{T}{24M_T}
\ge \frac{c_1}{24}\,K\log\frac{T}{24M_T},
\]
using \(K-1\ge K/2\) for \(K\ge2\).

We now show that \(M_T\ge(c_1/96)K\log x\).
Suppose for contradiction that \(M_T<(c_1/96)K\log x\).
Let \(C\ge e\) be a universal constant such that \(\sqrt{x}\ge(c_1/4)\log x\) for all \(x\ge C\). Then, using \(T=Kx\),
\[
    \frac{T}{24M_T}
    >
    \frac{Kx}{24\cdot(c_1/96)K\log x}
    =
    \frac{4x}{c_1\log x}
    \ge
    \sqrt{x}.
\]
Substituting this bound into the inequality \(M_T\ge(c_1/24)K\log(T/(24M_T))\) yields
\[
    M_T
    \ge
    \frac{c_1}{24}\,K\log\sqrt{x}
    =
    \frac{c_1}{48}\,K\log x
    \ge
    2\cdot\frac{c_1}{96}\,K\log x,
\]
contradicting the assumption.
Hence \(M_T\ge(c_1/96)K\log x\), and since \(\bar c\le c_1/96\), we conclude that \(M_T\ge\bar cK\log x\) in this case as well.

It remains to pass from \(M_T\) to a single instance. 
Since \(M_T\ge\bar cK\log x>0\) and \(M_T\) is the supremum of \(R_T^\pi(\nu)\) over \(\nu\in\mathcal U_K\), there exists an
instance \(\nu\in\mathcal U_K\) with \(R_T^\pi(\nu)\ge M_T/2\ge(\bar c/2)K\log x\).
Since \(x=T/K\), Theorem~\ref{thm:qmab-lb} holds with \(c:=\bar c/2\).
\end{proof}

\subsection{A Finite-Action QLB Lower Bound}
\label{subsec:finite-action-qlb-lower-bound}

We now prove a lower bound for finite-action quantum linear bandits by realizing the QMAB hard family with an action set of only \(d\) actions.
Embedding the arms directly as the standard basis vectors would require every coordinate of \(\theta\) to carry a constant baseline mean, forcing \(\|\theta\|_2=\Theta(\sqrt d)\).
We instead let the actions \(x_i\), \(i\ge2\), share a common reference direction, so that one coordinate of \(\theta\) provides their common baseline mean while keeping \(\|\theta\|_2=O(1)\).
The orthogonal component of \(x_i\) then encodes the alternatives under which \(x_i\) is optimal.

\begin{theorem}[Finite-action QLB lower bound]
\label{thm:qlb-lb}
There exist universal constants \(c,C>0\) such that for every \(d\ge2\) and \(T\ge Cd\), there exist an action set
\(\mathcal A\subseteq\mathbb R^d\), \(|\mathcal A|=d\), with \(\|x\|_2\le1\) for all \(x\in\mathcal A\), and a parameter set
\(\Theta\subseteq\{\theta\in\mathbb R^d:\|\theta\|_2\le1,\ x^\top\theta\in[0,1]\ \text{for all}\ x\in\mathcal A\}\) such that
for any quantum linear bandit policy \(\pi\) over \(T\) rounds, there exists \(\theta\in\Theta\) with
\[
    R_T^\pi(\theta)\ge cd\log\frac{T}{d}.
\]
\end{theorem}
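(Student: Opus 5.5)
We will realize the QMAB hard family \(\mathcal U_d\) (with \(K=d\) arms) as a linear bandit and then reuse the proof of Theorem~\ref{thm:qmab-lb} essentially verbatim. The key observation is that the QMAB argument used only three things: the vector of arm means in each instance of \(\mathcal U_K\), the fact that each arm is accessed through a canonical Bernoulli oracle \(\mathcal O_{\mu(i)}\), and the fact that \(\nu^0\) and \(\mathcal U_i\) differ only in the oracle of arm \(i\). Therefore it suffices to exhibit \(\mathcal A=\{x_1,\dots,x_d\}\) and parameters \(\theta^0,\theta^{i,q}\in\Theta\) whose induced mean vectors \((x_j^\top\theta)_j\) coincide with those of \(\nu^0\) and \(\nu^{i,q}\). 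Each action is then assigned the canonical Bernoulli oracle with that mean.

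For the construction, we take \(x_1=e_1\) and, for \(i\ge2\), \(x_i=\alpha e_2+\beta e_i'\), where the \(e_i'\) are \(d-1\) orthonormal directions. To fit \(d\) actions in \(\mathbb R^d\), we use \(e_1\), a reference direction \(u\), and the remaining directions. Concretely, we let \(x_i=\tfrac{1}{\sqrt2}(u+f_i)\) with \(f_2,\dots,f_d\) orthonormal and orthogonal to \(u\) and \(e_1\). This requires \(d+1\) coordinates, so instead we put \(u\) inside the span of \(e_1\): set \(x_1=e_1\) and \(x_i=a e_1+b e_i\) for \(i\ge2\), with \(a^2+b^2\le1\). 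We choose
\[
\theta^0=\tfrac12 e_1+\gamma\sum_{j\ge2}0\cdot e_j,
\]
so that \(x_i^\top\theta^0=a/2\). We set \(a=5/6\) to get mean \(5/12\), and take \(b=1/2\), so that \(a^2+b^2=25/36+9/36<1\). For the alternatives, \(\theta^{i,q}=\tfrac12e_1+\tfrac{q-5/12}{b}e_i\). This gives \(x_i^\top\theta^{i,q}=q\), while \(x_1^\top\theta^{i,q}=1/2\) and \(x_j^\top\theta^{i,q}=5/12\) for \(j\ne i\). The norm is \(\|\theta^{i,q}\|_2^2\le\tfrac14+(\tfrac{1/4}{1/2})^2=\tfrac12\le1\), and all means lie in \([0,1]\). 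Hence the mean vectors exactly reproduce \(\nu^0\) and \(\nu^{i,q}\) for \(q\in[7/12,2/3]\), and \(\|x\|_2\le1\) holds for every action. The reason to route the baseline through the \(e_1\) coordinate is precisely to keep \(\|\theta\|_2=O(1)\), as the text preceding the theorem explains.

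Given this embedding, any quantum linear bandit policy on \(\mathcal A\) is, after identifying action \(x_i\) with arm \(i\), a QMAB policy on the corresponding instances of \(\mathcal B_d\). The interaction protocols are identical, the regret is identical because the gaps coincide, and the oracles are the same canonical Bernoulli oracles. We then rerun Lemma~\ref{lem:bandit-to-testing} and the two-case argument from the proof of Theorem~\ref{thm:qmab-lb} with \(K=d\). Alternatively, we invoke that proof directly, noting that it only ever uses instances from \(\mathcal U_K\). This yields \(\sup_{\theta\in\Theta}R_T^\pi(\theta)\ge c\,d\log(T/d)\) for \(T\ge Cd\), and we pass to a single \(\theta\) by the same halving step.

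The main obstacle is purely the geometric check: verifying simultaneously the norm constraints, the range constraints \(x^\top\theta\in[0,1]\), and the requirement that changing arm \(i\)'s mean does not disturb the other arms. The latter holds because only the \(e_i\) coordinate moves, and that coordinate appears only in \(x_i\). One subtlety worth stating explicitly is that the linear bandit model permits arbitrary oracles of the form~\eqref{eq:oracle} with the given mean. We fix the canonical Bernoulli ones, which is legitimate because the theorem asks only for the existence of hard instances.
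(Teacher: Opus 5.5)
Your proof is correct and follows the paper's approach. You embed the \(d\)-armed hard family \(\mathcal U_d\) via \(x_1=e_1\), \(x_i=\tfrac56e_1+be_i\), \(\theta^0=\tfrac12e_1\) and \(\theta^{i,q}=\tfrac12e_1+\tfrac{q-5/12}{b}e_i\), verify the norm and mean constraints, and apply the proof of Theorem~\ref{thm:qmab-lb} (which uses only instances in \(\mathcal U_K\)) to the induced QMAB policy, exactly as the paper does. The only differences are cosmetic. You take \(b=1/2\), giving actions of norm \(\sqrt{34}/6<1\), where the paper takes \(b=\sqrt{11}/6\) so that every action is a unit vector; this is immaterial. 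You should also delete the abandoned constructions at the start of your second paragraph and the vacuous \(\gamma\sum_{j\ge2}0\cdot e_j\) term.
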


\begin{proof}
Let \(c, C\) be the constants of Theorem~\ref{thm:qmab-lb}. Define the action set \(\mathcal A:=\{x_1,\ldots,x_d\}\) by
\[
    x_1:=e_1,
    \qquad
    x_i:=\tfrac56e_1+\tfrac{\sqrt{11}}6\,e_i,
    \quad i=2,\ldots,d,
\]
so that \(\|x_1\|_2=1\) and \(\|x_i\|_2^2=25/36+11/36=1\).
Define \(\theta^0:=\tfrac12e_1\), and for \(i\in\{2,\ldots,d\}\) and \(q\in[7/12,2/3]\),
\[
    \theta^{(i,q)} :=
    \tfrac12e_1+\tfrac6{\sqrt{11}}\Big(q-\tfrac5{12}\Big)e_i .
\]
Let \(\Theta:=\{\theta^0\}\cup\{\theta^{(i,q)}:2\le i\le d,\ q\in[7/12,2/3]\}\). For each \(\theta\in\Theta\), the corresponding instance uses canonical Bernoulli oracles, with \(\mathcal O_x=\mathcal O_{x^\top\theta}\) for every \(x\in\mathcal A\).

We now verify the parameter and mean constraints. For the parameter norm, \(\|\theta^0\|_2=1/2\), and since \(q-5/12\le1/4\), we have \(\|\theta^{(i,q)}\|_2^2 =1/4+(36/11)(q-5/12)^2 \le 1/4+(36/11)\cdot(1/16) =5/11<1\).
For the means, \(x_1^\top\theta^0=1/2\) and \(x_i^\top\theta^0=(5/6)\cdot(1/2)=5/12\) for \(i\ge2\).
Under \(\theta^{(i,q)}\), the coefficients cancel to give \(x_i^\top\theta^{(i,q)}=5/12+\tfrac{\sqrt{11}}6\cdot\frac{6}{\sqrt{11}}\,(q-5/12)=q\),
while \(x_1^\top\theta^{(i,q)}=1/2\) and \(x_j^\top\theta^{(i,q)}=5/12\) for \(j\notin\{1,i\}\).
In particular, all means lie in \(\{5/12,1/2\}\cup[7/12,2/3]\subseteq[0,1]\).

Consequently, as \(\theta\) ranges over \(\Theta\), the induced action-mean vectors follow exactly the pattern of the hard family \(\mathcal U_d\subseteq\mathcal B_d\) of Section~\ref{subsec:hard-instances-overview}, instantiated with \(K=d\) arms. The parameter \(\theta^0\) induces the baseline instance \(\nu^0\), and \(\theta^{(i,q)}\) induces \(\nu^{i,q}\).
Since a quantum linear bandit policy on the finite action set \(\mathcal A\) selects one action per round and applies its reward oracle or the inverse at most once, it induces a quantum multi-armed bandit policy on the corresponding \(d\)-armed instances, with the same oracle interactions and the same regret. 
Applying the proof of Theorem~\ref{thm:qmab-lb} to this induced \(d\)-armed hard family produces an instance of \(\mathcal U_d\) with regret at least \(cd\log(T/d)\).
The corresponding parameter \(\theta\in\Theta\) then gives \(R_T^\pi(\theta)\ge cd\log(T/d)\).
\end{proof}

\begin{remark}
\label{rem:hard-instances}
The hard instances used in the lower bound proofs have Bernoulli rewards, so both lower bounds also hold under the bounded-variance assumption of \citet{wan2023quantum}.
Moreover, the QLB hard family uses a fixed action set of size \(K=d\), so the QLB lower bound also applies to general action sets.
On the other hand, the reduction uses that arms are selected classically in each round, so the lower bounds do not directly extend to models permitting superposed arm selection, such as the \(m\)-arm oracles of \citet{wang2025best} or the fully coherent access model of \citet{wang2021quantum}.
\end{remark}

\section{Finite-Action Quantum Linear Bandits}
\label{sec:finite-action-qlb}

Section~\ref{subsec:finite-action-qlb-lower-bound} shows that finite-action quantum linear bandits suffer regret \(\Omega(d\log(T/d))\).
Complementing this lower bound, we give \LVGElim, a phased elimination algorithm for finite action sets. 
When \(K=\operatorname{poly}(d)\), \LVGElim has regret nearly linear in \(d\), matching the lower bound up to polylogarithmic factors. 
This improves the \(d^2\) dependence of the QLinUCB algorithm of \citet{wan2023quantum}.

The algorithm uses a small-support approximate \(G\)-optimal design on the current active set in each phase. 
The \(G\)-optimality condition controls the error amplification when the support estimates are extrapolated to all active actions, while the small support keeps the number of queried actions proportional to the dimension.
We estimate the support means using the low-bias low-variance quantum mean estimator from Lemma~\ref{lem:lvqme}, with a query allocation matched to the design
weights, so that the reconstruction error aggregates through variance. 
A direct high-probability estimator such as QMC would instead aggregate worst-case absolute errors and lose an extra \(\sqrt d\) factor. 
We return to this comparison at the end of the section.

\subsection{Small-Support Approximate \ensuremath{G}-Optimal Design}
\label{subsec:g-optimal-design}

Classical phased elimination algorithms for finite-action linear bandits use \(G\)-optimal designs to estimate all active actions from samples drawn on a design support~\citep{lattimore2020bandit}.
In the classical analysis, estimating the active actions to accuracy \(\epsilon\) in a phase costs \(\epsilon^{-2}\) samples in the leading term, so the size of the design support only contributes a lower-order rounding overhead.
Quantum queries change this accounting.
The leading cost of quantum mean estimation is of order \(\epsilon^{-1}\) rather than \(\epsilon^{-2}\), but the estimator must be run separately on each support action, with a constant minimum number of queries per call.
The size of the support thus enters the query bound directly, rather than only as a rounding overhead.
To obtain regret nearly linear in \(d\), the design must therefore have nearly linear support while still controlling the error amplification of extrapolating from the support to all active actions.
The following lemma provides such a design with support size \(O(r)\) and \(G\)-value \(O(r)\), where \(r\) is the dimension of the span of the active set.

\begin{lemma}[Small-support approximate \(G\)-optimal design]
\label{lem:small-support-g-optimal-design}
There exist universal constants \(c_s,C_G>0\) such that the following holds.
Let \(\mathcal X\subseteq\mathbb R^d\) be finite with \(r=\dim(\operatorname{span}(\mathcal X))\ge1\).
One can compute a subset \(S\subseteq\mathcal X\) and a distribution \(\pi\) supported on \(S\), with \(|S|\le c_s r\), such that the design information matrix \(M=\sum_{a\in S}\pi(a)aa^\top\) satisfies
\[
    \operatorname{Range}(M)=\operatorname{span}(\mathcal X),
    \qquad
    \max_{x\in\mathcal X}x^\top M^+x\le C_G r.
\]
For each \(x\in\mathcal X\), define its design-induced prediction weights by \(\alpha_a(x)=\pi(a)a^\top M^+x\) for \(a\in S\). 
Then we have
\[
    x=\sum_{a\in S}\alpha_a(x)a,
    \qquad
    \sum_{a\in S}\frac{\alpha_a(x)^2}{\pi(a)}
    =
    x^\top M^+x
    \le C_G r.
\]
\end{lemma}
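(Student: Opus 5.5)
First reduce to the full-rank case: pick an orthonormal basis $U\in\mathbb R^{d\times r}$ of $\operatorname{span}(\mathcal X)$ and work with the coordinates $\widetilde{\mathcal X}=\{U^\top x:x\in\mathcal X\}\subseteq\mathbb R^r$, which span $\mathbb R^r$. Since $x=UU^\top x$ for $x\in\mathcal X$, a design $\pi$ on $S\subseteq\mathcal X$ has $M=U\widetilde M U^\top$ with $\widetilde M=\sum_{a}\pi(a)(U^\top a)(U^\top a)^\top$. If $\widetilde M$ is invertible, then $M^+=U\widetilde M^{-1}U^\top$, $\operatorname{Range}(M)=\operatorname{Range}(U)=\operatorname{span}(\mathcal X)$, and $x^\top M^+x=(U^\top x)^\top\widetilde M^{-1}(U^\top x)$. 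Hence it suffices to find, in $\mathbb R^r$, a design with support $O(r)$, $\widetilde M\succ0$, and $\max_x \tilde x^\top\widetilde M^{-1}\tilde x\le C_G r$.

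For the design itself, I will start from the Kiefer--Wolfowitz theorem, which gives a distribution $\pi^\star$ on $\widetilde{\mathcal X}$ with $\max_x \tilde x^\top M(\pi^\star)^{-1}\tilde x=r$; a Frank--Wolfe/Khachiyan iteration computes a version with value $\le 2r$. Its support may be as large as $O(r^2)$ or $O(r\log\log r)$, so the main step is rounding it to support $O(r)$. Here I invoke the near-optimal rounding of \citet{allen2021near}, stated for the E/A/D/G-type criteria in the regime where the budget $k$ is at least $C r$. Apply it to the continuous relaxation with budget $k=c_s r$ to obtain a multiset/subset $S$ and weights $w$ with $\sum_a w_a aa^\top\succeq (1-\epsilon)k\,M(\pi^\star)$ (their spectral-sparsification-type guarantee via regret minimization). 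Normalizing $\pi=w/k$ yields $\widetilde M\succeq(1-\epsilon)M(\pi^\star)$, hence $\widetilde M^{-1}\preceq(1-\epsilon)^{-1}M(\pi^\star)^{-1}$, and the $G$-value is at most $2r/(1-\epsilon)\le C_G r$. Positive definiteness follows from this domination, which gives the range claim. This rounding step is the main obstacle: one has to make sure the cited guarantee is a Loewner-order (not only objective-value) bound, or otherwise pass through it with a constant loss. As a fallback, use the Batson--Spielman--Srivastava sparsification of $\sum_a\pi^\star(a)M^{-1/2}aa^\top M^{-1/2}=I_r$ to $O(r)$ terms with a constant spectral approximation.

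The identities are then routine linear algebra. By the definition of the weights, $\sum_{a\in S}\alpha_a(x)a=\bigl(\sum_a\pi(a)aa^\top\bigr)M^+x=MM^+x$, which equals $x$ because $MM^+$ is the orthogonal projector onto $\operatorname{Range}(M)=\operatorname{span}(\mathcal X)\ni x$. Next, $\sum_a\alpha_a(x)^2/\pi(a)=\sum_a\pi(a)(a^\top M^+x)^2=x^\top M^+MM^+x=x^\top M^+x$, using $M^+MM^+=M^+$. The bound $\le C_G r$ follows from the previous paragraph.
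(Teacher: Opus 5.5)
Your proposal is correct and follows essentially the paper's route: pass to coordinates on $\operatorname{span}(\mathcal X)$, compute a constant-factor approximate continuous $G$-optimal design (the paper uses the mirror-descent relaxation of \citet{allen2021near} where you use Frank--Wolfe), round it with their rounding theorem at budget $\Theta(r)$ (the paper makes your ``multiset'' step explicit by copying each point $m$ times so the fractional weights lie in $[0,1]$), normalize, and derive the identities from $MM^+x=x$ and $M^+MM^+=M^+$. The obstacle you flag is not really one, since the theorem's objective-value guarantee applied to $f_G$ already bounds the $G$-value directly; but do normalize by the actual rounded mass $\widehat m\le k$ rather than by $k$, so that $\pi$ is a genuine distribution, which can only improve the $G$-value bound.
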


\begin{proof}[Proof sketch]
The proof has two steps: we first compute an approximately optimal continuous design, and then round it to an integer design of total mass \(m=\Theta(r)\), whose normalization is the desired distribution \(\pi\). 
Since rounding preserves the \(G\)-value up to a constant factor, and the normalization multiplies back at most the factor \(m\) removed by the scaling, the final \(G\)-value remains \(O(r)\).

For the first step, let \(V=\operatorname{span}(\mathcal X)\) and work in coordinates on \(V\), so that the design problem is \(r\)-dimensional and full rank. In this space, the Kiefer--Wolfowitz equivalence theorem \citep{kiefer1960equivalence} shows that the continuous \(G\)-optimal value is \(r\), and the continuous relaxation method of \citet[Section~3]{allen2021near} computes a distribution \(w\) whose \(G\)-value is \(O(r)\).

For the second step, we round the scaled design \(mw\), whose information matrix is \(m\) times that of \(w\) and whose \(G\)-value is therefore \(O(r/m)\).
To match the constraints of the rounding theorem, we replace each design point by \(m\) identical copies and assign weight \(w(x)\) to each copy. This copied-list representation has the same total mass \(m\) and the same information matrix \(mM_w\) as \(mw\), but all coordinate weights lie in \([0,1]\). 
Applying the rounding theorem of \citet[Theorem~2.1]{allen2021near} with budget \(m\) then yields integer multiplicities of total mass \(\widehat m\le m\) whose \(G\)-value is at most \(O(r/m)\). Merging identical copies and normalizing by \(\widehat m\) gives a distribution \(\pi\) supported on a set \(S\) with \(|S|\le\widehat m=O(r)\), and since \(M\) is the rounded information matrix divided by \(\widehat m\), the \(G\)-value is multiplied by \(\widehat m\le m\).
Thus the resulting matrix \(M\) satisfies \(\max_{x\in\mathcal X}x^\top M^+x \le\widehat m\cdot O(r/m)=O(r)\).

Finally, the rounded design has finite \(G\)-value, so the selected points span \(V\) and hence \(\operatorname{Range}(M)=\operatorname{span}(\mathcal X)\). The prediction-weight identities then follow from \(MM^+x=x\), valid for \(x\in\mathcal X\subseteq\operatorname{Range}(M)\), together with \(M^+MM^+=M^+\). The full proof is given in Appendix~\ref{app:g-optimal-design}.
\end{proof}

The design can be computed in polynomial time.
The running-time accounting is given in Appendix~\ref{app:g-optimal-design}.
We refer to a triple \((S,\pi,M)\) satisfying Lemma~\ref{lem:small-support-g-optimal-design} as a \emph{design}.
The associated prediction weights express the mean of every active action as a weighted combination of the means on the support, with the resulting error amplification controlled by the \(G\)-value.
We next show how to estimate the support means with a query allocation matched to the design weights.

\subsection{From LVQME to High-Probability Design Estimates}
\label{subsec:lvqme-to-hp-design-estimates}

The estimator \(\LVQME\) controls the bias and the variance of its output, rather than directly providing a high-probability error bound.
We therefore proceed in two steps: we first obtain estimates that are accurate with constant probability for each fixed action, and then amplify them to estimates that hold simultaneously over the active set with high probability, as required by the elimination step.

\paragraph{The estimation procedure.}
Fix an active set \(\mathcal A\subseteq\mathbb R^d\) and a design \((S,\pi,M)\) satisfying Lemma~\ref{lem:small-support-g-optimal-design}, and let \(r=\dim(\operatorname{span}(\mathcal A))\ge1\).
Given an accuracy parameter \(\epsilon\in(0,1]\), we set
\[
    t_a:=\max\left\{4,\left\lceil
    c_t\frac{\sqrt{r\pi(a)}}{\epsilon}\right\rceil\right\}
    \quad(a\in S),
    \qquad
    \zeta:=c_\zeta\frac{\epsilon^2}{r},
\]
where \(c_t>0\) is a sufficiently large universal constant and \(c_\zeta>0\) is a sufficiently small universal constant, which are to be chosen in the proof of Lemma~\ref{lem:lvqme-design-estimates}.
The value \(t_a\) allocates more estimation effort to support actions with larger design weight \(\pi(a)\).
Let
\begin{equation}
\label{eq:lvqme-threshold}
    U:=\left\lceil
    6\,C_Q\sum_{a\in S}
    t_a\log\log(t_a)\log\frac{t_a}{\zeta}
    \right\rceil,
\end{equation}
where \(C_Q\) is the query constant in Lemma~\ref{lem:lvqme}.
The procedure runs one independent call \(\LVQME(\mathcal O_a,t_a,\zeta)\) for each \(a\in S\), sequentially in an arbitrary fixed order, using \(U\) as a shared deterministic threshold on the total number of oracle queries. 
If some call is still running after \(U\) oracle queries in total, the procedure stops and sets \(\widehat\mu_a:=0\) for every \(a\in S\).
This stopping event is accounted for in Lemma~\ref{lem:lvqme-design-estimates}.
Otherwise, \(\widehat\mu_a\) is the output of the call on \(a\).
In both cases, the procedure returns
\[
    \widetilde\mu(x):=\sum_{a\in S}\alpha_a(x)\,\widehat\mu_a
    \qquad\text{for all }x\in\mathcal A,
\]
where \(\{\alpha_a(x)\}_{a\in S}\) are the prediction weights from Lemma~\ref{lem:small-support-g-optimal-design}.

The following lemma shows that the estimation procedure is accurate with constant probability for each fixed action \(x\in\mathcal A\).

\begin{lemma}[LVQME design estimates]
\label{lem:lvqme-design-estimates}
There exist universal constants \(c_t,c_\zeta>0\) such that the following holds. 
The estimation procedure has the deterministic query threshold \(U\) as in~\eqref{eq:lvqme-threshold}. 
This threshold satisfies 
\[
    U=O\left(
    \frac{r}{\epsilon}\,
    \log\Big(4+\frac{r}{\epsilon}\Big)
    \log\log\Big(4+\frac{r}{\epsilon}\Big)
    \right).
\]
Moreover, for each fixed \(x\in\mathcal A\),
\[
    \Pr\!\left[\,\lvert\widetilde\mu(x)-\mu(x)\rvert>\epsilon\,\right]
    \le\frac{5}{12}.
\]
\end{lemma}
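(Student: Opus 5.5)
The proof splits into two parts: bounding the threshold \(U\), and a probabilistic accuracy bound for a fixed \(x\). The accuracy bound combines a Chebyshev argument on the untruncated estimator with a Markov bound on the truncation event.

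\textbf{Threshold bound.} Write \(L:=4+r/\epsilon\). Since \(\pi(a)\le1\), each \(t_a\le\max\{4,c_t\sqrt r/\epsilon+1\}=O(L)\). Also \(t_a/\zeta\le O(L)\cdot r/(c_\zeta\epsilon^2)=O(L^3)\), so \(\log(t_a/\zeta)=O(\log L)\) and \(\log\log t_a=O(\log\log L)\). It then suffices to show \(\sum_{a\in S}t_a=O(r/\epsilon)\). We have
\[
\sum_{a\in S}t_a\le\sum_{a\in S}\Bigl(4+1+c_t\tfrac{\sqrt{r\pi(a)}}{\epsilon}\Bigr)\le5|S|+\tfrac{c_t\sqrt r}{\epsilon}\sqrt{|S|}\,\Bigl(\textstyle\sum_a\pi(a)\Bigr)^{1/2}.
\]
By Cauchy--Schwarz and \(|S|\le c_s r\), this is \(O(r)+O(r/\epsilon)=O(r/\epsilon)\), using \(\epsilon\le1\). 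This gives the stated form of \(U\).

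\textbf{Accuracy for fixed \(x\).} Let \(\bar\mu_a\) denote the untruncated LVQME outputs, which are independent across \(a\in S\), and set \(\bar\mu(x)=\sum_a\alpha_a(x)\bar\mu_a\). Let \(E\) be the truncation event. On \(E^c\), \(\widetilde\mu(x)=\bar\mu(x)\). So
\[
\Pr[|\widetilde\mu(x)-\mu(x)|>\epsilon]\le\Pr[E]+\Pr[|\bar\mu(x)-\mu(x)|>\epsilon].
\]
For \(\Pr[E]\): the total query count of the untruncated calls has expectation at most \(C_Q\sum_a t_a\log\log t_a\log(t_a/\zeta)\le U/6\), so Markov gives \(\Pr[E]\le1/6\). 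The calls run sequentially, so truncation happens exactly when the cumulative count exceeds \(U\).

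For the second term, I use \(\mu(x)=\sum_a\alpha_a(x)\mu(a)\), which follows from \(x=\sum_a\alpha_a(x)a\) and linearity of \(\theta\). Two ingredients follow.

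\emph{Bias.} By Cauchy--Schwarz with the identity \(\sum_a\alpha_a^2/\pi(a)\le C_Gr\) and \(\sum_a\pi(a)=1\),
\[
\Bigl|\textstyle\sum_a\alpha_a(\mathbb E\bar\mu_a-\mu(a))\Bigr|\le\zeta\sum_a|\alpha_a|\le\zeta\sqrt{C_Gr}=c_\zeta\sqrt{C_G}\,\epsilon^2/\sqrt r\le\epsilon/4
\]
for small \(c_\zeta\).

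\emph{Variance.} By independence the variance is \(\sum_a\alpha_a^2\bigl(C_V/t_a^2+\zeta\bigr)\). Since \(t_a^2\ge c_t^2r\pi(a)/\epsilon^2\),
\[
\sum_a\frac{\alpha_a^2C_V}{t_a^2}\le\frac{C_V\epsilon^2}{c_t^2r}\sum_a\frac{\alpha_a^2}{\pi(a)}\le\frac{C_VC_G}{c_t^2}\,\epsilon^2.
\]
For the \(\zeta\) part, \(\alpha_a^2\le\pi(a)\cdot\alpha_a^2/\pi(a)\) gives \(\sum_a\alpha_a^2\le C_Gr\), so this part is at most \(\zeta C_Gr=c_\zeta C_G\epsilon^2\).

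Choosing \(c_t\) large and \(c_\zeta\) small makes the variance at most \(\epsilon^2/16\cdot\tfrac{16}{9}\cdot\tfrac14=\epsilon^2/36\). Chebyshev at deviation \(3\epsilon/4\) around the mean then gives probability at most \((\epsilon^2/36)/(9\epsilon^2/16)=4/81\). Combined with the bias bound, \(\Pr[|\bar\mu(x)-\mu(x)|>\epsilon]\le 4/81\), and the total is at most \(1/6+4/81<5/12\).

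\textbf{Main obstacle.} The delicate step is the coupling between truncation and the estimates. Truncation makes the actual \(\widehat\mu_a\) dependent, and the variance guarantee of Lemma~\ref{lem:lvqme} applies only to untruncated calls. Working with the untruncated \(\bar\mu_a\) on a common probability space and union-bounding with \(E\) resolves this. The other place needing care is where the weighted allocation \(t_a\propto\sqrt{\pi(a)}\) is used: it must convert \(\sum_a\alpha_a^2/t_a^2\) into the \(G\)-value without losing a factor of \(\sqrt r\).
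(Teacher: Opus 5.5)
Your proposal is correct and follows essentially the same route as the paper: Markov's inequality on the uninterrupted process handles the truncation event, weighted Cauchy--Schwarz bounds the bias, the design-matched allocation $t_a\propto\sqrt{r\pi(a)}/\epsilon$ bounds the variance, and Chebyshev finishes. The only difference is the constant bookkeeping. You target variance $\epsilon^2/36$ with deviation $3\epsilon/4$, giving $1/6+4/81$. The paper targets variance $\epsilon^2/16$ with deviation $\epsilon/2$, giving exactly $1/6+1/4=5/12$.
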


\begin{proof}
We first give the main idea of the proof. 
The stopping rule converts the random query cost of the \(\LVQME\) calls into the deterministic threshold \(U\), at the price of a constant failure probability, which is controlled by Markov's inequality. 
On the remaining event, the estimate \(\widetilde\mu(x)\) inherits the two guarantees of Lemma~\ref{lem:lvqme}: its bias is small because each \(\widehat\mu_a\) is nearly unbiased and we choose \(\zeta=c_\zeta\epsilon^2/r\), while its variance is small because the fluctuations of independent calls average under the design weights. 
The allocation \(t_a\propto\sqrt{r\pi(a)}/\epsilon\) makes the variance controlled by the design quantity \(x^\top M^+x\).
Chebyshev's inequality then converts this variance bound into the constant probability guarantee. 
We now turn to the details.

By construction, the estimation procedure never makes more than \(U\) oracle queries, so for the query bound it suffices to bound the size of \(U\). First, by the definition of \(t_a\), each \(a\in S\) satisfies \(t_a\le c_t\sqrt{r\pi(a)}/\epsilon+5\), where the ceiling contributes at most \(1\) and the truncation at \(4\) contributes at most \(4\). 
Summing over \(a\in S\) and applying the Cauchy--Schwarz inequality to the weights \(\sqrt{\pi(a)}\),
\[
    \sum_{a\in S}t_a
    \le 5\lvert S\rvert
    +\frac{c_t\sqrt r}{\epsilon}\sum_{a\in S}\sqrt{\pi(a)}
    \le 5\lvert S\rvert
    +\frac{c_t\sqrt r}{\epsilon}
    \sqrt{\lvert S\rvert\sum_{a\in S}\pi(a)}
    \le 5c_sr+\frac{c_t\sqrt{c_s}\,r}{\epsilon}
    = O\Big(\frac{r}{\epsilon}\Big),
\]
using \(\lvert S\rvert\le c_sr\), \(\sum_{a\in S}\pi(a)=1\), and \(\epsilon\le1\) to absorb the first term.
Next, we bound the logarithmic factors. 
Every \(a\in S\) satisfies \(4\le t_a=O(\sqrt r/\epsilon)\), and hence \(t_a/\zeta=t_a\cdot r/(c_\zeta\epsilon^2)
=O(r^{3/2}/\epsilon^{3})\), where the constants depend only on \(c_t,c_\zeta\). Therefore \(\log\log(t_a)\log(t_a/\zeta) =O\big(\log\big(4+\tfrac r\epsilon\big) \log\log\big(4+\tfrac r\epsilon\big)\big)\) for every \(a\in S\), and combining this with the bound on \(\sum_{a\in S}t_a\) gives the stated bound on \(U\).

We now fix \(x\in\mathcal A\) and bound the error probability. 
Consider the process that runs all \(\lvert S\rvert\) calls to completion, with the stopping rule removed, on the same realization of randomness.
The estimation procedure coincides with this uninterrupted process unless the latter makes more than \(U\) queries.
By Lemma~\ref{lem:lvqme}, the expected total number of queries of the uninterrupted process is at most \(C_Q\sum_{a\in S}t_a\log\log(t_a)\log(t_a/\zeta)\le U/6\) by~\eqref{eq:lvqme-threshold}.
Hence, by Markov's inequality, the uninterrupted process makes more than \(U\) queries with probability at most \(1/6\).
Otherwise, the stopping rule is not triggered and the two processes coincide.

It remains to bound the deviation of the uninterrupted estimator, which we again denote by \(\widetilde\mu(x)\). 
For each \(a\in S\), decompose
\[
    \widehat\mu_a-\mu(a)=b_a+\xi_a,
    \qquad
    b_a:=\mathbb E[\widehat\mu_a]-\mu(a),
    \qquad
    \xi_a:=\widehat\mu_a-\mathbb E[\widehat\mu_a],
\]
so that \(\mathbb E[\xi_a]=0\) and Lemma~\ref{lem:lvqme} gives \(\lvert b_a\rvert\le\zeta\) and \(\operatorname{Var}(\xi_a)\le C_V/t_a^2+\zeta\). 
Moreover, the \(\xi_a\) are independent across \(a\in S\), since the procedure runs the calls independently.
Since Lemma~\ref{lem:small-support-g-optimal-design} gives \(x=\sum_{a\in S}\alpha_a(x)a\), the linear model implies \(\mu(x)=\sum_{a\in S}\alpha_a(x)\mu(a)\). Hence
\[
    \widetilde\mu(x)-\mu(x)
    = \sum_{a\in S}\alpha_a(x)\,b_a
    + \sum_{a\in S}\alpha_a(x)\,\xi_a .
\]

For the bias term, the Cauchy--Schwarz inequality weighted by \(\pi\) gives
\[
    \Big\lvert\sum_{a\in S}\alpha_a(x)\,b_a\Big\rvert
    \le
    \zeta\sum_{a\in S}\lvert\alpha_a(x)\rvert
    \le
    \zeta
    \Big(\sum_{a\in S}\frac{\alpha_a(x)^2}{\pi(a)}\Big)^{1/2}
    \Big(\sum_{a\in S}\pi(a)\Big)^{1/2}
    \le
    \zeta\sqrt{C_Gr}
    =
    c_\zeta\sqrt{C_G}\,\frac{\epsilon^2}{\sqrt r}
    \le
    \frac{\epsilon}{4},
\]
using \(\sum_{a\in S}\alpha_a(x)^2/\pi(a)=x^\top M^+x\le C_Gr\), \(r\ge1\), and \(\epsilon\le1\), provided \(c_\zeta\le\big(4\sqrt{C_G}\big)^{-1}\).

For the fluctuation term, since the \(\xi_a\) are independent,
\[
    \operatorname{Var}\Big(\sum_{a\in S}\alpha_a(x)\,\xi_a\Big)
    =
    \sum_{a\in S}\alpha_a(x)^2\operatorname{Var}(\xi_a)
    \le
    C_V\sum_{a\in S}\frac{\alpha_a(x)^2}{t_a^2}
    +
    \zeta\sum_{a\in S}\alpha_a(x)^2 ,
\]
where we substituted \(\operatorname{Var}(\xi_a)\le C_V/t_a^2+\zeta\). 
For the first term, the definition of \(t_a\) gives \(t_a^2\ge c_t^2\,r\,\pi(a)/\epsilon^2\), and hence
\[
    C_V\sum_{a\in S}\frac{\alpha_a(x)^2}{t_a^2}
    \le
    \frac{C_V\epsilon^2}{c_t^2\,r}
    \sum_{a\in S}\frac{\alpha_a(x)^2}{\pi(a)}
    \le
    \frac{C_V\epsilon^2}{c_t^2\,r}\cdot C_Gr
    =
    \frac{C_VC_G}{c_t^2}\,\epsilon^2
    \le
    \frac{\epsilon^2}{32},
\]
using \(\sum_{a\in S}\alpha_a(x)^2/\pi(a)=x^\top M^+x\le C_Gr\), provided \(c_t^2\ge32C_VC_G\). For the second term, since \(\pi(a)\le1\) for every \(a\in S\),
\[
    \zeta\sum_{a\in S}\alpha_a(x)^2
    \le
    \zeta\sum_{a\in S}\frac{\alpha_a(x)^2}{\pi(a)}
    \le
    \zeta\,C_Gr
    =
    c_\zeta C_G\,\epsilon^2
    \le
    \frac{\epsilon^2}{32},
\]
provided \(c_\zeta\le(32C_G)^{-1}\). Combining the two terms,
\[
    \operatorname{Var}\Big(\sum_{a\in S}\alpha_a(x)\,\xi_a\Big)
    \le
    \frac{\epsilon^2}{16}.
\]
Since \(\mathbb E\big[\sum_{a\in S}\alpha_a(x)\,\xi_a\big]=0\), by Chebyshev's inequality,
\[
    \Pr\Big[\Big\lvert\sum_{a\in S}\alpha_a(x)\,\xi_a\Big\rvert
    >\frac{\epsilon}{2}\Big]
    \le\frac14 .
\]

Suppose that the stopping rule is not triggered and that \(\big\lvert\sum_{a\in S}\alpha_a(x)\,\xi_a\big\rvert \le\epsilon/2\). 
In this case, \(\widetilde\mu(x)\) coincides with the uninterrupted estimator, and by the decomposition above,
\[
    \lvert\widetilde\mu(x)-\mu(x)\rvert
    \le \Big\lvert\sum_{a\in S}\alpha_a(x)\,b_a\Big\rvert + \Big\lvert\sum_{a\in S}\alpha_a(x)\,\xi_a\Big\rvert
    \le \frac{\epsilon}{4}+\frac{\epsilon}{2}
    \le \epsilon ,
\]
where the bias bound holds deterministically. Hence the error event \(\{\lvert\widetilde\mu(x)-\mu(x)\rvert>\epsilon\}\) is contained in the union of the two events above, and a union bound gives
\(\Pr\big[\lvert\widetilde\mu(x)-\mu(x)\rvert>\epsilon\big]\le1/6+1/4=5/12\).

Finally, choosing \(c_t:=\sqrt{32C_VC_G}\) and \(c_\zeta:=\min\big\{1,\big(4\sqrt{C_G}\big)^{-1}, \big(32C_G\big)^{-1}\big\}\) satisfies all the constraints above, and \(c_\zeta\le1\) ensures \(\zeta\in(0,1)\) as required by Lemma~\ref{lem:lvqme}.
\end{proof}

\paragraph{High-probability estimates.}
The estimation procedure is accurate for each fixed action only with constant probability. To obtain estimates that are accurate for all actions in \(\mathcal A\) simultaneously with high probability, we run it several times independently and take medians.

\begin{lemma}[High-probability LVQME design estimates]
\label{lem:hp-design-estimates}
There exists a universal constant \(C_N>0\) such that the following holds. 
For a confidence parameter \(\delta\in(0,1)\), let \(N:=\big\lceil C_N\log\frac{\lvert\mathcal A\rvert}{\delta}\big\rceil\).
Run the estimation procedure of Lemma~\ref{lem:lvqme-design-estimates} independently \(N\) times, and for each \(x\in\mathcal A\), let \(\widehat\mu(x)\) be the median of \(\widetilde\mu^{(1)}(x),\ldots,\widetilde\mu^{(N)}(x)\), where \(\widetilde\mu^{(j)}(x)\) denotes the estimate \(\widetilde\mu(x)\) returned by the \(j\)-th run.
Then the total number of oracle queries is deterministically at most
\[
    NU =
    O\left(
    \frac{r}{\epsilon}\,
    \log\frac{\lvert\mathcal A\rvert}{\delta}\,
    \log\Big(4+\frac{r}{\epsilon}\Big)
    \log\log\Big(4+\frac{r}{\epsilon}\Big)
    \right),
\]
and with probability at least \(1-\delta\),
\[
    \lvert\widehat\mu(x)-\mu(x)\rvert\le\epsilon
    \qquad\text{for all }x\in\mathcal A .
\]
\end{lemma}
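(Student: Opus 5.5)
The plan is the standard median-of-means amplification of the constant-probability guarantee in Lemma~\ref{lem:lvqme-design-estimates}, followed by a union bound over the active set. The query bound is immediate. Each of the \(N\) independent runs stops deterministically after at most \(U\) oracle queries, so the total is at most \(NU\). Substituting \(N=\lceil C_N\log(\lvert\mathcal A\rvert/\delta)\rceil\) and the bound on \(U\) from Lemma~\ref{lem:lvqme-design-estimates} gives the stated expression. Here we use \(\lvert\mathcal A\rvert\ge1\) and \(\delta<1\), so \(\log(\lvert\mathcal A\rvert/\delta)>0\) and the ceiling costs at most a constant factor once \(C_N\ge1\).

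For accuracy, fix \(x\in\mathcal A\). For \(j\in[N]\), let \(Z_j\) indicate the failure event \(\{\lvert\widetilde\mu^{(j)}(x)-\mu(x)\rvert>\epsilon\}\). The runs use fresh, independent randomness, so the \(Z_j\) are independent Bernoulli variables. Lemma~\ref{lem:lvqme-design-estimates} gives \(\mathbb E[Z_j]\le 5/12\). If the median \(\widehat\mu(x)\) lies outside \([\mu(x)-\epsilon,\mu(x)+\epsilon]\), then at least half of the \(N\) estimates lie on the same side outside this interval. In particular, \(\sum_j Z_j\ge N/2\). Hoeffding's inequality with deviation \(1/2-5/12=1/12\) then gives
\[
    \Pr\Big[\sum_{j=1}^N Z_j\ge N/2\Big]\le\exp\!\big(-2N/144\big)=\exp(-N/72).
\]
Choosing \(C_N:=72\) makes this at most \(\delta/\lvert\mathcal A\rvert\). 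A union bound over the \(\lvert\mathcal A\rvert\) actions then gives simultaneous accuracy with probability at least \(1-\delta\).

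The only point needing care is the independence of the \(Z_j\) for a fixed \(x\), together with the fact that the per-run failure bound holds for each fixed \(x\) individually. We do not need the events to be independent across different \(x\), since the union bound handles that. Two further points should be noted. The design \((S,\pi,M)\) is fixed before the runs, so the prediction weights are deterministic. The stopping rule is applied separately within each run, so it does not couple the runs. The \(5/12<1/2\) margin in Lemma~\ref{lem:lvqme-design-estimates} is exactly what makes the median argument work, and I expect no real obstacle beyond tracking these constants.
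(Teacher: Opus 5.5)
Your proposal is correct and follows the paper's argument: the query bound comes from the deterministic per-run threshold \(U\), and accuracy comes from the median trick with a Chernoff--Hoeffding bound plus a union bound over \(\mathcal A\), where your explicit choice \(C_N=72\) makes the paper's \(e^{-\Omega(N)}\) concrete. One small caveat: the claim that the ceiling in \(N\) costs only a constant factor requires \(C_N\log(\lvert\mathcal A\rvert/\delta)\gtrsim1\), for example \(\delta\le1/2\). The paper's statement glosses over this edge case as well, and it is harmless in the algorithm.
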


\begin{proof}
Fix \(x\in\mathcal A\). By Lemma~\ref{lem:lvqme-design-estimates}, each run satisfies \(\lvert\widetilde\mu^{(j)}(x)-\mu(x)\rvert\le\epsilon\) with probability at least \(1-5/12=7/12>1/2\), independently across \(j\in\{1,\ldots,N\}\). 
By a standard median argument and a Chernoff bound,
\[
    \Pr\big[\,\lvert\widehat\mu(x)-\mu(x)\rvert>\epsilon\,\big]
    \le
    e^{-\Omega(N)}
    \le
    \frac{\delta}{\lvert\mathcal A\rvert},
\]
provided \(C_N\) is sufficiently large. 
A union bound over \(x\in\mathcal A\) then shows that, with probability at least \(1-\delta\), \(\lvert\widehat\mu(x)-\mu(x)\rvert\le\epsilon\) for all \(x\in\mathcal A\).

For the query bound, the medians \(\widehat\mu(x)\) for \(x\in\mathcal A\), are computed from the same \(N\) runs, and each run makes at most \(U\) queries by Lemma~\ref{lem:lvqme-design-estimates}. Hence the total number of queries is deterministically at most \(NU\), and the stated bound follows from the bound on \(U\) in Lemma~\ref{lem:lvqme-design-estimates} together with the definition of \(N\).
\end{proof}

For convenience, we refer to this median-amplified estimator as \LVQMEDesignEstimate.

\subsection{Low-Variance \ensuremath{G}-Optimal Design Elimination}
\label{subsec:lv-g-elim}

We now combine the design and the estimator into a phased elimination algorithm.
\LVGElim{} (Algorithm~\ref{alg:lv-g-elim}) follows the classical \(G\)-optimal exploration scheme~\citep{lattimore2020bandit}, with the sampling-and-estimation step replaced by \(\LVQMEDesignEstimate\).
In phase \(\ell\), it computes a small-support design for the current active set \(\mathcal A_\ell\) (Line~\ref{line:g-optimal-design}), calls the estimator with target accuracy \(\epsilon_\ell=2^{-\ell}\) (Line~\ref{line:LVQMEDesignEstimate}), and eliminates every action whose estimate falls more than \(2\epsilon_\ell\) below the best estimate (Lines~\ref{line:best-estimate} and~\ref{line:eliminate}).
The algorithm stops once the total number of oracle queries reaches \(T\). 
If every phase finishes before the budget is exhausted, the algorithm queries an arbitrary action in the current active set for the remaining rounds. 
If \(\operatorname{span}(\mathcal A_\ell)=\{0\}\) at the beginning of a phase, the algorithm stops the phased procedure and plays an arbitrary active action for all remaining rounds.

\begin{algorithm}[H]
\caption{Low-Variance G-Optimal Design Elimination (\LVGElim)}
\label{alg:lv-g-elim}
\DontPrintSemicolon
\KwIn{Action set \(\mathcal A\subseteq\mathbb R^d\), horizon \(T\), confidence parameter \(\delta\in(0,1)\)}
\(L_T\gets\lceil\log_2 T\rceil\), \(\mathcal A_0\gets\mathcal A\), \(\delta^\prime\gets\delta/(L_T+1)\)\;
\For{\(\ell=0,1,\ldots,L_T\) \textnormal{(terminate once the total number of oracle queries reaches \(T\))}}{
    \(\epsilon_\ell\gets 2^{-\ell}\)\;
    Compute a small-support design \((S_\ell,\pi_\ell,M_\ell)\) for \(\mathcal A_\ell\) as in Lemma~\ref{lem:small-support-g-optimal-design} \label{line:g-optimal-design}\;
    \(\{\widehat\mu_\ell(x)\}_{x\in\mathcal A_\ell}\gets\LVQMEDesignEstimate(\mathcal A_\ell,S_\ell,\pi_\ell,M_\ell,\epsilon_\ell,\delta^\prime)\) \label{line:LVQMEDesignEstimate}\;
    \(\widehat\mu_\ell^\star\gets\max_{z\in\mathcal A_\ell}\widehat\mu_\ell(z)\)\label{line:best-estimate}\;
    \(\mathcal A_{\ell+1}\gets\{x\in\mathcal A_\ell:\widehat\mu_\ell(x)\ge \widehat\mu_\ell^\star-2\epsilon_\ell\}\) \label{line:eliminate}\;
}
\end{algorithm}

The following theorem gives the regret guarantee for \LVGElim.

\begin{theorem}[Regret of \LVGElim]
\label{thm:lv-g-elim-regret}
For any confidence parameter \(\delta\in(0,1)\), with probability at least \(1-\delta\), \LVGElim satisfies
\[ R(T) = O\left( d\log T \log\frac{K\log T}{\delta} \log(dT)\log\log(dT) \right).\]
Moreover, running \LVGElim with confidence parameter \(1/T\) gives
\[
\mathbb E[R(T)] = O\left( d\log T \log(KT) \log(dT)\log\log(dT)\right).
\]
\end{theorem}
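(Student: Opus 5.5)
The proof follows the classical phased-elimination analysis, with the per-phase query cost taken from Lemma~\ref{lem:hp-design-estimates}. Let \(\mathcal E\) be the event that, in every phase \(\ell\le L_T\) that is actually executed, the estimates satisfy \(\lvert\widehat\mu_\ell(x)-\mu(x)\rvert\le\epsilon_\ell\) for all \(x\in\mathcal A_\ell\). Lemma~\ref{lem:hp-design-estimates} is applied in each phase with confidence \(\delta'=\delta/(L_T+1)\), conditionally on the history, which determines \(\mathcal A_\ell\) and the design. A union bound over the at most \(L_T+1\) phases then gives \(\Pr[\mathcal E]\ge1-\delta\). The rest of the argument works on \(\mathcal E\).

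\textbf{Structural claims on \(\mathcal E\).} First, the optimal action \(x^\star\) is never eliminated. Indeed, \(\widehat\mu_\ell(x^\star)\ge\mu(x^\star)-\epsilon_\ell\ge\mu(z)-\epsilon_\ell\ge\widehat\mu_\ell(z)-2\epsilon_\ell\) for every \(z\in\mathcal A_\ell\). Second, every \(x\in\mathcal A_{\ell+1}\) has gap \(\Delta_x\le4\epsilon_\ell=8\epsilon_{\ell+1}\), since \(\mu(x)\ge\widehat\mu_\ell(x)-\epsilon_\ell\ge\widehat\mu_\ell(x^\star)-3\epsilon_\ell\ge\mu(x^\star)-4\epsilon_\ell\). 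Hence every query in phase \(\ell\ge1\) has regret at most \(8\epsilon_\ell\). Queries in phase \(0\) have regret at most \(1\), because means lie in \([0,1]\). If the span of \(\mathcal A_\ell\) becomes \(\{0\}\), all active actions have mean \(0\), including \(x^\star\), so the remaining rounds cost nothing. Likewise, any rounds played after the last phase \(L_T\) cost at most \(8\epsilon_{L_T+1}\le 8/T\) each, for \(O(1)\) in total.

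\textbf{Summing the per-phase costs.} By Lemma~\ref{lem:hp-design-estimates}, with \(r_\ell\le d\) and \(N=O(\log(K\log T/\delta))\), phase \(\ell\) uses at most
\[
O\Big(\tfrac{d}{\epsilon_\ell}\log\tfrac{K(L_T+1)}{\delta}\log\big(4+\tfrac{d}{\epsilon_\ell}\big)\log\log\big(4+\tfrac d{\epsilon_\ell}\big)\Big)
\]
queries. Since \(\epsilon_\ell\ge1/(2T)\), the two logarithmic factors are \(O(\log(dT))\) and \(O(\log\log(dT))\). Multiplying the query count by the per-query regret \(O(\epsilon_\ell)\), the factor \(1/\epsilon_\ell\) cancels. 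Each phase therefore contributes \(O\big(d\log\frac{K\log T}{\delta}\log(dT)\log\log(dT)\big)\) regret. A phase truncated by the budget \(T\) contributes no more than this. Summing over the \(L_T+1=O(\log T)\) phases gives the high-probability bound.

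\textbf{Expected regret.} Set \(\delta=1/T\). On \(\mathcal E^c\), which has probability at most \(1/T\), the regret is at most \(T\), contributing \(O(1)\). The logarithmic factor becomes \(\log(KT\log T)=O(\log(KT))\).

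\textbf{Main obstacle.} The delicate step is the conditioning. Lemma~\ref{lem:hp-design-estimates} assumes a fixed active set and design, but here \(\mathcal A_\ell\) is random. I would apply the lemma conditionally on the history up to phase \(\ell\), using that each phase runs fresh, independent LVQME calls. The accuracy bound then holds with conditional probability at least \(1-\delta'\), and the failure probabilities add up over phases.
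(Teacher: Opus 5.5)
Your proposal is correct and follows essentially the same argument as the paper. The shared steps are per-phase good events obtained by applying Lemma~\ref{lem:hp-design-estimates} conditionally on the history with a union bound over phases, the inductive non-elimination of \(x^\star\) with the \(4\epsilon_\ell\) gap bound, cancellation of \(1/\epsilon_\ell\) in \(8\epsilon_\ell Q_\ell\) summed over \(O(\log T)\) phases (covering truncated phases and the \(O(1)\) tail after phase \(L_T\)), and the \(\delta=1/T\) conversion to expected regret, with your explicit handling of \(\operatorname{span}(\mathcal A_\ell)=\{0\}\) being a small case the paper leaves implicit.
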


\begin{proof}
We first give the main idea of the proof. 
With probability at least \(1-\delta\), all estimates used by the algorithm are accurate for the corresponding active actions.
On this good event, the optimal action is never eliminated, and every action that remains active after phase \(\ell\) has suboptimality gap \(O(\epsilon_\ell)\). 
The number of queries in phase \(\ell\) is \(\widetilde O(d/\epsilon_\ell)\) by Lemma~\ref{lem:hp-design-estimates}, where \(\widetilde O\) hides logarithmic factors in \(d,T,K\) and \(1/\delta\), so the regret incurred in phase \(\ell\) is \(\widetilde O(d)\). 
Summing over the \(O(\log T)\) phases gives the stated bound. 
We now turn to the details.

The algorithm runs at most \(L_T+1\) phases. 
For each phase \(\ell=0,\ldots,L_T\) in which the elimination step is performed, define \(\mathcal E_\ell := \big\{\,\lvert\widehat\mu_\ell(x)-\mu(x)\rvert\le\epsilon_\ell \ \text{ for all }x\in\mathcal A_\ell\,\big\}\) and \(\mathcal E:=\bigcap_{\ell}\mathcal E_\ell\),  where the intersection ranges over all such phases. 
For each such phase, let \(\mathcal F_\ell\) be the history before phase \(\ell\). Conditional on \(\mathcal F_\ell\), the active set \(\mathcal A_\ell\) and the design \((S_\ell,\pi_\ell,M_\ell)\) are fixed, so Lemma~\ref{lem:hp-design-estimates} gives \(\Pr(\mathcal E_\ell^c\mid\mathcal F_\ell)\le\delta'\) almost surely.
Taking expectations gives \(\Pr(\mathcal E_\ell^c)\le\delta'\), where \(\delta'=\delta/(L_T+1)\). A union bound over the at most \(L_T+1\) phases gives \(\Pr[\mathcal E]\ge1-\delta\). It remains to prove the regret bound on \(\mathcal E\).

Fix \(x^\star\in\arg\max_{x\in\mathcal A}\mu(x)\), set \(\mu^\star:=\mu(x^\star)\), and write \(\Delta(x):=\mu^\star-\mu(x)\). 
We claim the following on \(\mathcal E\): for every phase \(\ell\) in which the elimination step is performed, \(x^\star\in\mathcal A_{\ell+1}\), and \(\Delta(x)\le4\epsilon_\ell\) for all \(x\in\mathcal A_{\ell+1}\).

Initially, \(x^\star\in\mathcal A_0=\mathcal A\).
Assume inductively that \(x^\star\in\mathcal A_\ell\).
For every \(z\in\mathcal A_\ell\), on \(\mathcal E_\ell\),
\[
    \widehat\mu_\ell(x^\star)
    \ge\mu^\star-\epsilon_\ell
    \ge\mu(z)-\epsilon_\ell
    \ge\widehat\mu_\ell(z)-2\epsilon_\ell .
\]
Thus \(\widehat\mu_\ell(x^\star)\ge\widehat\mu_\ell^\star-2\epsilon_\ell\), so \(x^\star\) is not eliminated. 
If \(x\in\mathcal A_{\ell+1}\), then \(\widehat\mu_\ell(x)\ge\widehat\mu_\ell^\star-2\epsilon_\ell \ge\widehat\mu_\ell(x^\star)-2\epsilon_\ell\), and hence
\[
    \mu(x)
    \ge\widehat\mu_\ell(x)-\epsilon_\ell
    \ge\widehat\mu_\ell(x^\star)-3\epsilon_\ell
    \ge\mu^\star-4\epsilon_\ell .
\]
This proves the claim.

We now bound the instantaneous regret of the actions queried in each phase. 
In phase \(\ell\), the algorithm queries only actions in \(S_\ell\subseteq\mathcal A_\ell\). 
For \(\ell=0\), all gaps are at most \(1=\epsilon_0\), since all means lie in \([0,1]\). 
For \(\ell\ge1\), the claim applied to phase \(\ell-1\) gives \(\Delta(x)\le4\epsilon_{\ell-1}=8\epsilon_\ell\) for all \(x\in\mathcal A_\ell\). 
Thus every query performed in phase \(\ell\) incurs regret at most \(8\epsilon_\ell\), for every \(\ell\ge0\).

Next, we bound the number of queries in each phase.
In phase \(\ell\), writing \(r_\ell:=\dim(\operatorname{span}(\mathcal A_\ell))\le d\) and \(K_\ell:=\lvert\mathcal A_\ell\rvert\le K\), Lemma~\ref{lem:hp-design-estimates} shows that the estimator makes at most
\[
    Q_\ell =
    O\left(
    \frac{r_\ell}{\epsilon_\ell}\,
    \log\frac{K_\ell}{\delta'}\,
    \log\Big(4+\frac{r_\ell}{\epsilon_\ell}\Big)
    \log\log\Big(4+\frac{r_\ell}{\epsilon_\ell}\Big)
    \right)
\]

oracle queries deterministically. 
For every \(\ell\le L_T\) we have \(\epsilon_\ell=2^{-\ell}\ge1/(2T)\), so \(4+r_\ell/\epsilon_\ell=O(dT)\), and \(\log(K_\ell/\delta') \le\log\big(K(L_T+1)/\delta\big) =O\big(\log\tfrac{K\log T}{\delta}\big)\). 
Hence
\[
    Q_\ell =
    O\left(
    \frac{d}{\epsilon_\ell}\,
    \log\frac{K\log T}{\delta}\,
    \log(dT)\log\log(dT)
    \right).
\]

We now combine the two bounds. The regret incurred in phase \(\ell\) is at most the number of queries performed in that phase multiplied by \(8\epsilon_\ell\). 
This holds whether or not the phase completes, since the gap bound depends only on \(\mathcal A_\ell\), which is determined before the phase begins, and the number of queries performed in phase \(\ell\) never exceeds \(Q_\ell\). 
Hence, on \(\mathcal E\), the regret incurred during the planned phases is at most
\[
    \sum_{\ell=0}^{L_T}
    8\epsilon_\ell\,Q_\ell =
    O\left( (L_T+1)\,d\,
    \log\frac{K\log T}{\delta}\,
    \log(dT)\log\log(dT)
    \right).
\]
If all planned phases finish before \(T\) queries are made, then the current active set is \(\mathcal A_{L_T+1}\), and the remaining queries are to actions in this set. 
By the claim applied to phase \(L_T\), every action in \(\mathcal A_{L_T+1}\) has gap at most \(4\epsilon_{L_T}\le4/T\). Since there are at most \(T\) remaining queries, their total regret is at most \(4\), which is absorbed in the bound above. 
Therefore \(L_T+1=O(\log T)\) gives the high-probability bound.

Finally, we prove the expected regret bound. Run \LVGElim{} with confidence parameter \(\delta=1/T\). Since every gap is at most \(1\) and there are at most \(T\) queries, the regret is at most \(T\) deterministically. 
Hence, using the high-probability bound with \(\delta=1/T\),
\[
    \mathbb E[R(T)]
    \le O\left( d\log T\, \log(KT\log T)\, \log(dT)\log\log(dT) \right)
    + \Pr[\mathcal E^c]\,T.
\]
Here \(\Pr[\mathcal E^c]\le1/T\), so the second term is at most \(1\). Since \(\log(KT\log T)\le\log(KT^2)\le2\log(KT)\), we conclude
\[
\mathbb E[R(T)] = O\left(d\log T\,\log(KT)\,\log(dT)\log\log(dT)\right),
\]
which completes the proof.
\end{proof}

\begin{remark}[Bounded-variance rewards]
\label{rem:bounded-variance}
Our upper-bound analysis uses bounded rewards only through the quantum mean estimation primitive.
Under the bounded-variance assumption of \citet{wan2023quantum}, with a known variance bound \(\sigma^2\) and rewards of finite support size \(n\), the nondestructive median estimator and the unbiased mean estimator of \citet[Propositions~3.1 and~3.2]{cornelissen2023sublinear} can be combined to obtain estimates with bias \(O(\varepsilon\sigma)\), variance \(O(\sigma^2/t^2)\), and query complexity \(\widetilde O(t)\).
This suggests that the same design-based argument can be extended by scaling the confidence radii and query allocations with \(\sigma\), at the cost of additional polylogarithmic factors, including a logarithmic dependence on \(n\).
A complete treatment requires a corresponding deterministic-budget estimator and is left for future work.
\end{remark}

\paragraph{Comparison with a QMC variant.}
A natural alternative is to keep the same phased elimination algorithm and the same small-support designs, but estimate the support means using the QMC estimator from Lemma~\ref{lem:qmc}.
This gives a simpler design estimator with direct high-probability guarantees on the support.
The price is that the errors aggregate through worst-case absolute error: if each support estimate has error at most \(\rho\), then extrapolation through the prediction weights gives an error of at most \(\sqrt{C_G r}\,\rho\) for every active action by the Cauchy--Schwarz inequality.
Achieving accuracy \(\epsilon\) thus requires \(\rho\asymp\epsilon/\sqrt r\), which leads to a design-estimation cost of order \(r^{3/2}/\epsilon\) up to logarithmic factors.

The low-variance estimator from Lemma~\ref{lem:lvqme} avoids this loss because independent fluctuations aggregate through variance.
With a query allocation matched to the design weights, the variances add up to \(O(\epsilon^2)\) under the same weighted sum \(\sum_{a\in S}\alpha_a(x)^2/\pi(a)\le C_G r\), without the \(\sqrt r\) inflation in the required support accuracy.

\begin{lemma}[QMC design estimates]
\label{lem:qmc-design-estimates}
Fix an active set \(\mathcal A\subseteq\mathbb R^d\) and a design \((S,\pi,M)\) satisfying Lemma~\ref{lem:small-support-g-optimal-design}, and let
\(r=\dim(\operatorname{span}(\mathcal A))\ge1\).
Given an accuracy parameter \(\epsilon\in(0,1]\) and a confidence parameter \(\delta\in(0,1)\), set \(\rho:=\min\{1,\epsilon/\sqrt{C_G r}\}\).
For each \(a\in S\), let \(\widehat\mu_a\) be the output of an independent call \(\QMC(\mathcal O_a,\rho,\delta/\lvert S\rvert)\), and define \( \widehat\mu(x):=\sum_{a\in S}\alpha_a(x)\,\widehat\mu_a\) for all \(x\in\mathcal A\), where \(\{\alpha_a(x)\}_{a\in S}\) are the prediction weights from Lemma~\ref{lem:small-support-g-optimal-design}.
Then the total number of oracle queries is always bounded by \(O(r^{3/2}\epsilon^{-1}\log(r/\delta))\), and with probability at least \(1-\delta\),
\[
    \lvert\widehat\mu(x)-\mu(x)\rvert\le\epsilon\qquad\text{for all }x\in\mathcal A .
\]
\end{lemma}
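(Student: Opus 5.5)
The proof splits into a deterministic query count and a high-probability accuracy argument; both follow from Lemma~\ref{lem:qmc}, a union bound over the support, and the prediction-weight identities of Lemma~\ref{lem:small-support-g-optimal-design}.

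\textbf{Query count.} Each call \(\QMC(\mathcal O_a,\rho,\delta/\lvert S\rvert)\) uses at most \(\frac{C_1}{\rho}\log\frac{\lvert S\rvert}{\delta}\) queries by Lemma~\ref{lem:qmc}. Since \(\rho=\min\{1,\epsilon/\sqrt{C_Gr}\}\) and \(\epsilon\le1\), we have
\[
1/\rho\le\max\{1,\sqrt{C_Gr}/\epsilon\}=O(\sqrt r/\epsilon).
\]
Summing over the at most \(c_s r\) support points, and using \(\log(\lvert S\rvert/\delta)\le\log(c_sr/\delta)=O(\log(r/\delta))\) (with \(r\ge1\), \(\delta<1\), and constants absorbed), gives a total of \(O(r^{3/2}\epsilon^{-1}\log(r/\delta))\). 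This bound holds for every run, because the bound in Lemma~\ref{lem:qmc} is a worst-case bound rather than an expectation.

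\textbf{Accuracy.} Define the good event \(\mathcal G:=\{\lvert\widehat\mu_a-\mu(a)\rvert\le\rho\ \text{for all }a\in S\}\).
\begin{itemize}
\item By Lemma~\ref{lem:qmc} and a union bound over the \(\lvert S\rvert\) independent calls, \(\Pr[\mathcal G^c]\le\lvert S\rvert\cdot\delta/\lvert S\rvert=\delta\).
\item On \(\mathcal G\), fix \(x\in\mathcal A\). Lemma~\ref{lem:small-support-g-optimal-design} gives \(x=\sum_{a\in S}\alpha_a(x)a\), so linearity of the mean yields \(\mu(x)=\sum_{a\in S}\alpha_a(x)\mu(a)\). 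Hence \(\widehat\mu(x)-\mu(x)=\sum_{a\in S}\alpha_a(x)(\widehat\mu_a-\mu(a))\).
\item Apply Cauchy--Schwarz with the weights \(\pi\), exactly as in the bias bound of Lemma~\ref{lem:lvqme-design-estimates}:
\[
\lvert\widehat\mu(x)-\mu(x)\rvert\le\rho\sum_{a\in S}\lvert\alpha_a(x)\rvert\le\rho\Big(\sum_{a\in S}\frac{\alpha_a(x)^2}{\pi(a)}\Big)^{1/2}\Big(\sum_{a\in S}\pi(a)\Big)^{1/2}\le\rho\sqrt{C_Gr}.
\]
\end{itemize}

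It remains to check \(\rho\sqrt{C_Gr}\le\epsilon\) in both branches of the minimum. If \(\rho=\epsilon/\sqrt{C_Gr}\), the bound equals \(\epsilon\). If instead \(\rho=1\), then \(1\le\epsilon/\sqrt{C_Gr}\), so again \(\rho\sqrt{C_Gr}\le\epsilon\). Since \(\mathcal G\) does not depend on \(x\), the conclusion holds simultaneously for all \(x\in\mathcal A\) with probability at least \(1-\delta\).

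No step is a real obstacle. The only points needing care are the \(\min\{1,\cdot\}\) in \(\rho\), which is there to satisfy the requirement \(\epsilon\le1\) of Lemma~\ref{lem:qmc}, and the check that the logarithmic factor is \(O(\log(r/\delta))\) uniformly in the constants.
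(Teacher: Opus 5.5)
Your proposal is correct and follows the paper's proof: a union bound over the $|S|$ QMC calls, then $\mu(x)=\sum_{a\in S}\alpha_a(x)\mu(a)$ with $\pi$-weighted Cauchy--Schwarz giving error at most $\sqrt{C_Gr}\,\rho\le\epsilon$, and a deterministic per-call cost of $O(\rho^{-1}\log(|S|/\delta))$ with $\rho^{-1}=O(\sqrt r/\epsilon)$ and $|S|\le c_sr$. The only cosmetic difference is that you factor out $\rho$ before applying Cauchy--Schwarz, whereas the paper applies it directly to the $\pi$-weighted squared errors; the two arguments are equivalent.
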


\begin{proof}
By Lemma~\ref{lem:qmc} and a union bound over \(S\), with probability at least \(1-\delta\), the estimates \(\{\widehat\mu_a:a\in S\}\) satisfy \(\lvert\widehat\mu_a-\mu(a)\rvert\le\rho\) for every \(a\in S\). 
On this event, fix \(x\in\mathcal A\). Since Lemma~\ref{lem:small-support-g-optimal-design} gives \(x=\sum_{a\in S}\alpha_a(x)a\), the linear model implies \(\mu(x)=\sum_{a\in S}\alpha_a(x)\mu(a)\).
Therefore, by the Cauchy--Schwarz inequality weighted by \(\pi\),
\[
\lvert\widehat\mu(x)-\mu(x)\rvert =\Big\lvert
\sum_{a\in S}\alpha_a(x)\big(\widehat\mu_a-\mu(a)\big)
\Big\rvert
\le \left(\sum_{a\in S}\frac{\alpha_a(x)^2}{\pi(a)}\right)^{\frac{1}{2}}
\left(\sum_{a\in S}\pi(a)(\widehat\mu_a-\mu(a))^2\right)^{\frac{1}{2}}
\le \sqrt{C_G r}\,\rho
\le \epsilon.
\]

It remains to bound the number of queries.
Each QMC call uses \(O(\rho^{-1}\log(|S|/\delta))\) oracle queries by Lemma~\ref{lem:qmc}.
By the definition of \(\rho\), using \(r\ge1\) and \(\epsilon\le1\), we have \(\rho^{-1}\le1+\sqrt{C_Gr}/\epsilon\le(1+\sqrt{C_G})\sqrt r/\epsilon\).
With \(|S|\le c_sr\), the total number of queries over \(a\in S\) is \(O\big(|S|\,\rho^{-1}\log(|S|/\delta)\big)=O\big(r^{3/2}\epsilon^{-1}\log(r/\delta)\big)\).
\end{proof}

We denote this estimator by \(\QMCDesignEstimate\), and let \(\QMCGElim\) be the variant of Algorithm~\ref{alg:lv-g-elim} obtained by replacing \(\LVQMEDesignEstimate\) with \(\QMCDesignEstimate\) in Line~\ref{line:LVQMEDesignEstimate}.

\begin{corollary}[Regret of \QMCGElim]
\label{cor:qmc-g-elim-regret}
For any confidence parameter \(\delta\in(0,1)\), with probability at least \(1-\delta\), \QMCGElim satisfies \(R(T) = O\left( d^{3/2}\log T \log\frac{d\log T}{\delta} \right)\).
Moreover, running \QMCGElim with confidence parameter \(1/T\) gives \(\mathbb E[R(T)] = O\left( d^{3/2}\log T\, \log(dT) \right)\).
\end{corollary}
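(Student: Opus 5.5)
The plan is to reuse the proof of Theorem~\ref{thm:lv-g-elim-regret} essentially verbatim, with Lemma~\ref{lem:qmc-design-estimates} in place of Lemma~\ref{lem:hp-design-estimates}. Only the per-phase query count and the per-phase confidence change.

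In phase \(\ell\), \QMCGElim calls \(\QMCDesignEstimate\) with accuracy \(\epsilon_\ell=2^{-\ell}\) and confidence \(\delta'=\delta/(L_T+1)\). Define the good events \(\mathcal E_\ell\) and \(\mathcal E=\bigcap_\ell\mathcal E_\ell\) exactly as before. Conditional on the history \(\mathcal F_\ell\), the active set and the design are fixed, so Lemma~\ref{lem:qmc-design-estimates} gives \(\Pr(\mathcal E_\ell^c\mid\mathcal F_\ell)\le\delta'\). A union bound over the at most \(L_T+1\) phases then yields \(\Pr[\mathcal E]\ge1-\delta\).

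On \(\mathcal E\), the elimination claim carries over unchanged, since it uses only the accuracy guarantee \(\lvert\widehat\mu_\ell(x)-\mu(x)\rvert\le\epsilon_\ell\) and not how the estimates were obtained. Hence \(x^\star\) is never eliminated, and every action in \(\mathcal A_{\ell+1}\) has gap at most \(4\epsilon_\ell\). It follows that each query in phase \(\ell\) incurs regret at most \(8\epsilon_\ell\).

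The query count of phase \(\ell\) is deterministically \(Q_\ell=O(r_\ell^{3/2}\epsilon_\ell^{-1}\log(r_\ell/\delta'))\) by Lemma~\ref{lem:qmc-design-estimates}. Using \(r_\ell\le d\), this becomes \(Q_\ell=O(d^{3/2}\epsilon_\ell^{-1}\log\frac{d(L_T+1)}{\delta})\). The phase regret is therefore \(8\epsilon_\ell Q_\ell=O(d^{3/2}\log\frac{d\log T}{\delta})\), whether or not the phase completes before the budget runs out. Summing over \(L_T+1=O(\log T)\) phases gives the high-probability bound. If all phases finish early, the remaining at most \(T\) rounds use actions with gap at most \(4/T\), which adds at most a constant. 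The degenerate case \(\operatorname{span}(\mathcal A_\ell)=\{0\}\) is handled as in the algorithm description: all active actions then have mean \(0\), so they are all optimal whenever \(x^\star\) is active.

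For the expected bound, take \(\delta=1/T\) and use the deterministic bound \(R(T)\le T\). This gives
\[
\mathbb E[R(T)]\le O\big(d^{3/2}\log T\log(dT\log T)\big)+T\cdot\tfrac1T,
\]
and the first term simplifies via \(\log(dT\log T)\le2\log(dT)\).

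The only point requiring care is that Lemma~\ref{lem:qmc-design-estimates} requires \(\epsilon\le1\) and \(r\ge1\). Both hold here: \(\epsilon_\ell\le1\) for all \(\ell\), and phases with \(r_\ell=0\) are excluded by the algorithm. Beyond this, there is no real obstacle, because the argument is a direct substitution into the proof of Theorem~\ref{thm:lv-g-elim-regret}.
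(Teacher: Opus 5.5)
Your proposal is correct and follows the paper's proof: substitute Lemma~\ref{lem:qmc-design-estimates} for Lemma~\ref{lem:hp-design-estimates} in the proof of Theorem~\ref{thm:lv-g-elim-regret}, bound the deterministic query count of phase \(\ell\) by \(O(d^{3/2}\epsilon_\ell^{-1}\log\frac{d\log T}{\delta})\) so that each phase contributes \(O(d^{3/2}\log\frac{d\log T}{\delta})\) regret, sum over the \(O(\log T)\) phases, and take \(\delta=1/T\) for the expected bound. Your explicit checks that \(\epsilon_\ell\le1\) and \(r_\ell\ge1\) hold, and your treatment of the degenerate case, are minor details that the paper leaves implicit.
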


\begin{proof}
We follow the proof of Theorem~\ref{thm:lv-g-elim-regret}, replacing Lemma~\ref{lem:hp-design-estimates} by Lemma~\ref{lem:qmc-design-estimates}. The elimination argument and the instantaneous regret bound are unchanged.
The number of queries in phase \(\ell\) is always bounded by
\[
Q_\ell = O\left(\frac{r_\ell^{3/2}}{\epsilon_\ell}\log\frac{r_\ell}{\delta'}\right)
=O\left(\frac{d^{3/2}}{\epsilon_\ell}\log\frac{d\log T}{\delta}\right),
\]
where \(\delta'=\delta/(L_T+1)\). Hence the regret incurred in phase \(\ell\) is at most \(8\epsilon_\ell Q_\ell =O\big(d^{3/2}\log\frac{d\log T}{\delta}\big)\), and summing over the \(O(\log T)\) phases gives the high-probability bound.

For the expected regret bound, we run \QMCGElim with confidence parameter \(\delta=1/T\). As in the proof of Theorem~\ref{thm:lv-g-elim-regret}, the failure event contributes at most \(1\) to the expected regret, and \(\log(dT\log T)=O(\log(dT))\). This gives the stated bound.
\end{proof}

The two estimators thus offer a trade-off.
The QMC variant avoids the \(\log K\) factor because its high-probability guarantee requires a union bound only over the design support, but it pays the \(\sqrt d\) loss from worst-case error propagation through the prediction weights.
By contrast, the single-run design estimate underlying \(\LVQMEDesignEstimate\) is accurate for each fixed action only with constant probability, and the median amplification needed to cover all active actions is the source of the \(\log K\) factor in Theorem~\ref{thm:lv-g-elim-regret}.
When \(K=\operatorname{poly}(d)\), this logarithmic dependence is mild, and the regret of \LVGElim is nearly linear in \(d\).
For exponentially large action sets with \(\log K=\Theta(d)\), \QMCGElim can give a better dimension dependence.

\section{Conclusion}
\label{sec:conclusion}
We proved the first regret lower bounds for quantum bandits in the reward oracle model of \citet{wan2023quantum}: \(\Omega(K\log(T/K))\) for multi-armed bandits and \(\Omega(d\log(T/d))\) for linear bandits with only \(d\) actions.
The first bound nearly matches their \(O(K\log T)\) upper bound and resolves their question of whether \(T\)-independent regret is achievable.
Complementing the lower bounds, our algorithm \LVGElim attains regret nearly linear in \(d\) when \(K=\operatorname{poly}(d)\), improving the prior \(d^2\) dependence and addressing the dimension-dependence question raised by \citet{wan2023quantum}.

Several questions remain open.
For multi-armed bandits, it remains to close the logarithmic slack between our \(\Omega(K\log(T/K))\) lower bound and the \(O(K\log T)\) upper bound.
For general action sets, the optimal dimension dependence is unresolved.
The upper bounds are \(O(d^2\,\mathrm{polylog}\,T)\) from \citet{wan2023quantum} and \(O(d^{3/2}\,\mathrm{polylog}\,T)\) from a linear-kernel specialization of the analysis of \citet{hikima2024quantum} (see Appendix~\ref{app:hikima-linear}), while our lower bound is \(\Omega(d\log(T/d))\).
For finite action sets with \(K\) superpolynomial in \(d\), the nearly linear bound of \LVGElim{} carries an extra \(\log K\) factor, while the QMC variant removes this dependence at the cost of a \(d^{3/2}\) dimension factor.
Whether the two advantages can be combined, for instance by improving the \(\log K\) dependence toward the classical \(\sqrt{\log K}\), remains open.
On the lower bound side, our construction uses only \(K=d\) actions and does not rule out a mild \(K\) dependence for larger fixed action sets, which would have to remain consistent with the QMC upper bound.
Beyond these gaps, our upper bound may extend to the bounded-variance assumption via the estimator replacement sketched in Remark~\ref{rem:bounded-variance}, and we leave a full treatment to future work.

\newpage

\appendix
\section{Small-Support Approximate \ensuremath{G}-Optimal Designs}
\label{app:g-optimal-design}

This appendix gives the details for Lemma~\ref{lem:small-support-g-optimal-design}. The proof uses two standard tools from optimal experimental design: the Kiefer--Wolfowitz equivalence theorem of \citet{kiefer1960equivalence} and the rounding theorem and continuous relaxation method of \citet{allen2021near}.

\begin{theorem}[{\citealp{kiefer1960equivalence}}]
\label{thm:kw-g-optimal}
Let \(\mathcal Z\subseteq\mathbb R^{r}\) be finite with \(\operatorname{span}(\mathcal Z)=\mathbb R^{r}\). 
Then there exists a distribution \(w^\star\in\Delta(\mathcal Z)\) such that \(M_{w^\star}=\sum_{z\in\mathcal Z}w^\star(z)zz^\top\) is positive definite and \(\max_{z\in\mathcal Z}z^\top M_{w^\star}^{-1}z\le r\).
\end{theorem}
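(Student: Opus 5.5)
The natural route to Theorem~\ref{thm:kw-g-optimal} goes through $D$-optimal design and the first-order optimality conditions for the log-determinant. Consider the concave functional $f(w)=\log\det M_w$ on the simplex $\Delta(\mathcal Z)$. Since $\operatorname{span}(\mathcal Z)=\mathbb R^r$, the uniform distribution already gives a positive definite $M_w$, so $f$ is finite somewhere. The set $\{w: f(w)\ge f(w_{\mathrm{unif}})\}$ is a closed subset of the compact simplex: $\det M_w$ is continuous and bounded away from zero there. Hence $f$ attains its maximum at some $w^\star$ with $M_{w^\star}\succ0$. This settles positive definiteness immediately.

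Next comes the directional derivative at $w^\star$. For any $z\in\mathcal Z$, move toward the point mass $\mathbb 1_z$ and set $w_\eta=(1-\eta)w^\star+\eta\mathbb 1_z$ for $\eta\in[0,1)$, so that $M_{w_\eta}=(1-\eta)M_{w^\star}+\eta zz^\top$. Using $\frac{\mathrm d}{\mathrm d\eta}\log\det M=\operatorname{tr}(M^{-1}\,\mathrm dM)$, the right derivative at $\eta=0$ is
\[
\operatorname{tr}\bigl(M_{w^\star}^{-1}(zz^\top-M_{w^\star})\bigr)=z^\top M_{w^\star}^{-1}z-r .
\]
By maximality of $w^\star$ this right derivative must be $\le0$, which gives $z^\top M_{w^\star}^{-1}z\le r$ for every $z\in\mathcal Z$, exactly the claimed bound.

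The only step needing care is justifying the derivative computation and the optimality inequality. The feasible segment stays inside the simplex, and $M_{w_\eta}$ stays positive definite for small $\eta$, so $\eta\mapsto f(w_\eta)$ is differentiable from the right at $0$. The one-sided optimality condition then follows directly from maximality; concavity is not even needed for this direction. This is routine, so I expect no real obstacle.

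As a sanity check, I would also note the complementary identity $\sum_z w^\star(z)\,z^\top M_{w^\star}^{-1}z=\operatorname{tr}(I_r)=r$. It shows the bound $r$ is tight and equality holds on the support, but it is not needed for the statement as given.
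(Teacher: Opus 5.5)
Your proof is correct. The paper gives no proof of this statement. It cites it as the classical Kiefer--Wolfowitz equivalence theorem and uses it only in the appendix proof of Lemma~\ref{lem:small-support-g-optimal-design}, where the scaled design $mw^\star$ certifies that the mass-$m$ continuous design problem has optimal value at most $r/m$.

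Your route has three steps:
\begin{enumerate}
\item Maximize $w\mapsto\log\det M_w$ over the simplex.
\item Observe that the maximizer has $M_{w^\star}\succ0$, because the uniform design already does.
\item Read off $z^\top M_{w^\star}^{-1}z\le r$ from the nonpositive right derivative along $w_\eta=(1-\eta)w^\star+\eta\,\delta_z$.
\end{enumerate}
This is the standard proof of the direction ``$D$-optimal implies $G$-value at most $r$'', and that is exactly the direction the paper needs. As you note, only maximality and differentiability enter here, not concavity.

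What the citation adds beyond your argument is the full equivalence: $G$-optimal and $D$-optimal designs coincide, and the optimal $G$-value equals $r$. The main-text proof sketch of Lemma~\ref{lem:small-support-g-optimal-design} appeals to this when it says the continuous $G$-optimal value is $r$. You can recover the matching lower bound from your own trace identity. It holds for every $w$ with $M_w\succ0$, not only for $w^\star$, so $\max_{z}z^\top M_w^{-1}z\ge\sum_z w(z)\,z^\top M_w^{-1}z=\operatorname{tr}(I_r)=r$ for every design. The remaining converse ($G$-value at most $r$ implies $D$-optimality) is the part that genuinely requires the concavity of $\log\det$ that you set aside.
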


\begin{theorem}[{\citealp[Theorem~2.1]{allen2021near}}]
\label{thm:az-g-optimal-rounding}
Let \(z_1,\ldots,z_N\in\mathbb R^r\), and for \(s\in\mathbb R_{\ge0}^N\) write \(\Sigma_s=\sum_{i=1}^N s_i z_i z_i^\top\). 
Suppose \(\varepsilon\in(0,1/6]\) and \(N\ge m\ge 5r/\varepsilon^2\). Let \(u\in[0,1]^N\) satisfy \(\sum_i u_i\le m\) and \(\Sigma_u\succ0\). 
Then, in time \(\widetilde O(Nr^2)\), one can compute \(\widehat s\in\{0,1\}^N\) with \(\sum_i\widehat s_i\le m\) such that
\(\Sigma_{\widehat s}\succ0\) and \(\max_{i\in[N]} z_i^\top\Sigma_{\widehat s}^{-1}z_i \le (1+6\varepsilon)\max_{i\in[N]} z_i^\top\Sigma_u^{-1}z_i\).
\end{theorem}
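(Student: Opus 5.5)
The plan is to reduce to a whitened, isotropic setting and then construct \(\widehat s\) by a regret-minimization (swapping) argument, as in \citet{allen2021near}. The target is a spectral domination \(\Sigma_{\widehat s}\succeq\tau\,\Sigma_u\) with \(\tau\ge(1+6\varepsilon)^{-1}\). This suffices, because \(\Sigma_{\widehat s}\succeq\tau\Sigma_u\succ0\) gives \(z_i^\top\Sigma_{\widehat s}^{-1}z_i\le\tau^{-1}z_i^\top\Sigma_u^{-1}z_i\) for every \(i\), and hence the stated inequality between maxima.

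\textbf{Whitening.} Since \(\Sigma_u\succ0\), replace each \(z_i\) by \(\widetilde z_i:=\Sigma_u^{-1/2}z_i\). Quadratic forms \(z^\top\Sigma_s^{-1}z\) and Loewner orderings are invariant under this simultaneous congruence, so we may assume \(\Sigma_u=\sum_iu_i\widetilde z_i\widetilde z_i^\top=I_r\). The goal becomes: find a set \(S\subseteq[N]\) with \(|S|\le m\) and \(\lambda_{\min}\big(\sum_{i\in S}\widetilde z_i\widetilde z_i^\top\big)\ge 1-O(\varepsilon)\). Since \(1-O(\varepsilon)\ge(1+6\varepsilon)^{-1}\) for suitable constants and \(\varepsilon\le1/6\), this yields the claim.

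\textbf{Swapping with a regret-minimization potential.} Start from any \(S_0\) with \(|S_0|=m\) (possible since \(N\ge m\)), and run local swaps: at step \(t\), remove one index \(i_t\in S_t\) and add one index \(j_t\notin S_t\). Let \(Z_t=\sum_{i\in S_t}\widetilde z_i\widetilde z_i^\top\). Define the FTRL action matrix with \(\ell_{1/2}\)-regularizer, \(A_t=(c_tI-\alpha Z_t)^{-2}\), where \(\alpha\asymp\sqrt r/\varepsilon\) and \(c_t\) is the unique scalar making \(\operatorname{tr}A_t=1\). The matrix-FTRL regret bound for this regularizer, combined with a rank-two update formula for \(A_t\), gives an estimate of the form
\[
\lambda_{\min}(Z_T)\ \ge\ \frac1T\sum_{t<T}\Big(\frac{\langle A_t,\widetilde z_{j_t}\widetilde z_{j_t}^\top\rangle}{1+2\alpha\langle A_t^{1/2},\widetilde z_{j_t}\widetilde z_{j_t}^\top\rangle}-\frac{\langle A_t,\widetilde z_{i_t}\widetilde z_{i_t}^\top\rangle}{1-2\alpha\langle A_t^{1/2},\widetilde z_{i_t}\widetilde z_{i_t}^\top\rangle}\Big)\cdot(\text{scale})-\frac{2\sqrt r}{\alpha}.
\]
This is stated loosely. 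The precise normalization follows the FTRL analysis, and one runs \(T=O(m/\varepsilon)\) steps or stops earlier once \(\lambda_{\min}\) exceeds the target.

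\textbf{Main obstacle: a good swap always exists.} Each step needs a pair \((i_t,j_t)\) whose gain-minus-loss is at least \((1-O(\varepsilon))/m\) times the normalization. I would argue by averaging.

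For the addition, weight candidates \(j\) by \(u_j\). Since \(\sum_ju_j\widetilde z_j\widetilde z_j^\top=I\), we get \(\sum_ju_j\langle A_t,\widetilde z_j\widetilde z_j^\top\rangle=\operatorname{tr}A_t=1\). Using \(\sum_ju_j\le m\) and \(u_j\le1\), some \(j\) has normalized gain at least roughly \(1/m\). The denominator correction is controlled by \(\sum_ju_j\langle A_t^{1/2},\widetilde z_j\widetilde z_j^\top\rangle=\operatorname{tr}A_t^{1/2}\le\sqrt r\); after multiplying by \(\alpha\) and dividing by \(m\), this is \(O(\varepsilon)\) precisely when \(m\ge5r/\varepsilon^2\).

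For the removal, weight \(i\in S_t\) uniformly. Then \(\sum_{i\in S_t}\langle A_t,\widetilde z_i\widetilde z_i^\top\rangle=\langle A_t,Z_t\rangle\), which is small relative to \(\lambda_{\min}(Z_t)\) plus \(O(\sqrt r/\alpha)\) by the choice of \(c_t\). This gives a loss of at most \((\lambda_{\min}(Z_t)+O(\varepsilon))/m\).

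Combining the gain and loss bounds shows that \(\lambda_{\min}\) increases until it reaches \(1-O(\varepsilon)\). Tracking constants should yield the factor \(1+6\varepsilon\). Each step costs \(\widetilde O(r^2)\) amortized via rank-one updates and a binary search for \(c_t\), which I would verify separately to get total running time \(\widetilde O(Nr^2)\). Finally, \(\widehat s\) is the indicator of the final set, so \(\sum_i\widehat s_i\le m\) and \(\Sigma_{\widehat s}\succ0\).
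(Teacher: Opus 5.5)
The paper does not reprove this result. It invokes \citet[Theorem~2.1]{allen2021near} with multiplicity bound \(b=1\), and notes (citing their Fact~1.3) that \(f_G\) is monotone and reciprocally sub-linear. That is the abstract form of your reduction from \(\Sigma_{\widehat s}\succeq\tau\Sigma_u\) to the \(G\)-bound. Your whitening and \(\ell_{1/2}\)-FTRL swapping plan reconstructs the cited proof, but the step you call the crux, existence of a good swap, fails as written.

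The added index must satisfy \(j\notin S_t\), but your identity \(\sum_j u_j\langle A_t,\tilde z_j\tilde z_j^\top\rangle=1\) runs over all \(j\). Restricted to \(j\notin S_t\) it only gives \(\langle A_t,I-\sum_{i\in S_t}u_i\tilde z_i\tilde z_i^\top\rangle\). The uniform average \(\langle A_t,Z_t\rangle/m\) is too weak a bound on the best removal to make up the difference. For example, take \(r=1\) and let \(S_t\) consist of:
one index \(i_0\) with \(u_{i_0}=1\), \(\tilde z_{i_0}^2=1/2\); and
\(m-1\) indices with \(\tilde z_i=0\), \(u_i=0\).
Let the remaining \(m-1\) indices have \(u_j=1\), \(\tilde z_j^2=1/(2(m-1))\). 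Then \(\Sigma_u=1\), \(\sum_ju_j=m\), \(A_t=1\) and \(\lambda_{\min}(Z_t)=1/2\). Every admissible \(j\) has gain at most \(1/(2(m-1))\), not about \(1/m\), while your loss bound is about \(1/(2m)\). So your two estimates certify essentially no progress, even though a swap with progress about \((1-\lambda_{\min}(Z_t))/m\) exists (remove a zero vector).

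The missing idea is to average removals with weights \(1-u_i\ge0\) over \(i\in S_t\). This is where \(u\in[0,1]^N\) is actually used; in your sketch \(u_j\le1\) plays no role. Since \(|S_t|=m\ge\sum_ju_j\), one has \(\sum_{i\in S_t}(1-u_i)\ge\sum_{j\notin S_t}u_j=:W\). Hence the best gain minus the best loss is at least \(1/W\) times the difference of the two weighted sums (if \(W=0\), already \(Z_t\succeq I\)). The numerators then telescope: \(\langle A_t,\sum_{j\notin S_t}u_j\tilde z_j\tilde z_j^\top-\sum_{i\in S_t}(1-u_i)\tilde z_i\tilde z_i^\top\rangle=\langle A_t,I-Z_t\rangle\ge1-\lambda_{\min}(Z_t)-\sqrt r/\alpha\).

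Several smaller points also need repair:
\begin{itemize}
\item The trace normalization is not unique: it has a root on each side of the spectrum. You need \(c_t<\alpha\lambda_{\min}(Z_t)\), i.e.\ \(A_t=(c_t'I+\alpha Z_t)^{-2}\) with \(c_t'I+\alpha Z_t\succ0\). On the branch \(c_tI-\alpha Z_t\succ0\) one gets \(\langle A_t,Z_t\rangle\ge\lambda_{\max}(Z_t)-\sqrt r/\alpha\), so the bound you use for the removal fails.
\item The regret bound is a plain cumulative sum over steps, with no factor \(1/T\).
\item The removal denominators \(1-2\alpha\langle A_t^{1/2},\tilde z_i\tilde z_i^\top\rangle\) are not automatically positive; for \(i_0\) above this quantity equals \(1-\alpha<0\). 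Which indices may be removed therefore needs its own argument.
\item A target \(\lambda_{\min}\ge1-C\varepsilon\) yields the factor \(1+6\varepsilon\) on all of \((0,1/6]\) only when \(C\le3\).
\end{itemize}
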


In the notation of \citet{allen2021near}, Theorem~\ref{thm:az-g-optimal-rounding} is their Theorem~2.1 applied with multiplicity bound \(b=1\) and the \(G\)-optimality criterion \(f_G(\Sigma)=\max_{i\in[N]}z_i^\top\Sigma^{-1}z_i\), which satisfies the monotonicity and reciprocal sub-linearity assumptions required there \citep[Fact~1.3]{allen2021near}.
We also use the continuous relaxation method of \citet[Section~3]{allen2021near} to compute a constant-factor approximate continuous \(G\)-optimal design.

\begin{proof}[Proof of Lemma~\ref{lem:small-support-g-optimal-design}]
Let \(V=\operatorname{span}(\mathcal X)\). Choose an orthonormal basis \(U\in\mathbb R^{d\times r}\) for \(V\), and write \(\tilde x=U^\top x\in\mathbb R^r\) for each \(x\in\mathcal X\). Since \(U^\top\) is an isometry on \(V\), the map \(x\mapsto \tilde x\) is a bijection from \(\mathcal X\) onto \(\widetilde{\mathcal X}:=\{\tilde x:x\in\mathcal X\}\), and \(\widetilde{\mathcal X}\) spans \(\mathbb R^r\). 
We first construct the design in these coordinates.

Fix a universal constant \(\varepsilon_0\in(0,1/6]\) and set \(m=\lceil 5r/\varepsilon_0^2\rceil\le c_sr\), where \(c_s:=5/\varepsilon_0^2+1\), using \(r\ge1\). 
We compute an approximately \(G\)-optimal distribution on \(\widetilde{\mathcal X}\). Consider the continuous design problem of mass \(m\) on \(\widetilde{\mathcal X}\), namely minimizing \(f_G(\Sigma_u)=\max_{\tilde x\in\widetilde{\mathcal X}} \tilde x^\top\Sigma_u^{-1}\tilde x\) over fractional designs \(u\ge0\) of total mass \(m\).
By Theorem~\ref{thm:kw-g-optimal}, there exists a distribution \(w^\star\) on \(\widetilde{\mathcal X}\) with \(\max_{\tilde x}\tilde x^\top M_{w^\star}^{-1}\tilde x\le r\); since \(f_G(t\Sigma)=t^{-1}f_G(\Sigma)\), the scaled design \(mw^\star\) is feasible and shows that the optimal value of the mass-\(m\) problem is at most \(r/m\).
The criterion \(f_G\) satisfies the convexity and Lipschitz assumptions of \citet[Remark~3.4]{allen2021near}, so the entropic
mirror descent method of \citet[Corollary~3.9]{allen2021near}, applied with budget \(k=m\), multiplicity bound \(b=m\), and relative error \(\delta=1/2\), computes a fractional design \(u_0\) of total mass \(m\) with \(f_G(\Sigma_{u_0})\le\tfrac32\,r/m\).
Writing \(u_0=mw\) for a distribution \(w\) on \(\widetilde{\mathcal X}\), the matrix \(\widetilde M_w=\sum_{\tilde x}w(\tilde x)\tilde x\tilde x^\top\) satisfies \(\max_{\tilde x\in\widetilde{\mathcal X}} \tilde x^\top\widetilde M_w^{-1}\tilde x\le C_0r\) with \(C_0:=\tfrac32\); in particular, \(\widetilde M_w\) is positive definite, since its \(G\)-value is finite and \(\widetilde{\mathcal X}\) spans \(\mathbb R^r\).

Following the standard reduction from multiplicity \(m\) to the \(b=1\) case in \citet{allen2021near}, replace each point \(\tilde x\in\widetilde{\mathcal X}\) by \(m\) identical copies.
The copied list has \(N=m|\widetilde{\mathcal X}|\ge m\) points. 
On this copied list, let \(u'\) be the fractional design assigning weight \(w(\tilde x)\) to each copy of \(\tilde x\). Then \(u'\) has all coordinates in \([0,1]\),
total mass \(m\), and information matrix \(\Sigma_{u'}=m\widetilde M_w\succ0\), so its \(G\)-value is at most \(C_0r/m\). 
Applying Theorem~\ref{thm:az-g-optimal-rounding} with budget \(m\) and \(\varepsilon=\varepsilon_0\), we obtain an integral design \(\widehat s\) on the copied list with total mass \(\widehat m\le m\), positive definite information matrix, and \(G\)-value at most \(C_1r/m\), where \(C_1:=(1+6\varepsilon_0)C_0\).

Merge identical copies back into the points of \(\widetilde{\mathcal X}\). 
Let \(\widehat s_{\tilde x}\) be the resulting multiplicity of \(\tilde x\), let \(\widetilde S=\{\tilde x:\widehat s_{\tilde x}>0\}\), and define
\(\widetilde\pi(\tilde x)=\widehat s_{\tilde x}/\widehat m\) for \(\tilde x\in\widetilde S\). Then
\(|\widetilde S|\le \widehat m\le m\le c_s r\). 
With \(\widetilde A_{\widehat s} =\sum_{\tilde x}\widehat s_{\tilde x}\tilde x\tilde x^\top\succ0\), the normalized matrix
\(\widetilde M =\sum_{\tilde x\in\widetilde S}\widetilde\pi(\tilde x) \tilde x\tilde x^\top =\widetilde A_{\widehat s}/\widehat m\)
satisfies
\[
\max_{\tilde x\in\widetilde{\mathcal X}} \tilde x^\top\widetilde M^{-1}\tilde x
= \widehat m \max_{\tilde x\in\widetilde{\mathcal X}} \tilde x^\top\widetilde A_{\widehat s}^{-1}\tilde x
\le \widehat m\cdot\frac{C_1r}{m}
\le C_1r .
\]

Now map the design back to the original space. 
Let \(S\subseteq\mathcal X\) be the preimage of \(\widetilde S\) under \(x\mapsto\tilde x\), and set \(\pi(x)=\widetilde\pi(U^\top x)\) for \(x\in S\). 
Then \(|S|=|\widetilde S|\le c_s r\). The original information matrix is \(M=U\widetilde M U^\top\). 
Since \(\widetilde M\succ0\), we have \(\operatorname{Range}(M)=V=\operatorname{span}(\mathcal X)\) and \(M^+=U\widetilde M^{-1}U^\top\). 
Hence
\[
    \max_{x\in\mathcal X}x^\top M^+x
    =
    \max_{\tilde x\in\widetilde{\mathcal X}}
    \tilde x^\top\widetilde M^{-1}\tilde x
    \le C_1r .
\]
This proves the design bound with \(C_G=C_1\).

It remains to prove the identities for the prediction weights. 
For \(\alpha_a(x)=\pi(a)a^\top M^+x\), since \(x\in\operatorname{Range}(M)\),
\[
    \sum_{a\in S}\alpha_a(x)a = \sum_{a\in S}\pi(a)aa^\top M^+x = MM^+x = x.
\]
Moreover,
\[
\sum_{a\in S}\frac{\alpha_a(x)^2}{\pi(a)} = \sum_{a\in S}\pi(a)(a^\top M^+x)^2 = x^\top M^+MM^+x = x^\top M^+x.
\]
The bound \(x^\top M^+x\le C_G r\) follows from the design bound above.

Finally, we account for the running time.
The continuous design problem is convex, and a constant-factor approximate solution suffices for the proof above, which can be computed in polynomial time by the ellipsoid method, as noted by \citet[Section~3]{allen2021near}.
The rounding step runs in time \(\widetilde O(Nr^2)=\widetilde O(|\mathcal X|\,r^3)\) by Theorem~\ref{thm:az-g-optimal-rounding}, since the copied list has \(N=m|\mathcal X|=O(r|\mathcal X|)\) points.
The remaining steps, namely the basis computation, copying, merging, and normalization, take time polynomial in \(|\mathcal X|\) and \(d\).
Hence the design of Lemma~\ref{lem:small-support-g-optimal-design} is computable in time polynomial in \(|\mathcal X|\) and \(d\).
\end{proof}

\section{An \ensuremath{O(d\sp{3/2}\,\mathrm{polylog}\,T)} Bound for General Action Sets}
\label{app:hikima-linear}

In this appendix, we specialize the analysis of \citet{hikima2024quantum} to the \(d\)-dimensional linear kernel.
Although they do not state a separate result for quantum linear bandits, their confidence bound, combined with a determinant-based stage-count argument, yields an \(O(d^{3/2}\,\mathrm{polylog}\,T)\) regret bound for general, possibly infinite, action sets.

Throughout, \(\mathcal A\subseteq\{x\in\mathbb R^d:\|x\|_2\le1\}\) is an arbitrary action set, \(\|\theta\|_2\le1\), and \(\mu(x)=x^\top\theta\in[0,1]\), as in Section~\ref{sec:prelim}, and we assume \(T\ge2\).
The linear kernel \(k(x,z)=x^\top z\) has feature map \(\phi(x)=x\) and reproducing kernel Hilbert space \(\mathbb R^d\), so the boundedness assumptions of \citet{hikima2024quantum} hold with \(\sup_{x,z}|k(x,z)|\le1\) and \(\|\theta\|_{\mathcal H_k}=\|\theta\|_2\le1\).
Their eigendecay machinery is needed only to bound the number of stages for infinite-dimensional kernels and is not used here.
We run their algorithm \textsc{QMCKernelUCB} with regularization parameter \(\rho=1\) and tradeoff parameter \(\eta=1\), and assume that an action maximizing the UCB index below is attained in each stage (e.g., \(\mathcal A\) compact).
Otherwise, a \(1/T\)-approximate maximizer changes the bounds by at most an additive constant.
If \(\mathcal A=\{0\}\), the regret is identically zero.
Otherwise, removing the zero vector does not change the optimal mean, since all mean rewards lie in \([0,1]\).
Thus we may assume \(0\notin\mathcal A\), and hence \(\varepsilon_s>0\) in every stage below.

For the linear kernel, the kernel expressions of \citet[Proposition~4.1]{hikima2024quantum} reduce to weighted ridge regression, and the algorithm takes the same weighted ridge-regression form as QLinUCB of \citet{wan2023quantum} but uses a smaller confidence radius.
Given \(T\), a failure probability \(\delta\in(0,1)\), and a stage upper bound \(M\), the algorithm maintains \(V_s=I_d+\sum_{k=1}^{s}\varepsilon_k^{-2}x_kx_k^\top\) and \(\widehat\theta_s=V_s^{-1}\sum_{k=1}^{s}\varepsilon_k^{-2}y_kx_k\).
In each stage \(s=1,2,\ldots\), terminating at round \(T\), it selects \(x_s\in\argmax_{x\in\mathcal A}\bigl(x^\top\widehat\theta_{s-1}+\beta_{s-1}\|x\|_{V_{s-1}^{-1}}\bigr)\) with \(\beta_{s-1}:=1+\sqrt{s-1}\), sets \(\varepsilon_s:=\|x_s\|_{V_{s-1}^{-1}}\le\|x_s\|_2\le1\), and plays \(x_s\) for the next \(q_s:=\lceil\frac{C_1}{\varepsilon_s}\log\frac M\delta\rceil\) rounds, which suffice for the at most \(q_s\) oracle queries of \(\QMC(\mathcal O_{x_s},\varepsilon_s,\delta/M)\) in Lemma~\ref{lem:qmc}, producing an estimate \(y_s\) of \(x_s^\top\theta\) when the stage completes.
Every stage thus occupies exactly \(q_s\) consecutive rounds, with \(x_s\) selected for the entire block even if the \(\QMC\) call finishes early, unless the horizon is exhausted during the block, in which case the stage remains incomplete and \(V_s\) is not updated.
The estimator uses only the oracles \(\mathcal O_{x_s}\) and \(\mathcal O_{x_s}^\dagger\), by the accounting of Remark~\ref{rem:controlled}.

The first ingredient is the confidence bound of \citet{hikima2024quantum}, specialized to this setting.
It improves the width \(1+\sqrt{ds}\) in the corresponding step of \citet[Lemma~3]{wan2023quantum} to \(1+\sqrt s\), which is the source of the improvement from \(d^2\) to \(d^{3/2}\).

\begin{lemma}[{Confidence bound; from \citealp[Proposition~4.2]{hikima2024quantum} with \(\rho=\eta=S=1\)}]
\label{lem:hikima-confidence}
Let \(m\) be the total number of stages initiated by the algorithm and suppose \(M\ge m\).
With probability at least \(1-\delta\), for every \(s\in\{0,1,\ldots,m-1\}\) and every \(x\in\mathcal A\), we have \(\lvert x^\top\theta-x^\top\widehat\theta_s\rvert\le\beta_s\|x\|_{V_s^{-1}}\), where \(\beta_s=1+\sqrt s\).
\end{lemma}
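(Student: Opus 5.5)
The plan is to prove the bound directly with a ridge-regression error decomposition rather than by re-deriving \citet[Proposition~4.2]{hikima2024quantum} in kernel form. The argument has three steps: a good event on which every completed stage's \(\QMC\) estimate is accurate, a decomposition of \(\widehat\theta_s-\theta\) into a regularization part and a noise part, and an unweighted Cauchy--Schwarz bound on the noise part that yields the width \(\sqrt s\) instead of \(\sqrt{ds}\).

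\textbf{Good event.} For each completed stage \(k\le m\), write \(\eta_k:=y_k-x_k^\top\theta\). The pair \((x_k,\varepsilon_k)\) is determined by the history before stage \(k\). Lemma~\ref{lem:qmc}, applied conditionally on that history with accuracy \(\varepsilon_k\) and confidence \(\delta/M\), therefore gives \(\Pr(|\eta_k|>\varepsilon_k\mid\text{history})\le\delta/M\). An incomplete final stage produces no \(y_k\) and never enters \(V_s\), so it plays no role. A union bound over the at most \(m\le M\) stages gives the event \(\mathcal G:=\{|\eta_k|\le\varepsilon_k\text{ for every completed }k\}\), which has probability at least \(1-\delta\). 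The remaining steps are deterministic on \(\mathcal G\).

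\textbf{Decomposition.} Since \(\sum_{k\le s}\varepsilon_k^{-2}x_kx_k^\top=V_s-I\), we have
\[
\widehat\theta_s-\theta=V_s^{-1}\sum_{k=1}^s\varepsilon_k^{-2}\eta_kx_k-V_s^{-1}\theta .
\]
For the regularization term, \(V_s\succeq I\) and \(\|\theta\|_2\le1\) give
\[
|x^\top V_s^{-1}\theta|\le\|x\|_{V_s^{-1}}\|\theta\|_{V_s^{-1}}\le\|x\|_{V_s^{-1}} .
\]
This accounts for the constant \(1\) in \(\beta_s\).

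\textbf{Noise term.} This is the only step needing care, since it is where the improvement over \(\sqrt{ds}\) comes from. On \(\mathcal G\) we have \(|\varepsilon_k^{-2}\eta_k|\le\varepsilon_k^{-1}\). Applying the unweighted Cauchy--Schwarz inequality over the \(s\) stages gives
\[
\Big|\sum_{k=1}^s\varepsilon_k^{-2}\eta_k\,x^\top V_s^{-1}x_k\Big|
\le\sum_{k=1}^s\frac{|x^\top V_s^{-1}x_k|}{\varepsilon_k}
\le\sqrt s\Big(\sum_{k=1}^s\varepsilon_k^{-2}(x^\top V_s^{-1}x_k)^2\Big)^{1/2}.
\]
The sum inside equals \(x^\top V_s^{-1}(V_s-I)V_s^{-1}x\le x^\top V_s^{-1}x\), so the noise term is at most \(\sqrt s\,\|x\|_{V_s^{-1}}\). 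The point of this step is to avoid any self-normalized martingale or determinant argument: the errors \(\eta_k\) are only bounded, not sub-Gaussian, so the deterministic bound \(|\eta_k|\le\varepsilon_k\) is the only available control, and it suffices.

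Adding the two terms gives \(|x^\top\theta-x^\top\widehat\theta_s|\le(1+\sqrt s)\|x\|_{V_s^{-1}}=\beta_s\|x\|_{V_s^{-1}}\) simultaneously for all \(s\le m-1\) and all \(x\in\mathcal A\) on \(\mathcal G\). This proves the lemma with probability at least \(1-\delta\).
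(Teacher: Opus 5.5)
Your proof is correct. The paper gives no derivation of its own here. It observes that for the linear kernel the posterior of \citet{hikima2024quantum} reduces to the weighted ridge regression \((V_s,\widehat\theta_s)\), imports their Proposition~4.2 with \(\rho=\eta=S=1\), and only remarks that the case \(s=0\) is deterministic. You instead prove the specialized statement directly. On the event that every completed \(\QMC\) call is accurate, you split \(\widehat\theta_s-\theta\) into a regularization part and a noise part. The regularization part is bounded by \(\|\theta\|_{V_s^{-1}}\le\|\theta\|_2\le1\). The noise part is bounded deterministically through \(|\eta_k|\le\varepsilon_k\), Cauchy--Schwarz over the \(s\) stages, and \(\sum_{k\le s}\varepsilon_k^{-2}(x^\top V_s^{-1}x_k)^2=x^\top V_s^{-1}(V_s-I)V_s^{-1}x\le\|x\|_{V_s^{-1}}^2\). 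The citation buys brevity. Your route buys a self-contained, elementary proof that shows why the width is \(\sqrt s\), with no factor of \(d\) and no sub-Gaussianity assumption on the \(\QMC\) errors. It also checks bookkeeping the citation leaves implicit: \((x_k,\varepsilon_k)\) is fixed by the history, so Lemma~\ref{lem:qmc} applies conditionally, and an incomplete final stage never enters \(V_s\) for \(s\le m-1\). Two details deserve an explicit sentence. First, since \(m\) is random, take the union bound over the fixed indices \(k=1,\ldots,M\) of the events that stage \(k\) is completed and \(|\eta_k|>\varepsilon_k\). Completion is decided before the stage starts, because each stage occupies exactly \(q_k\) rounds, and this covers every completed stage precisely because \(M\ge m\). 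Second, Lemma~\ref{lem:qmc} needs \(\varepsilon_k\in(0,1]\), and positivity comes from the paper's reduction to \(0\notin\mathcal A\).
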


Here the case \(s=0\) holds deterministically, since \(\widehat\theta_0=0\) and \(\lvert x^\top\theta\rvert\le\|x\|_2\|\theta\|_2\le\|x\|_{V_0^{-1}}\) with \(V_0=I_d\).

The second ingredient is a deterministic bound on the number of stages.

\begin{lemma}[Stage count]
\label{lem:hikima-stage-count}
Let \(m^\star:=\lceil d\log_2(1+T^2/d)\rceil\).
For any \(M\ge3\) and \(\delta\in(0,1)\), at most \(m^\star\) stages are completed, and at most \(m^\star+1\) stages are initiated before the horizon is exhausted.
\end{lemma}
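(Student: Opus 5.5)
The plan is the standard elliptical-potential / determinant argument, using that a stage is completed only if its whole block of \(q_s\) rounds fits in the horizon. Every completed stage \(s\) uses \(q_s\ge C_1/\varepsilon_s\cdot\log(M/\delta)\ge 1/\varepsilon_s\) rounds, because \(C_1>1\) and \(\log(M/\delta)\ge\log 3>1\) when \(M\ge3\) and \(\delta<1\). The blocks are disjoint, so if \(n\) stages are completed then \(\sum_{s=1}^{n}\varepsilon_s^{-1}\le T\), and hence \(\varepsilon_s\ge 1/T\) for each completed stage.

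Next I will track the determinant. Since \(V_s=V_{s-1}+\varepsilon_s^{-2}x_sx_s^\top\) and \(\varepsilon_s^2=x_s^\top V_{s-1}^{-1}x_s\), the matrix determinant lemma gives
\[
\det V_s=\det V_{s-1}\bigl(1+\varepsilon_s^{-2}x_s^\top V_{s-1}^{-1}x_s\bigr)=2\det V_{s-1}.
\]
The identity \(\varepsilon_s^2=x_s^\top V_{s-1}^{-1}x_s\) holds by the definition of \(\varepsilon_s\), and \(\varepsilon_s>0\) because \(0\notin\mathcal A\) and \(V_{s-1}\succ0\). After \(n\) completed stages this gives \(\det V_n=2^n\).

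For the upper bound I will use AM--GM on eigenvalues:
\[
\det V_n\le\bigl(\operatorname{tr}V_n/d\bigr)^d=\Bigl(1+\tfrac1d\sum_{s\le n}\varepsilon_s^{-2}\|x_s\|_2^2\Bigr)^d\le\Bigl(1+\tfrac1d\sum_{s\le n}\varepsilon_s^{-2}\Bigr)^d .
\]
Since \(\varepsilon_s^{-1}\ge1\), I can bound \(\sum_{s}\varepsilon_s^{-2}\le(\sum_s\varepsilon_s^{-1})^2\le T^2\). Combining the two bounds gives \(2^n\le(1+T^2/d)^d\), so \(n\le d\log_2(1+T^2/d)\le m^\star\).

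For the count of initiated stages, note that every initiated stage other than the last one must have been completed. A stage is not completed only when the horizon runs out during its block, and after that no further stage starts. Therefore at most \(m^\star+1\) stages are initiated.

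The main point needing care is the bound \(\sum_s\varepsilon_s^{-2}\le T^2\). It requires \(q_s\ge\varepsilon_s^{-1}\), and this is exactly where the hypothesis \(M\ge3\) (making the logarithm at least \(1\)) enters. The other delicate point is that \(\det V_n=2^n\) uses only completed stages, since \(V\) is not updated for an incomplete stage.
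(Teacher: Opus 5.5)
Your proposal is correct and follows essentially the same argument as the paper. Both use determinant doubling via the matrix determinant lemma, AM--GM on the eigenvalues of $V_n$ together with $\|x_s\|_2\le1$, and the bound $\sum_s\varepsilon_s^{-2}\le(\sum_s\varepsilon_s^{-1})^2\le T^2$ obtained from $q_s\ge\varepsilon_s^{-1}$ (where $M\ge3$ and $\delta<1$ enter), plus one extra initiated stage for the possibly incomplete final block.
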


\begin{proof}
By the choice of \(\varepsilon_s\) and the matrix determinant lemma, each completed stage doubles the determinant, since \(\det(V_s)=\det(V_{s-1})\bigl(1+\varepsilon_s^{-2}\|x_s\|_{V_{s-1}^{-1}}^2\bigr)=2\det(V_{s-1})\), so after \(m\) completed stages \(\det(V_m)=2^m\).
Applying the arithmetic--geometric mean inequality to the eigenvalues of \(V_m\), and using \(\|x_k\|_2\le1\), we obtain \(d\,2^{m/d}\le\operatorname{tr}(V_m)\le d+\sum_{k=1}^{m}\varepsilon_k^{-2}\).
On the other hand, each completed stage \(k\) occupies exactly \(q_k\) rounds within the horizon, so \(T\ge\sum_{k=1}^{m}q_k\ge\sum_{k=1}^{m}\varepsilon_k^{-1}\ge\bigl(\sum_{k=1}^{m}\varepsilon_k^{-2}\bigr)^{1/2}\), using \(C_1\log(M/\delta)\ge1\), which holds since \(M\ge3\) and \(\delta<1\), and \(\sum_ka_k\ge\bigl(\sum_ka_k^2\bigr)^{1/2}\) for \(a_k\ge0\).
Hence \(\sum_{k=1}^{m}\varepsilon_k^{-2}\le T^2\), which is \citet[Lemma~C.3]{hikima2024quantum} specialized to \(\eta=1\).
Combining the two bounds gives \(2^{m/d}\le1+T^2/d\), so at most \(m^\star\) stages are completed.
At most one further stage is in progress when the horizon is exhausted, and hence at most \(m^\star+1\) stages are initiated.
\end{proof}

\begin{proposition}[General action sets]
\label{prop:hikima-linear}
Run the algorithm above with \(M:=\max\{3,\,m^\star+1\}\).
With probability at least \(1-\delta\), its regret satisfies \(R(T)=O\bigl(d^{3/2}\log^{3/2}(T)\log\frac{dT}{\delta}\bigr)\).
Taking \(\delta=1/T\) gives \(\mathbb E[R(T)]=O\bigl(d^{3/2}\log^{3/2}(T)\log(dT)\bigr)\).
\end{proposition}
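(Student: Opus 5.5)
The argument is a standard optimism computation run on the high-probability event of Lemma~\ref{lem:hikima-confidence}, with the number of stages bounded by Lemma~\ref{lem:hikima-stage-count}. Since \(M=\max\{3,m^\star+1\}\) is at least the number of initiated stages \(m\le m^\star+1\), Lemma~\ref{lem:hikima-confidence} applies. So with probability at least \(1-\delta\), for every initiated stage \(s\) and every \(x\in\mathcal A\),
\[
\lvert x^\top\theta-x^\top\widehat\theta_{s-1}\rvert\le\beta_{s-1}\|x\|_{V_{s-1}^{-1}}.
\]
On this event, the UCB choice of \(x_s\) gives the per-round gap bound
\[
\mu^\star-\mu(x_s)\le 2\beta_{s-1}\|x_s\|_{V_{s-1}^{-1}}=2\beta_{s-1}\varepsilon_s .
\]
The argument is the usual one: \(\mu^\star\le x^{\star\top}\widehat\theta_{s-1}+\beta_{s-1}\|x^\star\|_{V_{s-1}^{-1}}\le x_s^\top\widehat\theta_{s-1}+\beta_{s-1}\varepsilon_s\le \mu(x_s)+2\beta_{s-1}\varepsilon_s\). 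This holds for incomplete stages too, because only \(V_{s-1}\) and \(\widehat\theta_{s-1}\) are used.

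Stage \(s\) occupies at most \(q_s\le \frac{C_1}{\varepsilon_s}\log\frac M\delta+1\) rounds. Its regret is therefore at most
\[
q_s\cdot 2\beta_{s-1}\varepsilon_s\le 2\beta_{s-1}\Big(C_1\log\tfrac M\delta+\varepsilon_s\Big)\le 2\beta_{s-1}\Big(C_1\log\tfrac M\delta+1\Big),
\]
using \(\varepsilon_s\le1\). The factor \(\varepsilon_s^{-1}\) in the stage length cancels exactly with the width \(\varepsilon_s\). This cancellation is the reason the QMC accuracy is set to \(\varepsilon_s=\|x_s\|_{V_{s-1}^{-1}}\).

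Next I sum over \(s\le m^\star+1\), using \(\beta_{s-1}=1+\sqrt{s-1}\). This gives
\[
\sum_{s=1}^{m^\star+1}\beta_{s-1}=O\big((m^\star)^{3/2}\big)=O\big(d^{3/2}\log^{3/2}(1+T^2/d)\big)=O\big(d^{3/2}\log^{3/2}T\big).
\]
The last step uses \(\log(1+T^2/d)\le\log(1+T^2)=O(\log T)\) for \(T\ge2\).

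For the logarithmic factor, note \(M=O(d\log T)\). Hence \(\log\frac M\delta=O\big(\log\frac{d\log T}{\delta}\big)=O\big(\log\frac{dT}{\delta}\big)\). Multiplying the two bounds gives \(R(T)=O\big(d^{3/2}\log^{3/2}(T)\log\frac{dT}\delta\big)\).

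For the expectation, I set \(\delta=1/T\). The regret is deterministically at most \(T\), since gaps are at most \(1\), so the failure event contributes at most \(T\cdot\frac1T=1\). Also \(\log(dT^2)=O(\log(dT))\).

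The main obstacle is bookkeeping rather than analysis. I need to make sure that the event of Lemma~\ref{lem:hikima-confidence} covers every stage that is actually played, including the last, incomplete one. This is why \(M\) is taken to be at least \(m^\star+1\) and not just \(m^\star\). The rest is routine once the per-stage regret is shown to be independent of \(\varepsilon_s\).
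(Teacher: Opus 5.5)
Your proposal is correct and follows the paper's proof step for step. Optimism on the event of Lemma~\ref{lem:hikima-confidence} (which applies since \(m\le m^\star+1\le M\)) gives per-round regret \(2\beta_{s-1}\varepsilon_s\), including for the final incomplete stage. The \(\varepsilon_s^{-1}\) stage length then cancels this to \(O(\beta_{s-1}\log\frac M\delta)\) per stage. Summing over at most \(M=O(d\log T)\) stages gives \(O(M^{3/2}\log\frac M\delta)\), and \(\delta=1/T\) with \(R(T)\le T\) handles the expectation.

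The only cosmetic difference is that you sum \(\sum_s(1+\sqrt{s-1})\) directly, whereas the paper bounds each \(\beta_{s-1}\le 1+\sqrt M\). Both give the same order.
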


\begin{proof}
By Lemma~\ref{lem:hikima-stage-count}, the number \(m\) of initiated stages satisfies \(m\le m^\star+1\le M\), so Lemma~\ref{lem:hikima-confidence} applies, and we condition on its event.
Fix an initiated stage \(s\) and let \(x^\star\in\argmax_{x\in\mathcal A}x^\top\theta\).
By Lemma~\ref{lem:hikima-confidence} applied at stage \(s-1\) to \(x^\star\), the selection rule, and Lemma~\ref{lem:hikima-confidence} applied at stage \(s-1\) to \(x_s\), we have \((x^\star)^\top\theta\le(x^\star)^\top\widehat\theta_{s-1}+\beta_{s-1}\|x^\star\|_{V_{s-1}^{-1}}\le x_s^\top\widehat\theta_{s-1}+\beta_{s-1}\varepsilon_s\le x_s^\top\theta+2\beta_{s-1}\varepsilon_s\), so every round of stage \(s\) incurs instantaneous regret at most \(2\beta_{s-1}\varepsilon_s\).
This applies to the final stage as well, whether or not it completes, since its action is selected from the confidence set of the preceding stage.
Stage \(s\) occupies at most \(q_s\le\frac{C_1}{\varepsilon_s}\log\frac M\delta+1\) rounds, so the regret incurred in stage \(s\) is at most \(2C_1\beta_{s-1}\log\frac M\delta+2\beta_{s-1}\varepsilon_s\).
There are at most \(M\) initiated stages, and \(\beta_{s-1}\le1+\sqrt M\), so summing over stages gives \(R(T)=O\bigl(M^{3/2}\log\frac M\delta\bigr)\) on this event, which is the assembly of \citet[Proposition~5.2]{hikima2024quantum} with \(\eta=1\).
Since \(M=O(d\log T)\) and \(\log(M/\delta)=O(\log\frac{dT}\delta)\), the high-probability bound follows.
Finally, take \(\delta=1/T\).
Since \(\mu(\mathcal A)\subseteq[0,1]\), each round incurs regret at most \(1\) and \(R(T)\le T\) deterministically, so the failure event contributes at most \(\delta T=1\) to the expected regret, and \(\log(MT)=O(\log(dT))\) gives the expected bound.
\end{proof}

\bibliography{reference}

@inproceedings{wang2021quantum,
  title={Quantum exploration algorithms for multi-armed bandits},
  author={Wang, Daochen and You, Xuchen and Li, Tongyang and Childs, Andrew M},
  booktitle={Proceedings of the AAAI Conference on Artificial Intelligence},
  volume={35},
  number={11},
  pages={10102--10110},
  year={2021}
}

@inproceedings{wan2023quantum,
  title     = {Quantum multi-armed bandits and stochastic linear bandits enjoy logarithmic regrets},
  author    = {Wan, Zongqi and Zhang, Zhijie and Li, Tongyang and Zhang, Jialin and Sun, Xiaoming},
  booktitle = {Proceedings of the AAAI Conference on Artificial Intelligence},
  volume    = {37},
  number    = {8},
  pages     = {10087--10094},
  year      = {2023}
}

@article{kiefer1960equivalence,
  title={The equivalence of two extremum problems},
  author={Kiefer, Jack and Wolfowitz, Jacob},
  journal={Canadian Journal of Mathematics},
  volume={12},
  pages={363--366},
  year={1960},
  publisher={Cambridge University Press}
}

@inproceedings{wang2025best,
  title = {Best Arm Identification with Quantum Oracles},
  author = {Wang, Xuchuang and Chen, Yu-Zhen Janice and Guedes de Andrade, Matheus and Allcock, Jonathan and Hajiesmaili, Mohammad and Lui, John C.S. and Towsley, Don},
  booktitle = {The 39th Annual AAAI Conference on Artificial Intelligence},
  year = {2025}
}

@article{montanaro2015quantum,
  title={Quantum speedup of Monte Carlo methods},
  author={Montanaro, Ashley},
  journal={Proceedings of the Royal Society A: Mathematical, Physical and Engineering Sciences},
  volume={471},
  number={2181},
  pages={20150301},
  year={2015},
  publisher={The Royal Society}
}

@article{beals2001quantum,
  title={Quantum lower bounds by polynomials},
  author={Beals, Robert and Buhrman, Harry and Cleve, Richard and Mosca, Michele and De Wolf, Ronald},
  journal={Journal of the ACM (JACM)},
  volume={48},
  number={4},
  pages={778--797},
  year={2001},
  publisher={ACM New York, NY, USA}
}

@article{ganzburg2012remez,
  title={On a Remez-type inequality for trigonometric polynomials},
  author={Ganzburg, Michael I},
  journal={Journal of Approximation Theory},
  volume={164},
  number={9},
  pages={1233--1237},
  year={2012},
  publisher={Elsevier}
}

@article{mande2026tight,
  title = {Tight {B}ounds for {Q}uantum {P}hase {E}stimation and {R}elated {P}roblems},
  author = {Mande, Nikhil S. and de Wolf, Ronald},
  journal = {{Quantum}},
  issn = {2521-327X},
  publisher = {{Verein zur F{\"{o}}rderung des Open Access Publizierens in den Quantenwissenschaften}},
  volume = {10},
  pages = {2140},
  month = jun,
  year = {2026},
}

@inproceedings{cornelissen2023sublinear,
  title={A sublinear-time quantum algorithm for approximating partition functions},
  author={Cornelissen, Arjan and Hamoudi, Yassine},
  booktitle={Proceedings of the 2023 annual ACM-Siam symposium on discrete algorithms (SODA)},
  pages={1245--1264},
  year={2023},
  organization={SIAM}
}

@article{wu2023quantum,
  title={Quantum heavy-tailed bandits},
  author={Wu, Yulian and Guan, Chaowen and Aggarwal, Vaneet and Wang, Di},
  journal={arXiv preprint arXiv:2301.09680},
  year={2023}
}

@article{dai2023quantum,
  title={Quantum bayesian optimization},
  author={Dai, Zhongxiang and Lau, Gregory Kang Ruey and Verma, Arun and Shu, Yao and Low, Bryan Kian Hsiang and Jaillet, Patrick},
  journal={Advances in neural information processing systems},
  volume={36},
  pages={20179--20207},
  year={2023}
}

@inproceedings{hikima2024quantum,
  title={Quantum kernelized bandits},
  author={Hikima, Yasunari and Murao, Kazunori and Takemori, Sho and Umeda, Yuhei},
  booktitle={The 40th Conference on Uncertainty in Artificial Intelligence},
  year={2024}
}

@inproceedings{yi2026quantum,
  title={Quantum lipschitz bandits},
  author={Yi, Bongsoo and Kang, Yue and Li, Yao},
  booktitle={Proceedings of the AAAI Conference on Artificial Intelligence},
  volume={40},
  number={33},
  pages={27844--27851},
  year={2026}
}

@article{allen2021near,
  title={Near-optimal discrete optimization for experimental design: A regret minimization approach},
  author={Allen-Zhu, Zeyuan and Li, Yuanzhi and Singh, Aarti and Wang, Yining},
  journal={Mathematical Programming},
  volume={186},
  number={1},
  pages={439--478},
  year={2021},
  publisher={Springer}
}

@article{acin2001statistical,
  title={Statistical distinguishability between unitary operations},
  author={Acin, Antonio},
  journal={Physical review letters},
  volume={87},
  number={17},
  pages={177901},
  year={2001},
  publisher={APS}
}

@article{duan2007entanglement,
  title={Entanglement is not necessary for perfect discrimination between unitary operations},
  author={Duan, Runyao and Feng, Yuan and Ying, Mingsheng},
  journal={Physical review letters},
  volume={98},
  number={10},
  pages={100503},
  year={2007},
  publisher={APS}
}

@book{nielsen2010quantum,
  title={Quantum computation and quantum information},
  author={Nielsen, Michael A and Chuang, Isaac L},
  year={2010},
  publisher={Cambridge university press}
}

@book{borwein1995polynomials,
  title={Polynomials and Polynomial Inequalities},
  author={Borwein, Peter and Erdelyi, Tamas},
  volume={161},
  year={1995},
  publisher={Springer Science \& Business Media}
}

@book{lattimore2020bandit,
  title={Bandit algorithms},
  author={Lattimore, Tor and Szepesv{\'a}ri, Csaba},
  year={2020},
  publisher={Cambridge University Press}
}

@inproceedings{lattimore2020learning,
  title={Learning with good feature representations in bandits and in rl with a generative model},
  author={Lattimore, Tor and Szepesvari, Csaba and Weisz, Gellert},
  booktitle={International conference on machine learning},
  pages={5662--5670},
  year={2020},
  organization={PMLR}
}

@article{auer2002nonstochastic,
  title={The nonstochastic multiarmed bandit problem},
  author={Auer, Peter and Cesa-Bianchi, Nicolo and Freund, Yoav and Schapire, Robert E},
  journal={SIAM journal on computing},
  volume={32},
  number={1},
  pages={48--77},
  year={2002},
  publisher={SIAM}
}

@inproceedings{audibert2009minimax,
  title={Minimax policies for adversarial and stochastic bandits},
  author={Audibert, Jean-Yves and Bubeck, S{\'e}bastien},
  booktitle={Colt},
  pages={217--226},
  year={2009}
}

@article{abbasi2011improved,
  title={Improved algorithms for linear stochastic bandits},
  author={Abbasi-Yadkori, Yasin and P{\'a}l, D{\'a}vid and Szepesv{\'a}ri, Csaba},
  journal={Advances in neural information processing systems},
  volume={24},
  year={2011}
}

@inproceedings{dani2008stochastic,
  title={Stochastic linear optimization under bandit feedback},
  author={Dani, Varsha and Hayes, Thomas P and Kakade, Sham M},
  booktitle={21st Annual Conference on Learning Theory},
  number={101},
  pages={355--366},
  year={2008}
}

@inproceedings{li2019nearly,
  title={Nearly minimax-optimal regret for linearly parameterized bandits},
  author={Li, Yingkai and Wang, Yining and Zhou, Yuan},
  booktitle={Conference on Learning Theory},
  pages={2173--2174},
  year={2019},
  organization={PMLR}
}

@article{lai1985asymptotically,
  title={Asymptotically efficient adaptive allocation rules},
  author={Lai, Tze Leung and Robbins, Herbert},
  journal={Advances in applied mathematics},
  volume={6},
  number={1},
  pages={4--22},
  year={1985},
  publisher={Academic Press, Inc. Orlando, FL, USA}
}

@article{robbins1952some,
  title = {Some aspects of the sequential design of experiments},
  volume = {58},
  ISSN = {0273-0979},
  number = {5},
  journal = {Bulletin of the American Mathematical Society},
  publisher = {American Mathematical Society (AMS)},
  author = {Robbins,  Herbert},
  year = {1952},
  pages = {527–535}
}

@article{buchholz2025multi,
  title={Multi-Armed Bandits and Quantum Channel Oracles},
  author={Buchholz, Simon and K{\"u}bler, Jonas M and Sch{\"o}lkopf, Bernhard},
  journal={Quantum},
  volume={9},
  pages={1672},
  year={2025},
  publisher={Verein zur F{\"o}rderung des Open Access Publizierens in den Quantenwissenschaften}
}

@article{su2025quantum,
  title={Quantum Algorithms for Bandits with Knapsacks with Improved Regret and Time Complexities},
  author={Su, Yuexin and Yang, Ziyi and Huang, Peiyuan and Li, Tongyang and Ye, Yinyu},
  journal={arXiv preprint arXiv:2507.04438},
  year={2025}
}

@article{lumbreras2022multi,
  title={Multi-armed quantum bandits: Exploration versus exploitation when learning properties of quantum states},
  author={Lumbreras, Josep and Haapasalo, Erkka and Tomamichel, Marco},
  journal={Quantum},
  volume={6},
  pages={749},
  year={2022},
  publisher={Verein zur F{\"o}rderung des Open Access Publizierens in den Quantenwissenschaften}
}

@article{casale2020quantum,
  title={Quantum bandits},
  author={Casal{\'e}, Balthazar and Di Molfetta, Giuseppe and Kadri, Hachem and Ralaivola, Liva},
  journal={Quantum Machine Intelligence},
  volume={2},
  number={1},
  pages={11},
  year={2020},
  publisher={Springer}
}

@inproceedings{lumbreras2024linear,
  title={Linear bandits with polylogarithmic minimax regret},
  author={Lumbreras, Josep and Tomamichel, Marco},
  booktitle={The Thirty Seventh Annual Conference on Learning Theory},
  pages={3644--3682},
  year={2024},
  organization={PMLR}
}

@inproceedings{chu2011contextual,
  title={Contextual bandits with linear payoff functions},
  author={Chu, Wei and Li, Lihong and Reyzin, Lev and Schapire, Robert},
  booktitle={Proceedings of the fourteenth international conference on artificial intelligence and statistics},
  pages={208--214},
  year={2011},
  organization={JMLR Workshop and Conference Proceedings}
}

@article{soare2014best,
  title={Best-arm identification in linear bandits},
  author={Soare, Marta and Lazaric, Alessandro and Munos, R{\'e}mi},
  journal={Advances in neural information processing systems},
  volume={27},
  year={2014}
}

@article{fiez2019sequential,
  title={Sequential experimental design for transductive linear bandits},
  author={Fiez, Tanner and Jain, Lalit and Jamieson, Kevin G and Ratliff, Lillian},
  journal={Advances in neural information processing systems},
  volume={32},
  year={2019}
}

@article{lumbreras2026learning,
  title={Learning pure quantum states almost without regret},
  author={Lumbreras, Josep and Terekhov, Mikhail and Tomamichel, Marco},
  journal={npj Quantum Information},
  year={2026},
  publisher={Nature Publishing Group UK London}
}
\bibliographystyle{abbrvnat}

\end{document}